\newtheorem{theorem}{Theorem}
\newtheorem{lemma}{Lemma}
\newtheorem{remark}{Remark}[theorem]
\newtheorem*{proof}{Proof.}
\newtheorem{assumption}{Assumption}
\title{A-FedPD: Aligning Dual-Drift is All Federated Primal-Dual Learning Needs}
\author{%
  Yan Sun\\
  The University of Sydney\\
  \texttt{ysun9899@uni.sydney.edu.au} \\
  \And
  Li Shen\thanks{Li Shen is the corresponding author.}\\
  Shenzhen Campus of Sun Yat-sen University \\
  \texttt{mathshenli@gmail.com} \\
  \And
  Dacheng Tao\thanks{Dr Tao’s research is partially supported by NTU RSR and Start Up Grants.} \\
  Nanyang Technological University \\
  \texttt{dacheng.tao@ntu.edu.sg}
}
\begin{document}
\maketitle

\begin{abstract}
As a popular paradigm for juggling data privacy and collaborative training, federated learning~(FL) is flourishing to distributively process the large scale of heterogeneous datasets on edged clients. Due to bandwidth limitations and security considerations, it ingeniously splits the original problem into multiple subproblems to be solved in parallel, which empowers \textit{primal dual} solutions to great application values in FL. In this paper, we review the recent development of classical \textit{federated primal dual} methods and point out a serious common defect of such methods in non-convex scenarios, which we say is a ``dual drift'' caused by dual hysteresis of those longstanding inactive clients under partial participation training. To further address this problem, we propose a novel \textit{\textbf{A}ligned \textbf{Fed}erated \textbf{P}rimal \textbf{D}ual}~(\textit{\textbf{A-FedPD}}) method, which constructs virtual dual updates to align global consensus and local dual variables for those protracted unparticipated local clients. Meanwhile, we provide a comprehensive analysis of the optimization and generalization efficiency for the \textit{A-FedPD} method on smooth non-convex objectives, which confirms its high efficiency and practicality. Extensive experiments are conducted on several classical FL setups to validate the effectiveness of our proposed method. 
\end{abstract}
\section{Introduction}
\label{introduction}

Since \citet{mcmahan2017communication} propose the \textit{federated average} paradigm, FL has gradually become a promising approach to handle both data privacy and efficient training on the large scale of edged clients, which employs a global server to coordinate local clients jointly train one model. Due to privacy protection, it disables the direct information interaction across clients. All clients must only communicate with an accredited global server. This paradigm creates an unavoidable issue, that is, bandwidth congestion caused by mass communication. Therefore, FL advocates training models on local clients as much as possible within the maximum bandwidth utilization range and only communicates with a part of clients per communication round in a partial participation manner. Under this particular training mechanism, FL needs to effectively split the original problem into several subproblems for local clients to solve in parallel. Because of this harsh limitation, general algorithms are often less efficient in practice. But the \textit{primal dual} methods just match this training pattern, which empowers it with huge application potential and great values in FL.

\textit{Primal dual} methods, which are specified as \textit{Lagrangian primal dual} in this paper, solve the problem by penalizing and relaxing the constraints to the original objective via non-negative Lagrange multipliers, which make great progress in convex optimization. It benefits from the consideration of splitting a large problem into several small simple problems to solve, which has been widely developed and applied in the distributed framework as a global consensus problem. This solution is also well suited to the FL scenarios for its effective split characteristic. Recent studies revealed the application potential of such methods. Since \citet{tran2021feddr} propose the \textit{Randomized Douglas-Rachford Splitting} in FL, which unblocks the study of this important branch. With further exploration of \citet{zhang2021fedpd,acar2021federated,zhang2021connection}, \textit{federated primal dual} methods are proven to achieve the fast $\mathcal{O}(1/T)$ convergence rate. Then it is expanded to the more complicated scenarios and incorporated with several novel techniques to achieve state-of-the-art~(SOTA) performance in the FL community, which further confirms the great contributions.

\begin{wrapfigure}[16]{r}{0.5\textwidth}
\centering
\vspace{-0.6cm}
    \subfigure[\small Train loss.]{
        \includegraphics[width=0.24\textwidth]{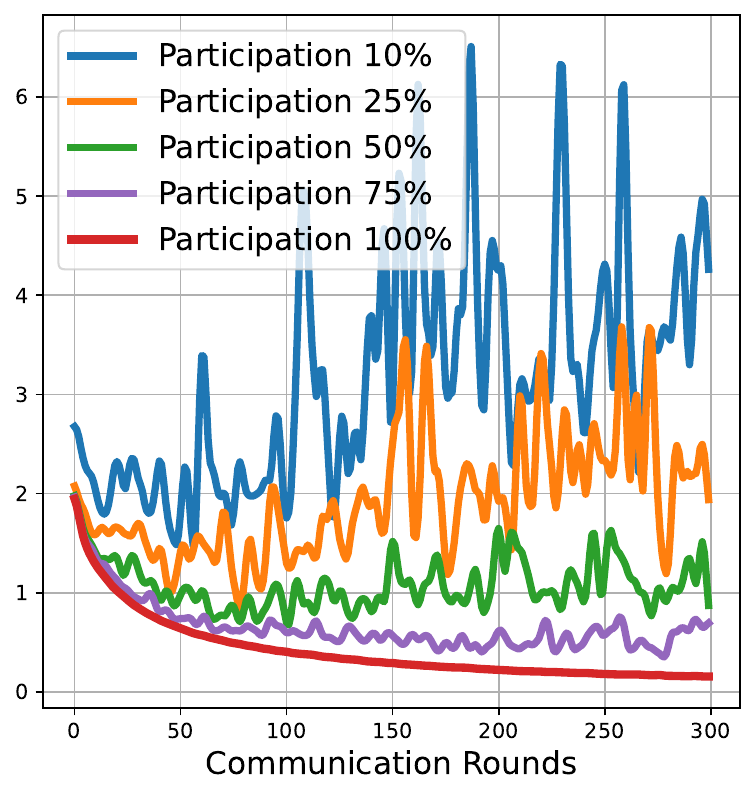}}\!\!\!
    \subfigure[\small Test accuracy.]{
	\includegraphics[width=0.245\textwidth]{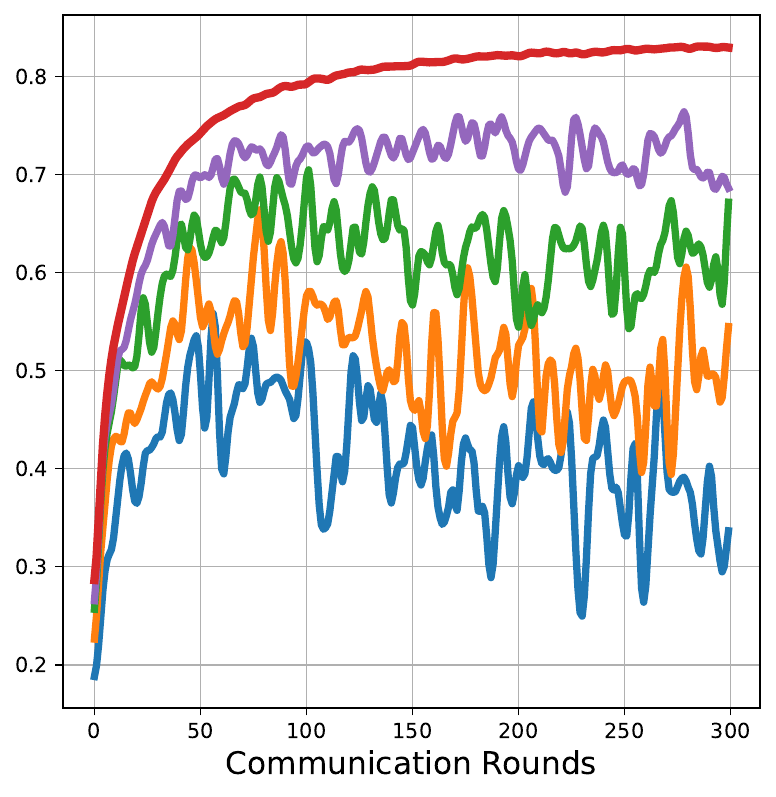}}
 \vskip -0.04in
\caption{``\textit{Dual drift}'' issue of the \textit{federated primal dual} method under different participation ratios. When the participation ratio is low, \textit{dual drift} introduces a very large variance, yielding divergence.}
\label{dual drift}
 \vskip -0.1in
\end{wrapfigure}

However, as studies go further, a series of problems of \textit{federated primal dual} methods in the experiments are also exposed. Sensitivity to hyperparameters and fluctuations affected by the large scale makes it extremely unstable in practice, especially in the partial participation manner which is one of the most important concerns in FL. Specifically, \textit{primal dual} methods successfully solve the problems by alternately updating each primal variable and each dual variable. When it is grafted onto the partial participation training in FL, most clients will remain inactive for a long time, which means most of the dual variables will be stagnant and very outdated in the training. As the training process continues, when one long-term inactive client is reactivated to participate in training, the solving process of its local subproblem will become extremely unstable due to excessive differences between the primal and dual variables, and sometimes even fail. We call this a ``\textit{dual drift}'' problem, which is also one of the most formidable challenges in practice in FL. In Fig.\ref{dual drift}, we clearly show how the ``\textit{dual drift}'' deteriorates as the participation ratio decreases.

To efficiently expand the \textit{primal dual} methods to partial participation scenarios while enhancing the training stability in practice, and further alleviate the ``\textit{dual drift}'' problem, we propose a novel algorithm, named \textit{\textbf{A}ligned \textbf{Fed}erated \textbf{P}rimal \textbf{D}ual}~(\textit{\textbf{A-FedPD}}), which constructs the virtual dual updates for those unparticipated clients per communication round to align with primal variables. Concretely, after each communication round, we first aggregate the local solutions received from active clients as the unbiased approximation of the local solution of those unparticipated clients. Then we provide a virtual update on the dual variables to align with the primal variable in the training. Updating errors for dual variables will be diminished as global consensus is achieved. The proposed \textit{A-FedPD} method enables unparticipated clients to keep up-to-date, which approximates the quasi-full-participation training, which can efficiently alleviate the ``\textit{dual drift}'' in practice.

Furthermore, we provide a comprehensive analysis of the optimization and generalization efficiency of the proposed \textit{A-FedPD} method, which also could be easily extended to the whole \textit{federated primal dual family}. On smooth non-convex objectives, compared with the vanilla \textit{FedAvg} method, the \textit{A-FedPD} could achieve a fast $\mathcal{O}(1/T)$ convergence rate which maintains consistent with SOTA \textit{federated primal dual} methods. Moreover, it could support longer local training without affecting stability. Under the same training costs, the \textit{A-FedPD} method achieves less generalization error. We conduct extensive experiments to validate its efficiency across several general federated setups. We also test a simple variant to show its good scalability incorporated with other novel techniques in FL.

We summarize our major contributions as follows:
\begin{itemize}
    \item We review the development of \textit{federated primal dual family} and point out one of its most formidable challenges in the practical application in FL, which is summarized as the ``\textit{dual drift}'' problem in this paper.
    \item We propose a novel \textit{A-FedPD} method to alleviate the ``\textit{dual drift}'', which constructs the virtual update for dual variables of those unparticipated clients to align with primal variables.
    \item We provide a comprehensive analysis of the optimization and generalization efficiency of \textit{A-FedPD}. It could achieve a fast convergence rate and a lower generalization error bound than the vanilla \textit{FedAvg} method.
    \item Extensive experiments are conducted to validate the performance of the \textit{A-FedPD} method. Furthermore, we test a simple variant of it to show its good scalability.
\end{itemize}

\section{Related Work}
\label{related work}

\textit{Federated primal average.} Since \citet{mcmahan2017communication} propose the \textit{FedAvg} paradigm, a lot of \textit{primal average}-based methods are learned to enhance its performance. Most of them target strengthening global consistency and alleviating the ``client drift'' problem~\citep{karimireddy2020scaffold}. \citet{li2020federated} propose to adopt proxy terms to control local updates. \citet{karimireddy2020scaffold} propose the variance reduction version to handle the biases in the \textit{primal average}. Similar implementations include \citep{dieuleveut2021federated,jhunjhunwala2022fedvarp}. Moreover, momentum-based methods are also popular for correcting local biases. \citet{ozfatura2021fedadc,xu2021fedcm} propose to adopt the global consistency controller in the local training to force a high consensus. \citet{remedios2020federated} expand the local momentum to achieve higher accuracy in the training. Similarly, \citet{slowmo,kim2022communication} incorporate the global momentum which could further improve its performance. \citet{wang2020tackling} tackle the local inconsistency and utilize the \textit{weighted primal average} to balance different clients with different computing power. Based on this, \citet{horvath2022fedshuffle} further select the important clients set to balance the training trends under different heterogeneous datasets. \citet{liu2023enhance} summarize the inertial momentum implementation which could achieve a more stable result. \citet{qu2022generalized} utilize the \textit{Sharpeness Aware Minimization}~(\textit{SAM})~\citep{foret2020sharpness} to make the loss landscape smooth with higher generalization performance. Then \citet{caldarola2022improving,caldarola2023window} propose to improve its stability via \textit{Adaptive SAM} and window-based model averaging. In summary, \textit{Federated primal average} methods focus on alleviating the local inconsistency caused by ``client drift''~\citep{inconsistency2,inconsistency3,inconsistency1}. However, as analyzed by \citet{acar2021federated}, the \textit{federated primal dual} methods will regularize the local objective gradually close to global consensus by dynamically adjusting the dual variables. This allows ``client drift'' to be effectively translated into a dual consistency problem on cross-silo devices.

\textit{Convex optimization of federated primal dual.} The primal-dual method was originally proposed to solve convex optimization problems and achieved high theoretical performance. In the FL setups, this method has also made significant advancements. \citet{grudzien2023can} compute inexactly a proximity operator to work as a variant of primal dual methods. Another technique is loopless instead of an inner loop of local steps. \citet{mishchenko2022proxskip} propose the \textit{Scaffnew} method to achieve the higher optimization efficiency, which is interpreted as a variant of primal dual approach~\citep{condat2022randprox}. These techniques can also be easily combined with existing efficient communication methods, e.g. compression and quantization~\citep{grudzien2023improving,condat2023tamuna}.

\textit{Non-convex optimization of federated primal dual.} Since \citet{tran2021feddr} adopt the \textit{Randomized Douglas-Rachford Splitting}, which unblocks the study of the important branch of utilizing \textit{primal dual} methods in FL~\citep{pathak2020fedsplit}. With further exploration of \citet{zhang2021fedpd}, \textit{federated primal dual} methods are proven to achieve the fast convergence rate. \citet{yuan2021federated} learn the composite optimization via a \textit{primal dual} method in FL. Meanwhile, \citet{sarcheshmehpour2021federated} empower its potential on the undirected empirical graph. \citet{shen2021agnostic} also study an agnostic approach under class imbalance targets. Then, \citet{gong2022fedadmm,wang2022fedadmm} expand its theoretical analysis to the partial participation scenarios with global regularization. Then it is further improved by adopting the global dual variable in FL \citep{acar2021federated} and pFL \citep{DBLP:conf/icml/AcarZZNMWS21}. Moreover, \citet{zhou2023federated} learn the effect of subproblem precision on training efficiency. \citet{sun2023fedspeed,sun2023dynamic} incorporate it with the \textit{SAM} to achieve a higher generalization efficiency. \citet{niu2023fedhybrid} propose hybrid \textit{primal dual} updates with both first-order and second-order optimization. \citet{wang2023beyond} propose a variance reduction variant to further improve the training efficiency. \citet{li2023dfedadmm} expand it to a decentralized approach, which could achieve comparable performance in the centralized. \citet{tyou2023localized} propose a localized \textit{primal dual} approach for FL training. \citet{li2023privacy} further explore its efficiency on the specific non-convex objectives with non-smooth regularization. Current researches reveal the great application value of the \textit{primal dual} methods in FL. However, most of them still face the serious ``\textit{dual drift}'' problem at low participation ratios. 
\section{Methodology}
\label{methodology}

\begin{wraptable}[10]{r}{0.6\linewidth}
\centering
\vspace{-0.85cm}
\caption{Notations adopted in this paper.}
\label{tb:notations}
\begin{tabular}{cc}
\toprule
 Symbol & Notations \\ 
\midrule
 $\mathcal{C}$\ /\ $C$ & client set\ /\ size of client set \\
 $\mathcal{P}$\ /\ $P$ & active client set\ /\ size of active client set \\
 $\mathcal{S}_i$\ /\ $S$ & local dataset\ /\ size of local dataset \\
 $\theta$\ /\ $\theta_i$ & global parameters\ /\ local parameters \\
 $\lambda$\ /\ $\lambda_i$ & global dual parameters\ /\ local dual parameters \\
 $T$\ /\ $t$ & training round\ /\ index of training round\\
 $K$\ /\ $k$ & local interval\ /\ index of local interval\\
 $\rho$ & proxy coefficient \\
\bottomrule
\end{tabular}
\end{wraptable}

We first review the \textit{primal dual} methods in FL and demonstrate the ``\textit{dual drift}'' issue. Then, we demonstrate the \textit{A-FedPD} approach to eliminate the ``\textit{dual drift}'' challenge.
Notations are stated in Table~\ref{tb:notations}. Other symbols are defined when they are first introduced. 
We denote $\mathbb{R}$ as the real set and $\mathbb{E}$ as the expectation in the corresponding probability space. Other notations are defined as first stated.

\subsection{Preliminaries}
\textbf{Setups.} In the general and classical federated frameworks, we usually consider the general objective as a finite-sum minimization problem $F(\theta): \mathbb{R}^d\rightarrow \mathbb{R}$,
\begin{equation}
    \small
    \min_\theta F(\theta) = \frac{1}{C}\sum_{i\in\mathcal{C}}F_i(\theta), \quad F_i(\theta)\triangleq \mathbb{E}_{\xi\sim\mathcal{D}_i}f_i(\theta,\xi).
\end{equation}
In each client, there exists a local private data set $\mathcal{S}_i$ which is considered a uniform sampling set of the distribution $\mathcal{D}_i$. In FL setups, $\mathcal{D}_i$ is unknown to others for the privacy protection mechanism. Therefore, we usually consider the local \textit{Empirical Risk Minimization~(ERM)} as:
\begin{equation}
\label{global_objective}
    \small
    \min_\theta f(\theta) = \frac{1}{C}\sum_{i\in\mathcal{C}}f_i(\theta), \quad f_i(\theta)\triangleq \frac{1}{S}\sum_{\xi\in\mathcal{S}_i}f_i(\theta,\xi).
\end{equation}
Our desired optimal solution is $\theta^\star=\arg\min F(\theta)$. However, we can only approximate it on the limited dataset as $\theta_{\mathcal{S}}^\star=\arg\min f(\theta)$, which spontaneously introduces the unavoidable biases on its generalization performance. This is one of the main concerns in the field of the current machine learning community. Motivated by this imminent challenge, we conduct a comprehensive study on the performance of \textit{primal dual}-based algorithms in FL and further propose an improvement to enhance its generalization efficiency and stability performance.

\subsection{\textit{Primal Dual}-family in FL}
\textit{Primal Dual} methods optimize the global objective by decomposing it into several subproblems and iteratively updating local variables incorporated by Lagrangian multipliers~\citep{boyd2011distributed}, which gives it a unique position in solving FL problems. Due to local data privacy, we have to split the global task into several local tasks for optimization on their private dataset. This similarity also provides an adequate foundation for their applications in the FL community. A lot of studies extend it to the general FL framework and achieve considerable success. 

We follow studies~\citep{zhang2021fedpd,acar2021federated,wang2022fedadmm,gong2022fedadmm,sun2023fedspeed,zhou2023federated,sun2023dynamic,fan2023federated,zhang2024domain} to summarize the original objective Eq.(\ref{global_objective}) as the global consensus reformulation:
\begin{equation}
\label{consensus_objective}
\small
    \min_{\theta,\theta_i} \frac{1}{C}\sum_{i\in\mathcal{C}}f_i(\theta_i), \quad \text{s.t.} \quad \theta_i=\theta, \quad \forall i\in\mathcal{C}.
\end{equation}
By relaxing equality constraints $\theta_i=\theta$, Eq.(\ref{consensus_objective}) is separable across different local clients. \citet{wang2022fedadmm} demonstrate the difference between the solution on the primal problem and dual problem in detail and confirm the equivalence of these two cases in FL. By penalizing the constraint on the local objective $f_i$, we can define the augmented Lagrangian function associated with Eq.(\ref{consensus_objective}) as:
\begin{equation}
\label{lagrangian}
\small
    \mathcal{L} (\theta_i,\theta,\lambda_i) = \frac{1}{C}\sum_{i\in\mathcal{C}}\left[f_i(\theta_i)\!+\!\langle\lambda_i, \theta_i\!-\!\theta\rangle \!+\! \frac{\rho}{2}\Vert\theta_i\!-\!\theta\Vert^2\right],
\end{equation}
where $\rho$ denotes the penalty coefficient. To train the global model, each local client should first minimize the local augmented Lagrangian function and solve for local parameters. Based on updated local parameters, we then update the dual variable to align the Lagrangian function with the consensus constraints. Finally, we minimize the augmented Lagrangian function and solve for the consensus. Objective Eq.(\ref{consensus_objective}) could be solved after multiple alternating updates as:
\begin{eqnarray}
\label{FedPD}
\small
\begin{cases}
\ \theta_i^{t+1} & = \quad \arg\min_{\theta_i} \ \mathcal{L} (\theta_i^t,\theta^t,\lambda_i^t) \quad i\in\mathcal{C}, \\
\ \lambda_i^{t+1} & =\quad \lambda_i^{t} + \rho(\theta_i^{t+1} - \theta^{t}), \\
\ \theta^{t+1} & =\quad \frac{1}{C}\sum_{i\in\mathcal{C}}(\theta_i^{t+1} + \frac{1}{\rho}\lambda_i^{t+1}).\\
\end{cases}
\end{eqnarray}
We then review the classical \textit{federated primal dual} methods.

\textbf{\textit{FedPD}}. \citet{zhang2021fedpd} propose a general federated framework from the primal-dual optimization perspective which can be directly summarized as Eq.(\ref{FedPD}). As an underlying method in the federated \textit{primal dual}-family, it requires all local clients to participate in the training per round, which also significantly reduces communication efficiency.

\textbf{\textit{FedADMM}}. \citet{wang2022fedadmm} extend the theory of the primal-dual optimization in FL and prove the equivalence between \textit{FedDR}~\citep{tran2021feddr} and \textit{FedADMM}. Furthermore, it considers the complete format of composite objective $f(\theta_i) + g(\theta)$. To optimize the composite objective, after the iterations of Eq.(\ref{FedPD}), it additionally solves the proximal step on the function $g(\theta)$:
\begin{eqnarray}
\label{FedADMM}
\small
\begin{cases}
\ \theta_i^{t+1} & = \quad \arg\min_{\theta_i} \ \mathcal{L} (\theta_i^t,\theta^t,\lambda_i^t) + g(\theta^t) \quad i\in\mathcal{P}^t, \\
\ \lambda_i^{t+1} & =\quad \lambda_i^{t} + \rho(\theta_i^{t+1} - \theta^t), \\
\ \overline{\theta}^{t+1} & =\quad \frac{1}{P}\sum_{i\in\mathcal{P}^t}(\theta_i^{t+1} + \frac{1}{\rho}\lambda_i^{t+1}), \\
\ \theta^{t+1} &=\quad \arg\min g(\theta^t) + \frac{1}{2\rho}\Vert \theta^t -\overline{\theta}^{t+1} \Vert^2.\\
\end{cases}
\end{eqnarray}
\textit{FedADMM} introduces a more general update with the regularization term $g(\theta)$ and supports the partial participation training mechanism, which also brings a great application value of primal-dual methods to the FL community. When $g(\cdot)\equiv 0$, it degrades to the partial \textit{FedPD} by $\theta^{t+1} = \overline{\theta}^{t+1}$. When $\mathcal{P}^t\neq\mathcal{C}$, ``\textit{dual drift}'' brings great distress for training.

\textbf{\textit{FedDyn}}. \citet{acar2021federated} utilize the insight of primal-dual optimization to introduce a dynamic regularization term to solve the local augmented Lagrangian function, which is actually the dual variable in \textit{FedADMM}. Differently, they propose a global dual variable $\lambda^t$ to update global parameters $\theta^t$ instead of only active local dual variables $\lambda_i^t$~($i\in\mathcal{P}^t$):
\begin{eqnarray}
\label{FedDyn}
\small
\begin{cases}
\ \theta_i^{t+1} & = \quad \arg\min_{\theta_i} \ \mathcal{L} (\theta_i^t,\theta^t,\lambda_i^t) \quad i\in\mathcal{P}^t, \\
\ \lambda_i^{t+1} & =\quad \lambda_i^{t} + \rho(\theta_i^{t+1} - \theta^t), \\
\ \lambda^{t+1} &=\quad \lambda^{t} + \rho\frac{1}{C}\sum_{i\in\mathcal{P}^t}(\theta_i^{t+1} - \theta^t), \\
\ \theta^{t+1} &=\quad \frac{1}{P}\sum_{i\in\mathcal{P}^t}\theta_i^{t+1} + \frac{1}{\rho}\lambda^t.\\
\end{cases}
\end{eqnarray}
Compared with \textit{FedADMM}, although the global dual variable further corrects the primal parameters, it still hinders the training efficiency, which must rely on the anachronistic historical directions of the local dual variables. Moreover, the global dual variable always updates slowly, which results in consensus constraints that are more difficult to satisfy when solving local subproblems.

\textbf{\textit{Dual drift}}. \citet{kang2024fedand} have indicated that the update mismatch between primal and dual variables leads to a ''drift''. Here, we provide a detailed analysis of the key differences caused by this mismatch. When adopting partial participation, each client is activated at a very low probability, especially on a large scale of edged devices, which widely leads to very high hysteresis between global parameters $\theta$ and local dual variable $\lambda_i$. For instance, at round $t$, we select a subset $\mathcal{P}^t$ to participate in current training and then update the global parameters by $\theta^{t+1} = \arg\min_{\theta} \ \mathcal{L} (\theta_i^{t+1},\theta^t,\lambda_i^t)$ for $i\in\mathcal{P}^t$. Then at round $t+1$, when a client $i\notin \{\mathcal{P}^\tau\}_{\tau=t_0+1}^{t}$~$(t_0\ll t)$ that has not been involved in training for a long time is activated, its local dual variable $\lambda_i^{t_0}$ may severely mismatch the current global parameters $\theta^t$. This triggers that the local subproblem $\mathcal{L} (\theta_i^{t+1},\theta^{t+1},\lambda_i^{t_0})$ fail to be optimized properly and even become completely distorted in extreme scenarios, yielding a ``\textit{dual drift}'' issue. 

\subsection{A-FedPD Method}

As introduced in the last part, ``\textit{dual drift}'' problem usually results in the unstable optimization of each local subproblem under partial participation.
To further mitigate the negative effects of \textit{dual drift} problems and improve the training efficiency, we propose a novel \textit{A-FedPD} method (see Algorithm \ref{algorithm:dual_update}), which aligns the virtual dual variables of unparticipated clients via global average models.

Specifically, we solve dual variables for unified management and distribution. At round $t$, we select an active client set $\mathcal{P}^t$ and send the corresponding variables to each active client. Then local client solves the subproblem with $K$ stochastic gradient descent steps and sends the last state $\theta_{i,K}^t$ back to the global server. On the global server, it first aggregates the updated parameters as $\overline{\theta}^{t+1}$. Then it performs the updates of the dual variables. For each active client $i\in\mathcal{P}^t$, it equally updates the local dual variable as vanilla \textit{FedPD}. For the unparticipated clients $i\notin \mathcal{P}^t$, they update the virtual dual variable with the aggregated parameters $\overline{\theta}^{t+1}$. Finally, we can update the global model with the aggregated parameters and averaged dual variables. Since each client virtually updates, we can directly use the global average as the output. Repeat this training process for a total of $T$ communication rounds to output the final global average model. 

\begin{algorithm}[t]
\small
	\renewcommand{\algorithmicrequire}{\textbf{Input:}}
	\renewcommand{\algorithmicensure}{\textbf{Output:}}
	\caption{A-FedPD Algorithm}
    \begin{multicols}{2}
	\begin{algorithmic}[1]
		\REQUIRE $\theta^0$, $\theta_i^0$, $T$, $K$, $\lambda_i^0$, $\rho$
		\ENSURE global average model\\
            \STATE \textbf{Initialization} : $\theta_i^0=\theta^0$, $\lambda_i^0=0$.
            \FOR{$t = 0, 1, 2, \cdots, T-1$}
            \STATE randomly select active clients set $\mathcal{P}^t$ from $\mathcal{C}$
            \FOR{client $i \in \mathcal{P}^t$ \textbf{in parallel}}
            \STATE receive $\lambda_i^t, \theta^t$ from the global server
            \STATE $\theta_i^{t+1} = \textit{LocalTrain}(\lambda_i^t, \theta^t, \eta^t, K)$
            \STATE send $\theta_i^{t+1}$ to the global server 
            \ENDFOR
            \STATE $\overline{\theta}^{t+1} = \frac{1}{P}\sum_{i\in\mathcal{P}^t}\theta_i^{t+1}$
            \STATE $\lambda_i^{t+1}=\textit{D-Update}(\lambda_i^t, \theta^t, \theta_i^{t+1}, \overline{\theta}^{t+1}, \mathcal{P}^t)$
            \STATE $\overline{\lambda}^{t+1} = \frac{1}{C}\sum_{i\in\mathcal{C}}\lambda_i^{t+1}$
            \STATE $\theta^{t+1} = \overline{\theta}^{t+1} + \frac{1}{\rho} \overline{\lambda}^{t+1}$
            \ENDFOR
            \STATE return global average model
	\end{algorithmic}
    \label{algorithm}
    \textit{$\diamondsuit$ LocalTrain}: (Optimize Eq.(\ref{lagrangian}))
    \begin{algorithmic}[1]
	\REQUIRE $\lambda_i^t$, $\theta^t$, $\eta^t$, $K$
	\ENSURE $\theta_{i,K}^t$
        \FOR{$k = 0, 1, 2, \cdots, K-1$}
        \STATE calculate the stochastic gradient $g_{i,k}^t$
        \STATE $\theta_{i,k+1}^t = \theta_{i,k}^t - \eta^t(g_{i,k}^t + \lambda_i^t + \rho(\theta_{i,k}^t - \theta^t))$
        \ENDFOR
	\end{algorithmic}
	\label{algorithm:local_update}

    \vspace{0.4cm}
    \textit{$\diamondsuit$ D-Update}:~(update dual variables)
    \begin{algorithmic}[1]
    \REQUIRE $\lambda_i^t, \theta^t, \theta_i^{t+1}, \overline{\theta}^{t+1}, \mathcal{P}^t$
	\ENSURE $\lambda_i^{t+1}$
    \IF{$i\in\mathcal{P}^t$}
    \STATE $\lambda_i^{t+1} = \lambda_i^{t} + \rho_t(\theta_i^{t+1} - \theta^t)$
    \ELSE 
    \STATE $\lambda_i^{t+1} = \lambda_i^{t} + \rho_t(\overline{\theta}^{t+1} - \theta^t)$
    \ENDIF
    \end{algorithmic}
\label{algorithm:dual_update}
\end{multicols}
\vspace{-0.3cm}
\end{algorithm}  

Because of the central storage and management of the dual variables on the global server, it significantly reduces storage requirements for lightweight-edged devices, i.e., mobile phones. For the unparticipated clients, we use their unbiased estimations $\mathbb{E}\left[w_i^{t+1}\vert \ w^{t}\right]=\mathbb{E}_{\mathcal{P}^t}\left[\frac{1}{P}\sum_{i\in\mathcal{P}^t}w_i^{t+1}\vert \ w^{t}\right]$ to construct the virtual dual update, which maintains a continuous update of local dual variables. For the global averaged dual variable $\overline{\lambda}$, we can reformulate its update as $\overline{\lambda}^{t+1}=\overline{\lambda}^t + \rho(\overline{\theta}^{t+1} - \theta^t)$ which also could be approximated as a virtual all participation case. This efficiently alleviates the \textit{dual drift} between $\theta^t$ and $\lambda_i^t$ and also ensures fast iteration of the global dual variable in the training, which constitutes an efficient federated framework. 


\section{Theoretical Analysis}
\label{theoretical analysis}
In this part, we mainly introduce the theoretical analysis of the optimization and generalization efficiency of our proposed \textit{A-FedPD} method. We first introduce the assumptions adopted in our proofs. Optimization analysis is stated in Sec.\ref{Optimization} and generalization analysis is stated in Sec.\ref{Generalization}.

\begin{assumption}[Smoothness]
\label{as:smoothness}
    The local function $f_i(\cdot)$ satisfies the $L$-smoothness property, i.e., $\Vert\nabla f_i(\theta_1) - \nabla f_i(\theta_2)\Vert\leq L\Vert \theta_1 - \theta_2\Vert$.
\end{assumption}

\begin{assumption}[Lipschitz continuity]
\label{as:lipschitz}
    For $\forall \ \theta_1, \theta_2 \in \mathbb{R}^d$, the global function $f(\cdot)$ satisfies the Lipschitz-continuity, i.e., $\Vert f(\theta_1) - f(\theta_2)\Vert\leq G\Vert \theta_1 - \theta_2\Vert$.
\end{assumption}

Optimization analysis only adopts Assumption~\ref{as:smoothness}. Generalization analysis adopts both assumptions that were followed from the previous work on analyzing the stability~\citep{hardt2016train,lei2020fine,zhou2021towards,sun2023understanding,sun2023mode,sun2023efficient}. Moreover, we consider that the minimization of each local Lagrangian problem achieves the $\epsilon$-inexact solution during each local training process, i.e. $\Vert \nabla \mathcal{L}_i \Vert^2 \leq \epsilon$~\citep{zhang2021fedpd,li2023dfedadmm,gong2022fedadmm,wang2022fedadmm}. This consideration is more aligned with the practical scenarios encountered in the empirical studies for non-convex optimization. In fact, it is precisely because the errors from local inexact solutions can be excessively large that the \textit{dual drift} problem is further exacerbated.

\subsection{Optimization}
\label{Optimization}

In this part, we introduce the convergence analysis of the proposed \textit{A-FedPD} method.
\begin{theorem}
    Let non-convex objective $f$ satisfies Assumption~\ref{as:smoothness}, let $\rho$ be selected as a non-zero positive constant, $\{\overline{\theta}^t\}_{t=0}^{T}$ sequence generated by algorithm~\ref{algorithm} satisfies:
    \begin{equation}
    \small
    \frac{1}{T}\sum_{t=1}^{T}\mathbb{E}\Vert\nabla f(\overline{\theta}^{t})\Vert^2 
    \leq \frac{\rho\left[ f(\overline{\theta}^1) - f^\star\right] + R_{0}}{T} + \mathcal{O}\left(\epsilon\right),
    \end{equation}
    where $f^\star$ is the optimum and $R_{0} = \frac{1}{C}\sum_{i\in\mathcal{C}}\mathbb{E}_t\Vert\theta_i^{1} - \theta^{0}\Vert^2$ is the first local training volumes.
\end{theorem}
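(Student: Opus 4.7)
My approach rests on an algebraic identity that recasts A-FedPD as an approximate gradient step on $f$ at $\overline{\theta}^{t}$ with effective step size $1/\rho$. First, I combine the global update $\theta^{t+1}=\overline{\theta}^{t+1}+\tfrac{1}{\rho}\overline{\lambda}^{t+1}$ with the virtual dual recursion $\overline{\lambda}^{t+1}=\overline{\lambda}^{t}+\rho(\overline{\theta}^{t+1}-\theta^{t})$ (which holds deterministically, regardless of sampling, precisely because of the \emph{virtual} update for inactive clients) and subtract the preceding step to deduce, for all $t\ge 1$,
\begin{equation*}
\overline{\lambda}^{t+1}=\rho\bigl(\overline{\theta}^{t+1}-\overline{\theta}^{t}\bigr).
\end{equation*}
This identity lets me apply $L$-smoothness of $f$ to the trajectory $\{\overline{\theta}^{t}\}$ directly:
\begin{equation*}
f(\overline{\theta}^{t+1})\le f(\overline{\theta}^{t})+\tfrac{1}{\rho}\langle\nabla f(\overline{\theta}^{t}),\overline{\lambda}^{t+1}\rangle+\tfrac{L}{2\rho^{2}}\|\overline{\lambda}^{t+1}\|^{2}.
\end{equation*}
Splitting the cross term by the polarization identity $2\langle a,b\rangle=\|a+b\|^{2}-\|a\|^{2}-\|b\|^{2}$ isolates $\|\nabla f(\overline{\theta}^{t})\|^{2}$ and reduces everything to controlling $\|\nabla f(\overline{\theta}^{t})+\overline{\lambda}^{t+1}\|^{2}$.

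Next I connect $\overline{\lambda}^{t+1}$ to $\nabla f$ through the $\epsilon$-inexactness assumption. For every active $i\in\mathcal{P}^{t}$, the active dual update rewrites the stationarity residual as $\nabla f_i(\theta_i^{t+1})+\lambda_i^{t+1}=e_i^{t}$ with $\|e_i^{t}\|^{2}\le\epsilon$; averaging this over the random $\mathcal{P}^{t}$ gives an unbiased surrogate $-\overline{\lambda}^{t+1}\approx\tfrac{1}{C}\sum_{i}\nabla f_i(\theta_i^{t+1,*})$, where $\theta_i^{t+1,*}$ denotes the hypothetical local solution. Applying $L$-smoothness on each summand, $\|\nabla f_i(\theta_i^{t+1,*})-\nabla f_i(\overline{\theta}^{t+1})\|\le L\|\theta_i^{t+1,*}-\overline{\theta}^{t+1}\|$, and bounding the local displacement $\|\theta_i^{t+1,*}-\theta^{t}\|$ through the inexactness residual (roughly $\rho\|\theta_i^{t+1,*}-\theta^{t}\|\lesssim \|\lambda_i^{t}+\nabla f_i(\theta_i^{t+1,*})\|+\sqrt\epsilon$) lets me close a bound of the form $\mathbb{E}\|\nabla f(\overline{\theta}^{t})+\overline{\lambda}^{t+1}\|^{2}\le c_{1}\epsilon+c_{2}\mathbb{E}\|\overline{\lambda}^{t+1}\|^{2}/\rho^{2}$.

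Combining the polarized descent inequality with this estimate and choosing $\rho$ of the appropriate order produces a one-round descent
\begin{equation*}
\mathbb{E}[f(\overline{\theta}^{t+1})]\le\mathbb{E}[f(\overline{\theta}^{t})]-\tfrac{1}{2\rho}\mathbb{E}\|\nabla f(\overline{\theta}^{t})\|^{2}+\mathcal{O}(\epsilon/\rho).
\end{equation*}
Telescoping for $t=1,\dots,T$ and using $f(\overline{\theta}^{T+1})\ge f^{\star}$ yields $\sum_{t=1}^{T}\mathbb{E}\|\nabla f(\overline{\theta}^{t})\|^{2}\le 2\rho[f(\overline{\theta}^{1})-f^{\star}]+\mathcal{O}(T\epsilon)$. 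The $R_{0}$ term enters when I handle the transition $t=0\to 1$ separately: since $\lambda_i^{0}=0$, one gets $\overline{\lambda}^{1}=\rho(\overline{\theta}^{1}-\theta^{0})$, so the first-round dual energy $\mathbb{E}\|\overline{\lambda}^{1}\|^{2}$ is dominated by $\rho^{2}R_{0}$ by Jensen's inequality, contributing the additive $R_{0}/T$ term after dividing by $T$.

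The main obstacle is the coupling argument in the second paragraph. Because inactive clients carry stale $\lambda_i$'s, their contribution to $\overline{\lambda}^{t+1}$ only matches the true gradient in expectation \emph{through} the authors' virtual-update alignment; I must show that the resulting mismatch is $\mathcal{O}(\sqrt\epsilon)$ and that $\mathbb{E}\|\overline{\lambda}^{t+1}\|^{2}$ does not inflate by a factor $1/P$ that would destroy the cancellation against $\tfrac{L}{2\rho^{2}}\|\overline{\lambda}^{t+1}\|^{2}$ in the descent inequality. A secondary concern is propagating the inexactness through $K$ local SGD steps; however, since the assumption is stated as $\|\nabla\mathcal{L}_i\|^{2}\le\epsilon$ on the returned iterate, this is already packaged into a single residual and Young-style splittings handle the rest.
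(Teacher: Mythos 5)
Your opening identity $\overline{\lambda}^{t+1}=\rho(\overline{\theta}^{t+1}-\overline{\theta}^{t})$ is correct and is a clean repackaging of what the paper does: since $\tfrac{1}{\rho}\overline{\lambda}^{t+1}=\overline{\theta}^{t+1}-\overline{\theta}^{t}$, your smoothness-plus-polarization step is literally the paper's expansion of $\langle\nabla f(\overline{\theta}^{t}),\overline{\theta}^{t+1}-\overline{\theta}^{t}\rangle$, and both arguments invoke the same unbiased-sampling device to treat every client as if it had produced an $\epsilon$-inexact stationary point. The skeleton up to "control $\|\nabla f(\overline{\theta}^{t})+\overline{\lambda}^{t+1}\|^{2}$" is sound.

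The gap is in how you propose to close that residual bound. After applying smoothness to $\nabla f_i(\overline{\theta}^{t})-\nabla f_i(\theta_i^{t+1})$, the quantity you must control is the \emph{average of individual} squared displacements $R_t=\tfrac{1}{C}\sum_{i}\mathbb{E}\Vert\theta_i^{t+1}-\overline{\theta}^{t}\Vert^{2}$, whereas your proposed estimate $\mathbb{E}\Vert\nabla f(\overline{\theta}^{t})+\overline{\lambda}^{t+1}\Vert^{2}\le c_{1}\epsilon+c_{2}\mathbb{E}\Vert\overline{\lambda}^{t+1}\Vert^{2}/\rho^{2}$ only offers the \emph{squared displacement of the average}, $\Vert\overline{\lambda}^{t+1}\Vert^{2}/\rho^{2}=\Vert\overline{\theta}^{t+1}-\overline{\theta}^{t}\Vert^{2}$. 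Jensen gives $\Vert\overline{\theta}^{t+1}-\overline{\theta}^{t}\Vert^{2}\le R_t$, i.e.\ the inequality you need runs in the wrong direction: under client heterogeneity $R_t$ can be far larger than the averaged-displacement energy, and no choice of $\rho$ repairs this. The paper's proof instead devotes a separate lemma to $R_t$ itself, deriving the recursion $\bigl(1-\tfrac{4L^2}{\rho^2}\bigr)R_t\le\tfrac{32L^2}{\rho^2}R_{t-1}+\tfrac{4L^2}{\rho^2}\mathbb{E}\Vert\nabla f(\overline{\theta}^{t})\Vert^{2}+\tfrac{4\epsilon}{\rho^2}$ from the inexact stationarity condition, and then couples it to the descent inequality via a Lyapunov combination $f(\overline{\theta}^{t})+q R_{t-1}$ with $q$ and $\rho$ tuned (the paper needs $\rho>\mathcal{O}(L^{3})$) so that the $R_t$ terms telescope while the $\Vert\nabla f\Vert^{2}$ coefficient stays strictly negative. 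That coupling is the missing core of the argument, and it is also the true origin of the $R_{0}/T$ term: $R_{0}=\tfrac{1}{C}\sum_i\mathbb{E}\Vert\theta_i^{1}-\theta^{0}\Vert^{2}$ is the initial value of the Lyapunov residual, a strictly larger quantity than the first-round dual energy $\Vert\overline{\lambda}^{1}\Vert^{2}/\rho^{2}=\Vert\overline{\theta}^{1}-\theta^{0}\Vert^{2}$ that your plan would produce.
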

\begin{remark}
    To achieve the $\epsilon$ error, the \textit{A-FedPD} requires $\mathcal{O}(\epsilon^{-1})$ rounds, yielding $\mathcal{O}(1/T)$ convergence rate. Concretely, the \textit{federated primal dual} methods can locally train more and communicate less, which empowers it a great potential in the applications. Our analysis is consistent with the previous understandings~\citep{zhang2021fedpd,acar2021federated,gong2022fedadmm,li2023dfedadmm}.
\end{remark}

\begin{remark}
     Generally, the federated primal-dual methods require a long local interval. \citet{zhang2021fedpd,gong2022fedadmm,wang2022fedadmm} have summarized the corresponding selections of $K$ for different local optimizers. To complete the analysis, we just list a general selection of the local interval $K$. Specifically, to achieve the $\epsilon$ error, local interval $K$ of \textit{A-FedPD} can be selected as $\mathcal{O}(\epsilon^{-1})$ with total $\mathcal{O}(\epsilon^{-2})$ sample complexity in the training. Due to the page limitation, we state more discussions in the Appendix~\ref{ap:opt discuss}.
\end{remark}

\subsection{Generalization}
\label{Generalization}
In this part, we explore the efficiency of \textit{A-FedPD} from the stability and generalization perspective, which could also be extended to the common \textit{primal dual}-family in the federated learning community. We first introduce the setups and assumptions and then demonstrate the main theorem and discussions.

\textbf{Setups.} To understand the stability and generalization efficiency, we follow \citet{hardt2016train,lei2020fine,zhou2021towards,sun2023understanding} to adopt the uniform stability analysis to measure its error bound. To learn the generalization gap $\mathbb{E}\left[F(\theta^T)-f(\theta^T)\right]$ where $\theta^T$ is generated by a stochastic algorithm, we could study its stability gaps. We consider a joint client set $\mathcal{C}$~(union dataset) for training. Each client $i$ has a private dataset $\mathcal{S}_i$ with total $S$ samples which are sampled from the unknown distribution $\mathcal{D}_i$. To explore the stability gaps, we construct a mirror dataset $\hat{\mathcal{C}}$ that there is at most one different data sample from the raw dataset $\mathcal{C}$. Let $\theta^T$ and $\hat{\theta}^T$ be two models trained on $\mathcal{C}$ and $\hat{\mathcal{C}}$ respectively.
Therefore, the generalization of a uniformly stable method satisfies:
\begin{equation}
    \label{stability}
    \small
    \mathbb{E}\left[\vert F(\theta^T)-f(\theta^T)\vert \right]\leq\sup_{\xi}\mathbb{E}\left[\vert f(\theta^T,\xi) - f(\hat{\theta}^T,\xi)\vert\right]\leq \varepsilon.
\end{equation}

\textbf{Key properties.} From the local training, we can first upper bound the local stability. To compare the difference between vanilla \textit{SGD} updates and \textit{primal dual}-family updates, we can reformulate them:
\begin{eqnarray}
\label{local_iteration}
\small
\begin{cases}
\theta_{i,k+1}^{t}-\theta^t &= (\theta_{i,k}^{t}-\theta^t) + \eta^t g_{i,k}^t, \\
\theta_{i,k+1}^{t}-\theta^t &= (1 - \eta^t\rho)(\theta_{i,k}^{t}-\theta^t) + \eta^t(g_{i,k}^t + \lambda_i).
\end{cases}
\end{eqnarray}
The above update is for vanilla \textit{FedAvg} and the below update is for \textit{primal dual}-family. When the dual variables are ignored, local update $\theta_{i,k}^t-\theta^t$ in \textit{primal dual} could be considered as a stable decayed sequence with $1-\eta^t\rho$ that has a constant upper bound. Based on this, we can provide a tighter generalization error bound for the \textit{primal dual}-family methods in FL than the vanilla \textit{FedAvg} method.

\begin{theorem}
    Let non-convex objective $f$ satisfies Assumption~\ref{as:smoothness} and \ref{as:lipschitz} and $H=\sup_{\theta,\xi} f(\theta,\xi)$, after $T$ communication rounds training with Algorithm~\ref{algorithm}, the generalization error bound achieves:
    \begin{equation}
    \small
    \mathbb{E}\left[F(\theta^T)-f(\theta^T)\right] \leq \frac{\kappa_c}{CS}\left(HPT\right)^{\frac{\mu L}{1+\mu L}},
    \end{equation}
    where $\mu$ is a constant related to the learning rate and $\kappa_c = 4\left(G^2/L\right)^{\frac{1}{1+\mu L}}$ is a constant.
\end{theorem}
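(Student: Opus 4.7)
The plan is to follow the uniform stability framework of Hardt--Recht--Singer already announced in Eq.~(\ref{stability}). I construct a mirror dataset $\hat{\mathcal{C}}$ differing from $\mathcal{C}$ by exactly one sample, run Algorithm~\ref{algorithm} on both with identical randomness, and track the expected primal discrepancy $\delta^t = \mathbb{E}\|\theta^t - \hat{\theta}^t\|$. Lipschitz continuity (Assumption~\ref{as:lipschitz}) converts $\delta^T$ into the generalization gap via $\mathbb{E}|f(\theta^T,\xi) - f(\hat{\theta}^T,\xi)| \leq G\,\delta^T$. The form $(HPT)^{\mu L/(1+\mu L)}/(CS)$ in the target is the signature of the classical truncation trick, so I will condition on the first round $t_0$ at which the distinguishing sample is actually touched by an active client, use boundedness $H=\sup_{\theta,\xi} f(\theta,\xi)$ to control the collision event, and bound the growth of $\delta^T$ on the non-collision event through the local contraction already identified by the authors below Eq.~(\ref{local_iteration}).

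First I analyse the local recursion. Using the second line of Eq.~(\ref{local_iteration}), whenever the distinguishing sample is not drawn in a local step I may write
\[
\theta_{i,k+1}^{t} - \hat{\theta}_{i,k+1}^{t} = (1-\eta^t\rho)(\theta_{i,k}^{t} - \hat{\theta}_{i,k}^{t}) + \eta^t\bigl(\nabla f_i(\theta_{i,k}^t)-\nabla f_i(\hat{\theta}_{i,k}^t)\bigr) + \eta^t(\lambda_i^t - \hat{\lambda}_i^t),
\]
so Assumption~\ref{as:smoothness} gives an expansion factor of at most $1-\eta^t\rho+\eta^t L$ per step plus an additive dual contribution; on a step that \emph{does} involve the distinguishing sample, Assumption~\ref{as:lipschitz} adds $2\eta^t G$. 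The qualitative point, which is exactly what the authors highlight after Eq.~(\ref{local_iteration}), is that the $(1-\eta^t\rho)$ contraction cancels most of the $\eta^t L$ expansion, so unrolling $K$ inner steps produces only a multiplicative factor $1+\mu L$, with $\mu$ absorbing the joint dependence on $\eta^t$, $\rho$, and $K$ exactly as in the statement.

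Next I pass to the server side. Because $\overline{\theta}^{t+1}$ is the average over the $P$ active clients and $\theta^{t+1}=\overline{\theta}^{t+1}+\overline{\lambda}^{t+1}/\rho$, and because the virtual update is itself a function of $\overline{\theta}^{t+1}$, the two runs only diverge through clients that hold the distinguishing sample and select it in their mini-batch, which happens with probability $P/(CS)$ at each round. Together with the induced dual recursion $\overline{\lambda}^{t+1} = \overline{\lambda}^t + \rho(\overline{\theta}^{t+1}-\theta^t)$, I obtain a coupled schematic system
\[
\delta^{t+1} \leq (1+\mu L)\,\delta^t + \tfrac{1}{\rho}\Delta^t + \tfrac{2\eta^t P G}{CS},\qquad \Delta^{t+1} \leq \Delta^t + \rho\,\bigl(\delta^{t+1}+\mathcal{O}(\eta^t G/(CS))\bigr),
\]
where $\Delta^t = \mathbb{E}\|\overline{\lambda}^t - \hat{\overline{\lambda}}^t\|$. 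Taking a free truncation index $t_0$, the collision probability before round $t_0$ is at most $Pt_0/(CS)$, on which event I use $|f-\hat{f}|\leq 2H$; on the complement I iterate the coupled recursion from $t_0$ to $T$ to get $\delta^T \leq (G/L)(1+\mu L)^{T-t_0}$. Adding the two and optimising $t_0$ (balance the linear and the exponential term by zeroing the logarithmic derivative) produces the exponent $(HPT)^{\mu L/(1+\mu L)}$ and the prefactor $4(G^2/L)^{1/(1+\mu L)}/(CS) = \kappa_c/(CS)$.

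The main obstacle, as I see it, is closing the dual recursion cleanly. Unlike in vanilla SGD stability, the dual variables accumulate the entire history of the optimisation, and a naive bound on $\|\lambda_i^t - \hat{\lambda}_i^t\|$ would contribute an extra factor that grows exponentially in $T$ and destroys the intended exponent. What rescues the argument is exactly the A-FedPD virtual update: it ties $\overline{\lambda}^{t+1}-\hat{\overline{\lambda}}^{t+1}$ to $\overline{\theta}^{t+1}-\hat{\overline{\theta}}^{t+1}$ rather than to stale per-client differences, so that $\Delta^t$ can be controlled by $\delta^t$ itself. Making this coupling quantitative, without inflating the effective per-round Lipschitz constant above $1+\mu L$, is the delicate technical step; once it is carried out, the truncation/optimisation over $t_0$ proceeds along the standard Hardt--Recht--Singer lines.
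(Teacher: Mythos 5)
Your overall strategy --- mirror dataset, uniform stability, truncation on the first round $\tau_0$ at which the distinguishing sample is touched, then optimization over $\tau_0$ --- is exactly the paper's, and you correctly identify that the whole argument hinges on controlling the dual discrepancy through the A-FedPD virtual update. However, there are two genuine gaps, and you have flagged the first yourself without closing it. \textbf{(a)} Your dual recursion $\Delta^{t+1}\leq \Delta^t+\rho\,(\delta^{t+1}+\mathcal{O}(\eta^t G/(CS)))$ ties the dual increment to the \emph{global parameter} difference, which is an accumulated $O(1)$ quantity. Feeding this back through $\delta^{t+1}\leq\cdots+\tfrac{1}{\rho}\Delta^t$ produces a term of order $\sum_{s\leq t}\delta^s$ in the primal recursion, and the system then grows at least geometrically with a ratio bounded away from $1+\mu L$. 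The paper avoids this by bounding the dual increment by $\rho$ times the \emph{local update} difference $\pi^{t+1}=\frac{1}{C}\sum_i\mathbb{E}\Vert(\theta_i^{t+1}-\theta^t)-(\hat{\theta}_i^{t+1}-\hat{\theta}^t)\Vert$, and the local-training lemmas show $\pi^{t+1}\leq \eta^t K L\,\Delta^t+\eta^t K\,\sigma^t+2\eta^t KG/(CS)$: the prefactor $\eta^t K$ is what keeps the per-round expansion of the combined Lyapunov quantity $\Delta^t+\alpha_\rho\sigma^t$ at $1+\gamma\eta^t KL$. Your proposal names this as ``the delicate technical step'' but does not supply it, and the schematic system you wrote down would not yield the theorem.

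\textbf{(b)} Even granting the correct recursion, your unrolled bound $\delta^T\leq (G/L)(1+\mu L)^{T-t_0}$ with a \emph{constant} per-round factor grows exponentially in $T-t_0$; balancing $e^{c(T-t_0)}$ against the linear collision term $HPt_0/(CS)$ forces $t_0=T-\mathcal{O}(\log T)$ and yields a bound that is essentially linear in $T$, not $(HPT)^{\mu L/(1+\mu L)}$. The exponent $\tfrac{\mu L}{1+\mu L}$ is only obtained because the paper takes a decayed learning rate $\eta^t=\eta_0/(t+1)$ with $\eta_0\leq\mu/(\gamma K)$, so that $\prod_{t=\tau_0}^{T-1}(1+\gamma\eta^t KL)\leq (T/\tau_0)^{\mu L}$ is \emph{polynomial} in $T/\tau_0$; the standard Hardt--Recht--Singer optimization $\tau_0\propto T^{\mu L/(1+\mu L)}$ then gives the stated bound and the prefactor $\kappa_c$. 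Your writeup treats $\mu$ as absorbing the step-size dependence but then discards the $t$-dependence of $\eta^t$ precisely where it is indispensable.
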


\begin{remark}
We assume that the total number of data samples participating in the training is $CS$ and the total iterations of the training are $KT$. \citet{hardt2016train} prove that on non-convex objectives, vanilla \textit{SGD} method achieves $\mathcal{O}((TK)^{\frac{\mu L}{1 + \mu L}}/CS)$ error bound. Compared with \textit{SGD}, FL adopts the cyclical local training and partial participation mechanism which further increases the stability error. \citet{sun2023mode} learn a fast rate on sample size as $\mathcal{O}((PKT)^{\frac{\mu L}{1+\mu L}}/CS)$ under the Lipschitz assumption only. However, \textit{primal dual}-family can achieve faster rate $\mathcal{O}((PT)^{\frac{\mu L}{1 + \mu L}}/CS)$ in FL, which is due to the stable iterations in Eq.(\ref{local_iteration}). It guarantees that the local training could be bounded in a constant order even under the fixed learning rate. From the local training perspective, \textit{primal dual}-family in FL can support a very long local interval $K$ without losing stability. This property is also proven in its optimization progress, that the \textit{primal dual}-family could adopt a larger local interval to accelerate the training and reduce the communication rounds. In general training, especially in situations where communication bandwidth is limited and frequent communication is not possible, the \textit{primal dual}-family in FL could achieve a more stable result than the general methods. Our analysis further confirms its good adaptivity in FL. Due to page limitation, we summarize some recent results of the generalization error bound in Appendix~\ref{ap:gen discussion}.
\end{remark}
\section{Experiments}
\label{Experiment}

\begin{table*}[t]
\begin{center}
\renewcommand{\arraystretch}{1}
\caption{Test accuracy on the CIFAR-10\ /\ 100 dataset. We fix the total client $C=100$ and $P=10$ under training local 50 iterations. We test 3 setups of IID, Dir-1.0, and Dir-0.1 on each dataset. Each group is tested on LeNet~(upper portion) and ResNet-18~(lower portion) models. Each results are tested with 4 different random seeds. ``$-$'' means the loss values will oscillate. ``Family'' distinguishes whether the algorithm is a primal method~(P) or a primal dual method~(PD) and ``Local Opt'' distinguishes whether the algorithm adopts SGD-based or SAM-based local optimizer.}
\vspace{0.1cm}
\scalebox{0.8}{
\begin{sc}
\small
\setlength{\tabcolsep}{2.5mm}{\begin{tabular}{@{}c|cc|cccccc@{}}
\toprule
\multicolumn{1}{c}{\multirow{3}{*}{}} & \multicolumn{1}{c}{\multirow{3}{*}{}} & \multicolumn{1}{c}{\multirow{3}{*}{}} & \multicolumn{3}{c}{CIFAR-10} & \multicolumn{3}{c}{CIFAR-100} \\
\cmidrule(lr){4-6} \cmidrule(lr){7-9}
\multicolumn{1}{c}{} & \multicolumn{1}{c}{Family} & \multicolumn{1}{c}{Local Opt} & \multicolumn{1}{c}{IID} & \multicolumn{1}{c}{Dir-1.0} & \multicolumn{1}{c}{Dir-0.1} & \multicolumn{1}{c}{IID} & \multicolumn{1}{c}{Dir-1.0} & \multicolumn{1}{c}{Dir-0.1} \\
\cmidrule(lr){1-1} \cmidrule(lr){2-9}
FedAvg       & P & SGD & $\text{81.87}_{\pm.12}$ & $\text{80.58}_{\pm.15}$ & $\text{75.57}_{\pm.27}$ & $\text{40.11}_{\pm.17}$ & $\text{39.65}_{\pm.07}$ & $\text{38.37}_{\pm.14}$ \\ 
FedCM        & P & SGD & $\text{80.34}_{\pm.14}$ & $\text{79.31}_{\pm.33}$ & $\text{72.89}_{\pm.37}$ & $\text{43.33}_{\pm.13}$ & $\text{42.35}_{\pm.25}$ & $\text{37.11}_{\pm.51}$ \\ 
SCAFFOLD     & P & SGD & $\text{84.25}_{\pm.16}$ & $\text{83.61}_{\pm.14}$ & $\text{78.66}_{\pm.29}$ & $\text{49.65}_{\pm.06}$ & $\text{49.11}_{\pm.14}$ & $\text{46.36}_{\pm.30}$ \\ 
FedSAM       & P & SAM & $\text{83.22}_{\pm.09}$ & $\text{81.94}_{\pm.13}$ & $\text{77.41}_{\pm.36}$ & $\text{43.02}_{\pm.09}$ & $\text{42.83}_{\pm.29}$ & $\text{42.29}_{\pm.23}$ \\  
FedDyn       & PD & SGD & $\text{84.49}_{\pm.22}$ & $\text{84.20}_{\pm.14}$ & $\text{79.51}_{\pm.13}$ & $\text{50.27}_{\pm.11}$ & $\text{49.64}_{\pm.21}$ & $\text{46.30}_{\pm.26}$ \\ 
FedSpeed     & PD & SAM & $\text{86.01}_{\pm.18}$ & $\text{85.11}_{\pm.21}$ & $\text{80.86}_{\pm.18}$ & $\text{54.01}_{\pm.15}$ & $\text{53.45}_{\pm.23}$ & $\text{51.28}_{\pm.18}$ \\
\cmidrule(lr){1-1} \cmidrule(lr){2-9}
A-FedPD      & PD & SGD & $\text{85.31}_{\pm.14}$ & $\text{84.94}_{\pm.13}$ & $\text{80.28}_{\pm.20}$ & $\text{51.41}_{\pm.15}$ & $\text{51.17}_{\pm.17}$ & $\text{48.15}_{\pm.28}$ \\  
A-FedPDSAM   & PD & SAM & $\textbf{86.47}_{\pm.18}$ & $\textbf{85.90}_{\pm.29}$ & $\textbf{81.96}_{\pm.19}$ & $\textbf{55.56}_{\pm.27}$ & $\textbf{54.62}_{\pm.16}$ & $\textbf{53.15}_{\pm.19}$ \\  
\cmidrule(lr){1-1} \cmidrule(lr){2-9}
FedAvg       & P & SGD & $\text{81.67}_{\pm.21}$ & $\text{80.94}_{\pm.17}$ & $\text{76.24}_{\pm.35}$ & $\text{44.68}_{\pm.21}$ & $\text{44.27}_{\pm.25}$ & $\text{41.64}_{\pm.27}$ \\ 
FedCM        & P & SGD & $\text{84.22}_{\pm.17}$ & $\text{82.85}_{\pm.21}$ & $\text{76.93}_{\pm.32}$ & $\text{50.04}_{\pm.16}$ & $\text{48.66}_{\pm.28}$ & $\text{44.07}_{\pm.30}$ \\ 
SCAFFOLD     & P & SGD & $\text{84.31}_{\pm.14}$ & $\text{83.70}_{\pm.11}$ & $\text{78.70}_{\pm.21}$ & $\text{50.69}_{\pm.21}$ & $\text{50.28}_{\pm.21}$ & $\text{47.12}_{\pm.34}$ \\ 
FedSAM       & P & SAM & $\text{83.79}_{\pm.28}$ & $\text{82.58}_{\pm.19}$ & $\text{77.83}_{\pm.27}$ & $\text{48.66}_{\pm.29}$ & $\text{48.42}_{\pm.19}$ & $\text{45.03}_{\pm.22}$ \\  
FedDyn       & PD & SGD & $\text{83.71}_{\pm.26}$ & $\text{82.66}_{\pm.15}$ & $\text{79.44}_{\pm.25}$ & $-$ & $-$ & $-$ \\ 
FedSpeed     & PD & SAM & $\text{86.90}_{\pm.18}$ & $\text{85.92}_{\pm.24}$ & $\text{81.47}_{\pm.19}$ & $\text{53.22}_{\pm.28}$ & $\text{52.75}_{\pm.16}$ & $\text{49.66}_{\pm.13}$ \\
\cmidrule(lr){1-1} \cmidrule(lr){2-9}
A-FedPD      & PD & SGD & $\text{85.11}_{\pm.12}$ & $\text{84.33}_{\pm.16}$ & $\text{81.05}_{\pm.28}$ & $\text{48.15}_{\pm.22}$ & $\text{48.02}_{\pm.29}$ & $\text{46.24}_{\pm.26}$ \\  
A-FedPDSAM   & PD & SAM & $\textbf{87.44}_{\pm.13}$ & $\textbf{86.46}_{\pm.25}$ & $\textbf{82.48}_{\pm.21}$ & $\textbf{55.30}_{\pm.23}$ & $\textbf{53.49}_{\pm.17}$ & $\textbf{50.31}_{\pm.23}$ \\  
\bottomrule
\end{tabular}}
\label{acc}
\end{sc}
}
\end{center}
\vspace{-0.4cm}
\end{table*}

In this section, we introduce the experiments conducted to validate the efficiency of our proposed \textit{A-FedPD} and a variant \textit{A-FedPDSAM}~(see details in Appendix~\ref{ap:fedpdsam}). We first introduce experimental setups and benchmarks, and then we show the empirical studies. 


\textbf{Backbones and Datasets.} In our experiments, we adopt LeNet~\cite{lecun1998gradient} and ResNet~\cite{he2016deep} as backbones. We follow previous work to test the performance of benchmarks on the CIFAR-10\ /\ 100 dataset~\cite{krizhevsky2009learning}. We introduce the heterogeneity to split the raw dataset to local clients with independent Dirichlet distribution~\cite{hsu2019measuring} controlled by a concentration parameter. In our setups, we mainly test the performance of the IID, Dir-1.0, and Dir-0.1 splitting. The Dir-1.0 represents the low heterogeneity and Dir-0.1 represents the high heterogeneity. We also adopt the sampling with replacement to further enhance the heterogeneity.

\textbf{Setups.} We test the accuracy experiments on $C=100$ and $P/C=10\%$, which is also the most popular setup in the FL community. In the comparison experiments, we test the participated ratio $P/C=\left[5\%,10\%,20\%,50\%,80\%,100\%\right]$ and local interval $K=\left[10,20,50,100,200\right]$ respectively. In each setup, for a fair comparison, we freeze the most of hyperparameters for all methods. We fix total communication rounds $T=800$ except for the ablation studies.

\textbf{Baselines.} \textit{FedAvg}~\citep{mcmahan2017communication} is the fundamental paradigm in FL scenarios. \textit{FedCM}~\citep{xu2021fedcm}, \textit{SCAFFOLD}~\citep{karimireddy2020scaffold} and \textit{FedSAM}~\citep{qu2022generalized} are three classical SOTA methods in the federated primal average family. \textit{FedDyn}~\citep{acar2021federated} and \textit{FedSpeed}~\cite{sun2023fedspeed} are relatively stable federated primal dual methods. A more detailed introduction of these methods is presented in Table~\ref{acc}, including the family-basis and local optimizer.

\subsection{Experiments}
\label{experiments}

In this part, we introduce the phenomena observed in our empirical studies. We primarily investigated performance comparisons, including settings with different participation rates, various local intervals, and different numbers of communication rounds. Then we report the comparison of wall-clock time.

\textbf{Performance Comparison.} Table~\ref{acc} shows the test accuracy on CIFAR-10\ /\ 100 dataset. The vanilla \textit{FedAvg} provides the standard bars as the baseline. In the \textit{federated primal average} methods, The \textit{FedCM} is not stable enough and is largely affected by different backbones, which is caused by the forced consistency momentum and may introduce very large biases. \textit{SCAFFOLD} and \textit{FedSAM} performs well. However, both are less than the \textit{primal dual}-based methods. In summary, \textit{SCAFFOLD} is on average 3.2\% lower than \textit{FedSpeed}. As heterogeneity increases, \textit{SCAFFOLD} drops on average about 5.6\% on CIFAR-10 and 3.43\% on CIFAR-100. In the \textit{federated primal dual} methods, we can clearly see that \textit{FedDyn} is not stable in the training. It performs well on the LeNet, which could achieve at least 1\% improvements than \textit{SCAFFOLD}. However, when the task becomes difficult, i.e., ResNet-18 on CIFAR-100, its accuracy is affected by the ``dual drift'' and drops quickly. To maintain stability, we have to select some weak coefficients to stabilize it and finally get a lower accuracy. Our proposed \textit{A-FedPD} could efficiently alleviate the negative impacts of the ``dual drift''. It performs about on average 0.8\% higher than \textit{FedDyn} on the CIFAR-10 dataset. When \textit{FedDyn} has to compromise the hyperparameters and becomes extremely unstable on ResNet-18 on the CIFAR-100 dataset, \textit{A-FedPD} still performs stably. It's also very scalable. When we introduce the \textit{SAM} optimizer to replace the vanilla \textit{SGD}, it could achieve higher performance.

\begin{figure*}[t]
\vskip -0.05in
\centering
    \subfigure[Different Participation Ratios.]{
	\includegraphics[width=0.33\textwidth]{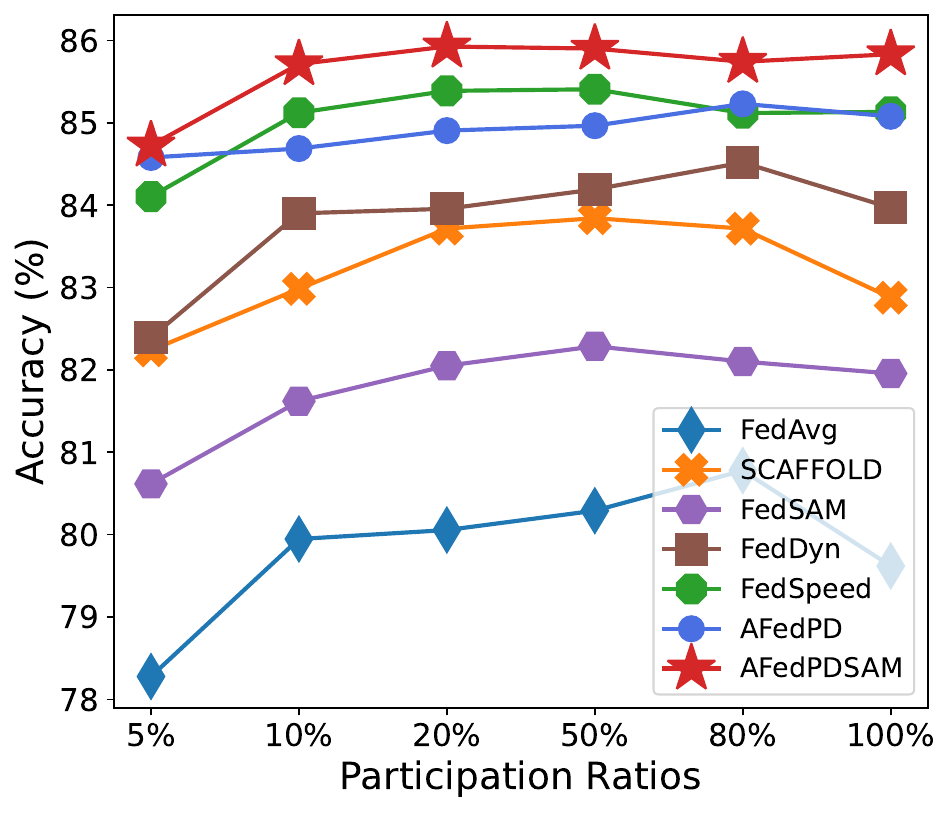}}\!\!\!
    \subfigure[Different Local Intervals.]{
        \includegraphics[width=0.33\textwidth]{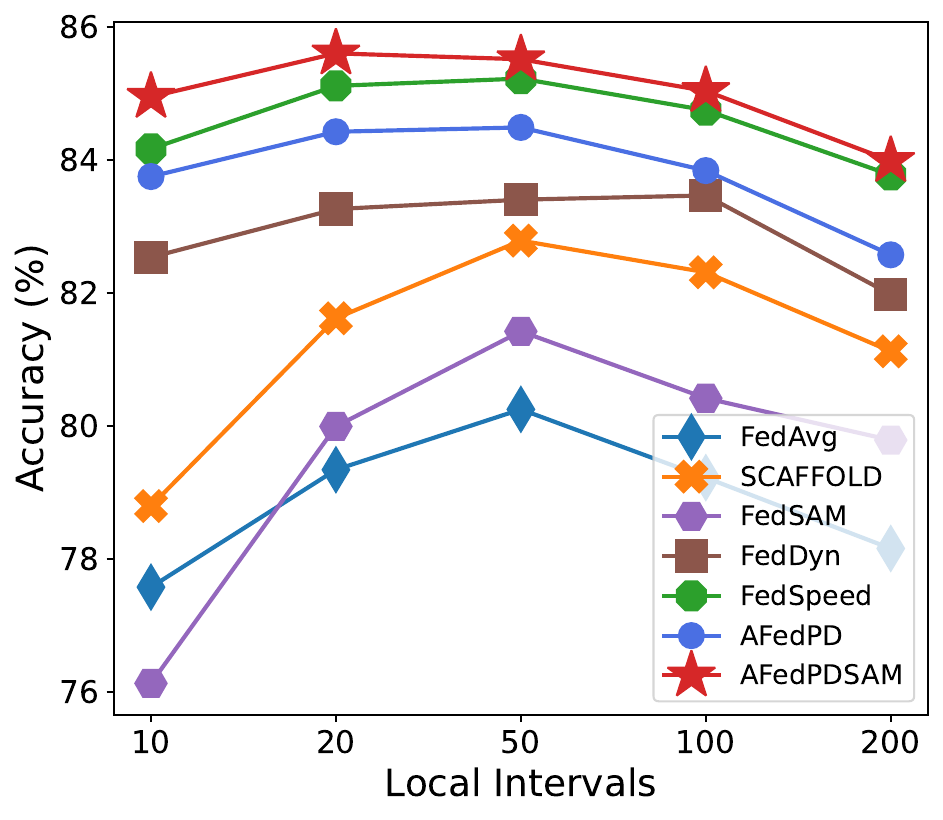}}\!\!\!
    \subfigure[Different Rounds.]{
	\includegraphics[width=0.33\textwidth]{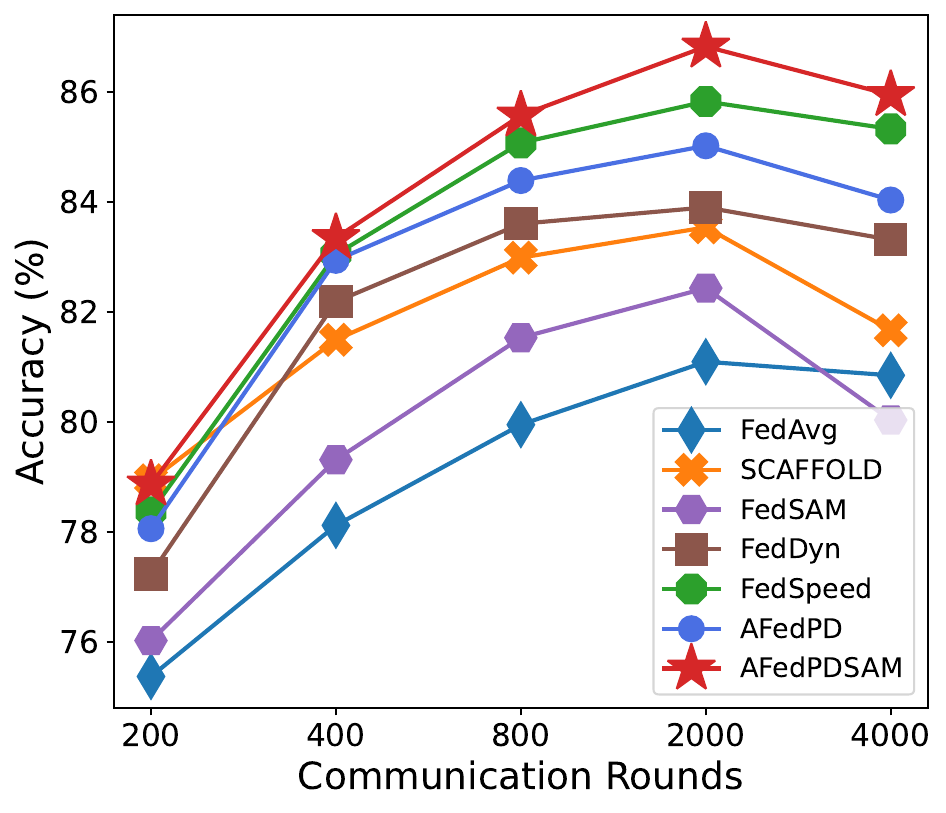}}
\vskip -0.1in
\caption{Test of the proposed \textit{A-FedPD} method on setups of different participation ratios, different local intervals, and different rounds. In these experiments, we fix the total training data samples and total training iterations and then learn their variation trends.}
\label{ablations}
\vskip -0.15in
\end{figure*}

\textbf{Different Participation Ratios.} In this part we compare the sensitivity to the participation ratios. In each setup, we fix the scale as 100 and the local interval as 50 iterations. Active ratio is selected from $\left[5\%, 10\%, 20\%, 50\%, 80\%, 100\%\right]$ as shown in Figure~\ref{ablations} (a). Under frozen hyperparameters, all methods perform well on each selection. The best performance is approximately located in the range of $[20\%, 80\%]$. Our proposed methods achieve high efficiency in handling large-scale training, which performs more steadily than other benchmarks across all selections.

\textbf{Different Local Intervals.} In this part we compare the sensitivity to the local intervals. In each setup, we fix the scale as 100 and the participation as 10\%. Local interval $K$ is selected from $\left[10, 20, 50, 100, 200\right]$ as shown in Figure~\ref{ablations} (b). More local training iterations usually mean more overfitting on the local dataset, which leads to a serious ``client drift'' issue. All methods will be affected by this and drop accuracy. It is a trade-off in selecting the local interval $K$ to balance both optimization efficiency and generalization stability. Our proposed methods still could achieve the best performance even on the very long local training iterations.

\textbf{Different Communication Rounds.} In this part, we compare the sensitivity to the communication rounds. In each setup, we fix total iterations $TK = 40,000$. Communication round $T$ is selected from $\left[200, 400, 800, 2000, 4000\right]$ as shown in Figure~\ref{ablations} (c). We always expect the local clients can handle more and communicate less, which will significantly reduce the communication costs. In the experiments, our proposed methods could achieve higher efficiency than the benchmarks. \textit{A-FedPD} saves about half the communication overhead compared to \textit{SCAFFOLD}, and about one-third of \textit{FedDyn}. Under favorable communication bandwidths, they can achieve SOTA performance.

\begin{wrapfigure}[13]{r}{0.5\textwidth}
\centering
\vspace{-0.6cm}
    \subfigure[ResNet-18.]{
        \includegraphics[width=0.249\textwidth]{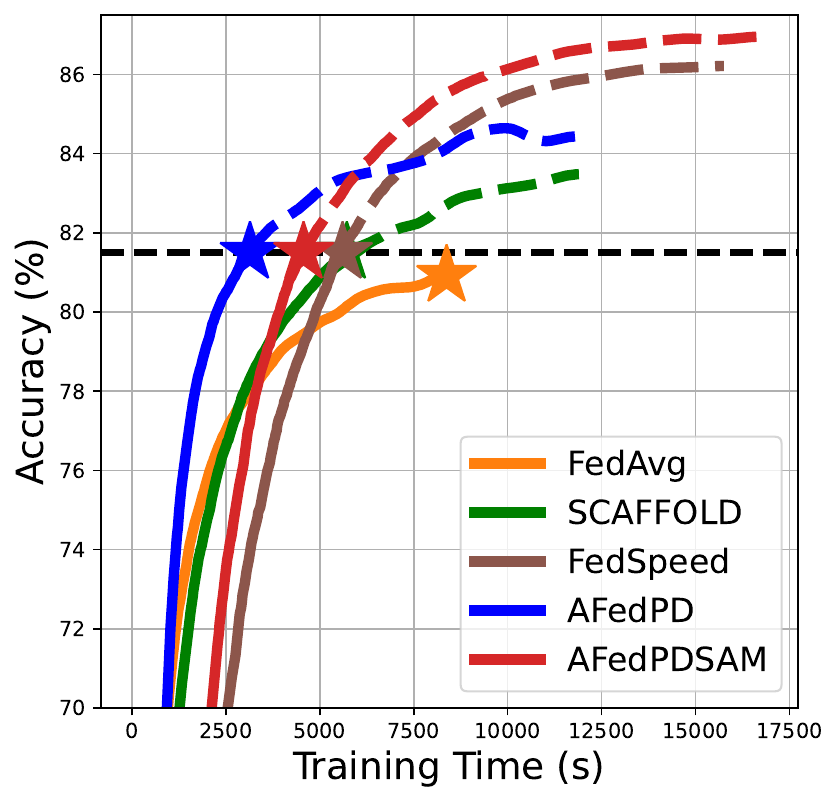}}\!\!\!
    \subfigure[LeNet.]{
	\includegraphics[width=0.243\textwidth]{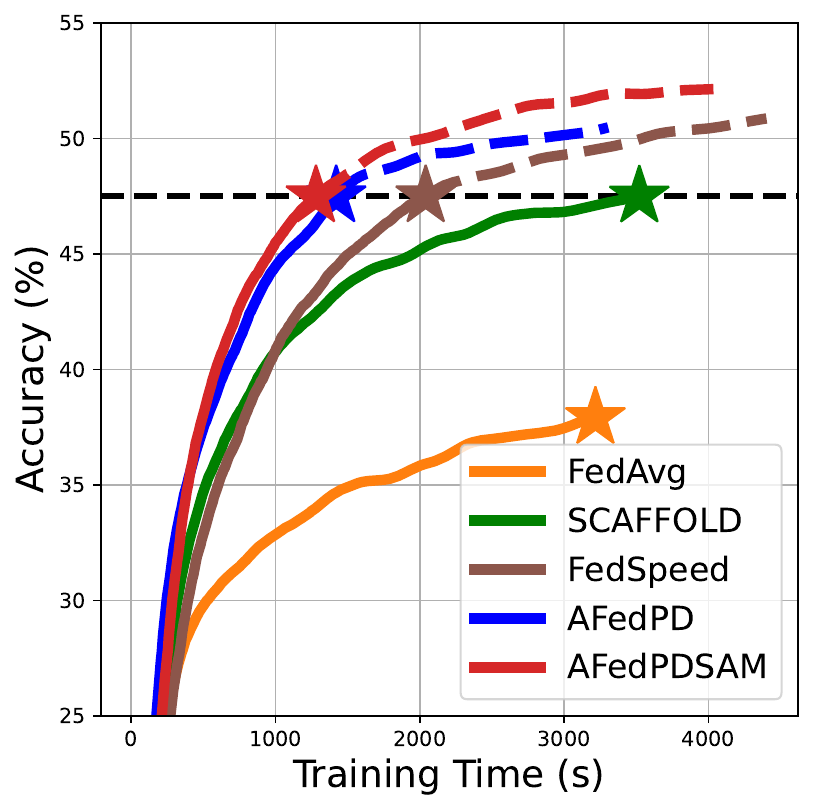}}
 \vskip -0.1in
\caption{Wall-clock time test of training process after total of 600 communication rounds.}
\label{wall clock time}
 \vskip -0.1in
\end{wrapfigure}

\textbf{Wall-clock Time Efficiency.} In this part we test the practical wall-clock time comparisons as shown in Figure~\ref{wall clock time}. Though some methods are communication-efficiency, complicated calculations hinder the real efficiency in wall-clock training time. Though \textit{FedSpeed} and \textit{AFedPDSAM} perform well at the end, additional calculations per single round make their early-stage competitiveness lower. \textit{AFedPD} and \textit{SCAFFOLD} consume fewer time costs, hence achieving better results at the early stage. Without considering training time costs, \textit{AFedPDSAM} achieves the SOTA results at the end. Detailed comparisons are stated in Sec.\ref{wall=clock}.
\section{Conclusion}
\label{conclusion}

In this paper, we first review the development of the \textit{federated primal dual} methods. Under the exploration of the experiments, we point out a serious challenge that hinders the efficiency of such methods, which is summarized as the ``dual drift'' problem due to the mismatched primal and dual variables in the partial participation manners. Furthermore, we propose a novel \textit{A-FedPD} method to alleviate this issue via constructing virtual dual updates for those unparticipated clients. We also theoretically learn its convergence rate and generalization error bound to demonstrate its efficiency. Extensive experiments are conducted to validate its significant performance.


\newpage
{
\small
\bibliographystyle{plainnat}
\bibliography{reference}
}

\newpage
\appendix
\onecolumn

In this part, we introduce the appendix. We introduce the additional experiments in Sec.\ref{ap:exp} including backgrounds, setups, hyperparameters selections, and some figures of experiments. We introduce the theoretical proofs in Sec.\ref{ap:proof}.

\paragraph{Limitations.} To avoid the dual drifts, we propose the virtual dual updates to align the old dual variables with the new global model. This requires those dual variables of non-active clients to be updated on the global server, yielding more storage costs. As a trade-off, our method applies additional variable assistance to greatly improve the stability of such algorithms. It also is an interesting future study to approximate the virtual dual update on the local clients.

\section{Additional Experiments}
\label{ap:exp}
\subsection{Benchmarks}
\label{ap:fedpdsam}
We select 6 classical state-of-the-art~(SOTA) benchmarks as baselines in our paper, including (a) \textit{primal}: \textit{FedAvg}~\cite{mcmahan2017communication}, \textit{SCAFFOLD}~\cite{karimireddy2020scaffold}, \textit{FedCM}~\cite{xu2021fedcm}, \textit{FedSAM}~\cite{qu2022generalized}; (b) \textit{primal dual}: \textit{FedADMM}~\cite{wang2022fedadmm}, \textit{FedDyn}~\cite{wang2022fedadmm}, \textit{FedSpeed}~\cite{sun2023fedspeed}. We mainly focus on infrastructure improvements instead of combinations of techniques. In the \textit{primal} group, \textit{SCAFFOLD} and \textit{FedCM} alleviate ``client drift'' via variance reduction and client-level momentum correction respectively. \textit{FedSAM} introduce the \textit{Sharpeness Aware Minimization}~(SAM)~\cite{foret2020sharpness} in vanilla \textit{FedAvg} to smoothen the loss landscape. In the \textit{primal dual} group, we select the \textit{FedDyn} as the stable basis under partial participation. We also show the instability of the vanilla \textit{FedADMM} to show the negative impacts of the ``\textit{dual drift}''. \textit{FedSpeed} introduces the \textit{SAM} in vanilla \textit{FedADMM}\ /\ \textit{FedDyn}. We also test the variant of \textit{SAM} version of our proposed \textit{A-FedPD} method, which is named \textit{A-FedPDSAM}.

\begin{algorithm}[H]
\small
	\renewcommand{\algorithmicrequire}{\textbf{Input:}}
	\renewcommand{\algorithmicensure}{\textbf{Output:}}
	\caption{A-FedPDSAM Algorithm}
    \begin{multicols}{2}
	\begin{algorithmic}[1]
		\REQUIRE $\theta^0$, $\theta_i^0$, $T$, $K$, $\lambda_i^0$, $\rho$
		\ENSURE global average model\\
            \STATE \textbf{Initialization} : $\theta_i^0=\theta^0$, $\lambda_i^0=0$.
            \FOR{$t = 0, 1, 2, \cdots, T-1$}
            \STATE randomly select active clients set $\mathcal{P}^t$ from $\mathcal{C}$
            \FOR{client $i \in \mathcal{P}^t$ \textbf{in parallel}}
            \STATE receive $\lambda_i^t, \theta^t$ from the global server
            \STATE $\theta_i^{t+1} = \textit{LocalTrain}(\lambda_i^t, \theta^t, \eta^t, K)$
            \STATE send $\theta_i^{t+1}$ to the global server 
            \ENDFOR
            \STATE $\overline{\theta}^{t+1} = \frac{1}{P}\sum_{i\in\mathcal{P}^t}\theta_i^{t+1}$
            \STATE $\lambda_i^{t+1}=\textit{D-Update}(\lambda_i^t, \theta^t, \theta_i^{t+1}, \overline{\theta}^{t+1}, \mathcal{P}^t)$
            \STATE $\overline{\lambda}^{t+1} = \frac{1}{C}\sum_{i\in\mathcal{C}}\lambda_i^{t+1}$
            \STATE $\theta^{t+1} = \overline{\theta}^{t+1} + \frac{1}{\rho} \overline{\lambda}^{t+1}$
            \ENDFOR
            \STATE return global average model
	\end{algorithmic}
    \textit{$\diamondsuit$ LocalTrain}: (Optimize Eq.(\ref{lagrangian}))
    \begin{algorithmic}[1]
	\REQUIRE $\lambda_i^t$, $\theta^t$, $\eta^t$, $K$
	\ENSURE $\theta_{i,K}^t$
        \FOR{$k = 0, 1, 2, \cdots, K-1$}
        \STATE select a minibatch $\mathcal{B}$
        \STATE $g_{i,k}^t=\nabla f_i(\theta_{i,k,t} + \rho\frac{\nabla f_i(\theta_{i,k,t}, \mathcal{B})}{\Vert\nabla f_i(\theta_{i,k,t}, \mathcal{B})\Vert}, \mathcal{B})$
        \STATE $\theta_{i,k+1}^t = \theta_{i,k}^t - \eta^t(g_{i,k}^t + \lambda_i^t + \rho(\theta_{i,k}^t - \theta^t))$
        \ENDFOR
	\end{algorithmic}

    \textit{$\diamondsuit$ D-Update}:~(update dual variables)
    \begin{algorithmic}[1]
    \REQUIRE $\lambda_i^t, \theta^t, \theta_i^{t+1}, \overline{\theta}^{t+1}, \mathcal{P}^t$
	\ENSURE $\lambda_i^{t+1}$
    \IF{$i\in\mathcal{P}^t$}
    \STATE $\lambda_i^{t+1} = \lambda_i^{t} + \rho_t(\theta_i^{t+1} - \theta^t)$
    \ELSE 
    \STATE $\lambda_i^{t+1} = \lambda_i^{t} + \rho_t(\overline{\theta}^{t+1} - \theta^t)$
    \ENDIF
    \end{algorithmic}
\end{multicols}
\vspace{-0.3cm}
\end{algorithm}

\subsection{Hyperparameters Selection}
We first introduce the hyperparameter selections. To fairly compare the efficiency of the benchmarks, we fix the most of hyperparameters, including the initial global learning rate, the initial learning rate, the weight decay coefficient, and the local batchsize. The other hyperparameters are selected properly on a grid search within the valid range. The specific hyperparameters of specific methods are defined in the experiments. We report the corresponding selections of their best performance, which is summarized in the following Table~\ref{tb:hyper}.

\begin{table}[h]
\centering
\vspace{-0.2cm}
\caption{Hyperparameters selections of benchmarks.}
\small
\scalebox{0.8}{
\begin{tabular}{|c|c|c|c|c|c|c|c|}
\toprule
 & Grid Search & FedAvg & FedCM & SCAFFOLD & FedSAM & FedDyn & FedSpeed \\
\midrule
global learning rate & [0.1, 1.0] & \multicolumn{6}{c|}{1.0} \\
local learning rate & [0.01, 0.1, 1.0] & \multicolumn{6}{c|}{0.1} \\
weight decay & [0.0001, 0.001, 0.01] & \multicolumn{6}{c|}{0.001} \\
\midrule
learning rate decay & [0.995, 0.998, 0.9998, 1] & 0.998 & 0.998 & 0.998 & 0.998 & 0.998\ /\ 1 & 0.998\ /\ 1 \\
batchsize & [10, 20, 50, 100] & 50 & 50 & 50 & 50 & 20\ /\ 50 & 50 \\
\midrule
client-level momentum & [0.05, 0.1, 0.2, 0.5] & & 0.1 & & & & \\
proxy coefficient & [0.001, 0.01, 0.1, 1.0] & & & & & 0.1\ /\ 0.001  & 0.1 \\
SAM perturbation & [0.01, 0.05, 0.1, 0.5] & & & & 0.05 & & 0.1 \\
SAM eps & [1e-2, 1e-5, 1e-8] & & & & 1e-2 & & all \\
\bottomrule
\end{tabular}}
\label{tb:hyper}
\end{table}

The global learning rate is fixed in our experiments. Though \citet{asad2020fedopt} propose to adopt the double learning rate decay both on the global server and local client can make training more efficient, we find some methods will easily over-fit under a global learning rate decay. For the weight decay coefficient, we recommend to adopt $0.001$. Actually, we find that adjusting it still can improve the performance of some specific methods. One of the most important hyperparameters is learning rate decay. Generally, we use $d^T = 1\ /\ 0.8\ /\ 0.1\ /\ 0.005$ to select the proper $d$ as a decayed coefficient, which means the level of the initial learning rate will be decayed after $T$ communication rounds. We follow previous studies to fix the learning rate within the communication round. Another important hyperparameter is the batchsize. In our experiments, we fixed the local interval which means the fixed local iterations. Due to the sample size being fixed, different batchsizes mean different training epochs. Generally speaking, training epochs always decide the optimization level on the local optimum. A too-long interval always leads to overfitting to the local dataset and falling into the serious ``client drift'' problem~\citep{karimireddy2020scaffold}. In our experiments, we control it to stop when the local client is optimized well enough, i.e., a proper loss or accuracy. For the specific hyperparameters for each method, we directly grid search from the selections. For \textit{A-FedPDSAM} method, the selections are consistent with the \textit{FedDyn} and \textit{FedSpeed} methods except for the learning rate decay is fixed as 0.998. To alleviate the ``dual drift'', we properly reduce the proxy coefficient for the \textit{FedDyn} on those difficult tasks to maintain stable training.

\subsection{Dataset and Splitting}
\begin{figure*}[t]
\vskip -0.05in
\centering
    \subfigure[IID splitting.]{
        \includegraphics[width=0.33\textwidth]{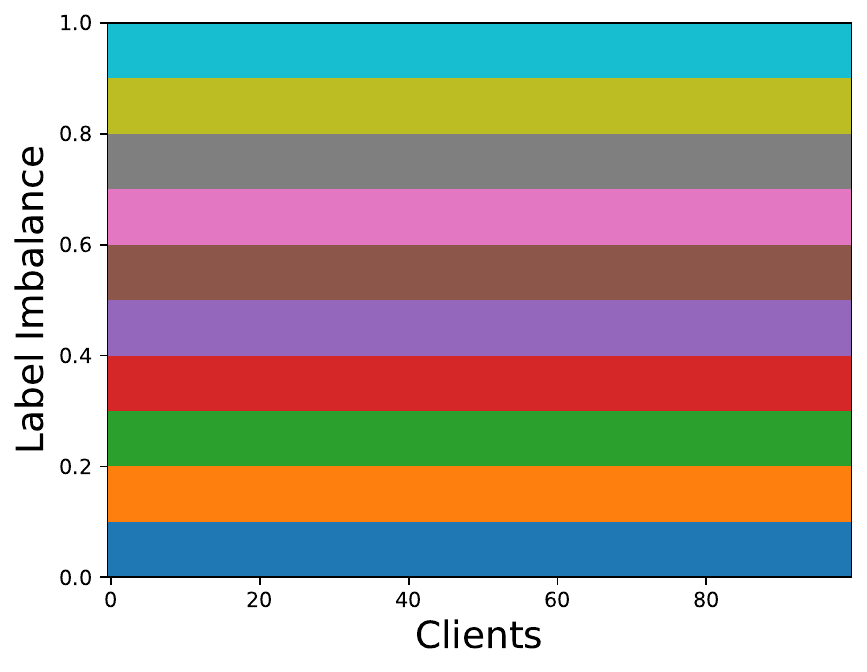}}\!\!\!\!
    \subfigure[Dir-1.0 splitting.]{
	\includegraphics[width=0.33\textwidth]{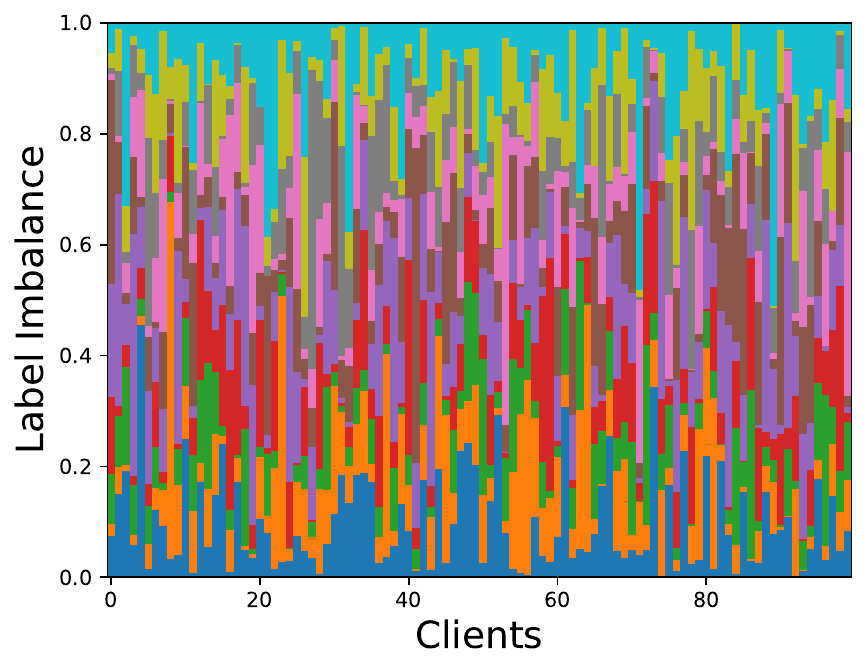}}\!\!\!\!
    \subfigure[Dir-0.1 splitting.]{
	\includegraphics[width=0.33\textwidth]{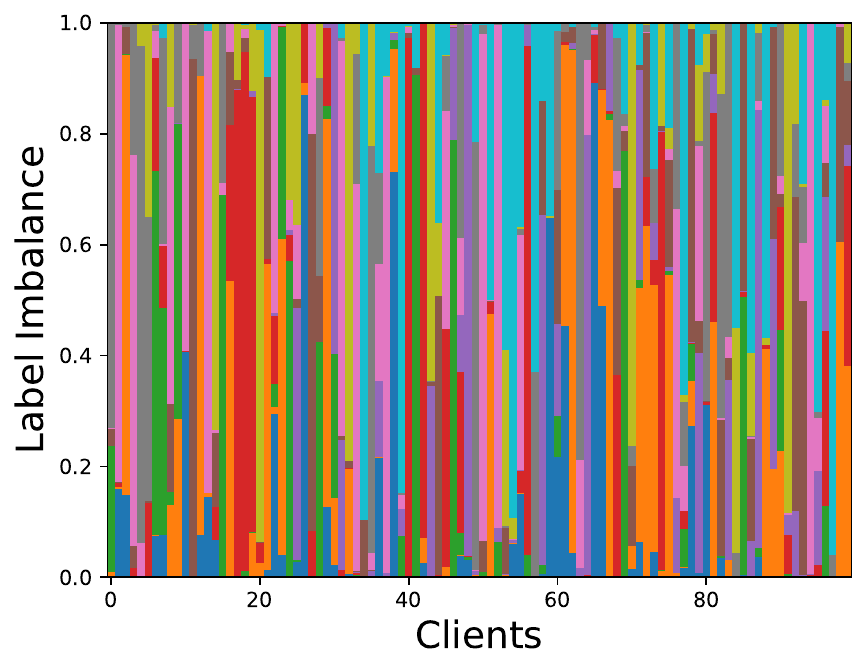}}
\vskip -0.1in
\caption{Label ratios under different splitting manners. Different color means the samples are in different labels. We show the different splitting distributions on a total of 100 clients.}
\label{label imbalance}
\vskip -0.1in
\end{figure*}
We use the CIFAR-10\ /\ 100 datasets to validate the efficiency, which is widely used to verify the federated efficiency~\citep{mcmahan2017communication,karimireddy2020scaffold,li2020federated,xu2021fedcm,acar2021federated,gong2022fedadmm,wang2022fedadmm,fan2022fedskip,caldarola2022improving,sun2023fedspeed,sun2023dynamic,li2023dfedadmm,fan2024locally,fan2024federated}. The total dataset of both contain 50,000 training samples and 10,000 test samples of 10\ /\ 100 classes. Each is a colorful image in the size of 32$\times$32. We follow the training as the vanilla SGD to add data augmentation without additional operations.

\textbf{Label Heterogeneity.} For the dataset splitting, we adopt the label imbalanced splitting under the Dirichlet manners. We first generate a distribution matrix and then generate a random number to sample each data. To further enhance the local heterogeneity, we also adopt the sampling with replacement, which means one data sample may exist on several clients simultaneously. This is more related to real-world scenarios because of the local unknown dataset distribution. We generate the matrices in Fig.~\ref{label imbalance} to show their distribution differences. We can clearly see that Dir-0.1 introduces a very large heterogeneity in that there is almost one dominant class in a client. Dir-1.0 handles approximately 3 classes in one client. Actually, in practical scenarios, label imbalance may be the most popular heterogeneity because we often expect both the local task and local dataset to be still unknown to others. For instance, client $i$ may be an expert on task $A$, and client $j$ may be an expert on another task $B$. To combine the tasks $A$ and $B$, if we can directly merge them with a training policy without beforehand knowing the tasks, then it must further enhance local privacy.

\textbf{Brightness Heterogeneity.} To further simulate the real-world manners, we allow different clients to change the brightness and ratios of different color channels. This corresponds to different sources of data collected by different local clients. We show some samples in Fig.~\ref{brightness imbalance} to show how different they are on different clients. Specifically, after splitting the local dataset, we will calculate the average brightness of each local dataset. Then we generate a noise from Gaussian to randomly change the brightness and one of the color channels, which means that even similar samples have large color differences on different clients.
\begin{figure}[h]
\vskip -0.1in
\centering
    \subfigure{
        \includegraphics[width=0.9\textwidth]{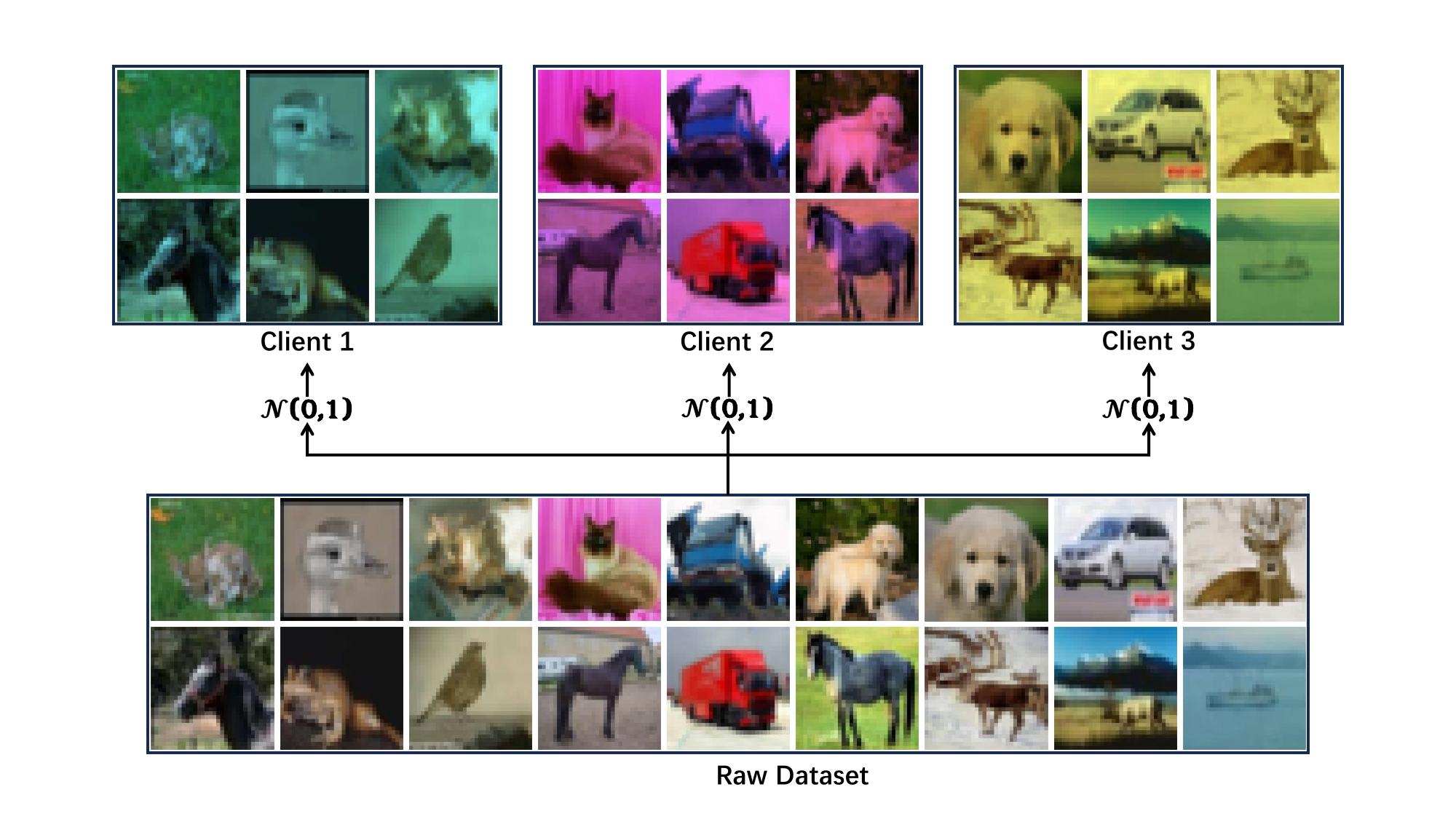}}
\vskip -0.1in
\caption{Introducing the brightness biases to different clients. We calculate the average brightness to control each sample to a proper state. Each client will randomly sample a Gaussian noise to perturb the local samples.}
\label{brightness imbalance}
\vskip -0.1in
\end{figure}

\subsection{Additional Experiments}
\subsubsection{Some Training Curves}
\begin{figure}[h]
\vskip -0.05in
\centering
    \subfigure[Loss IID.]{
        \includegraphics[width=0.23\textwidth]{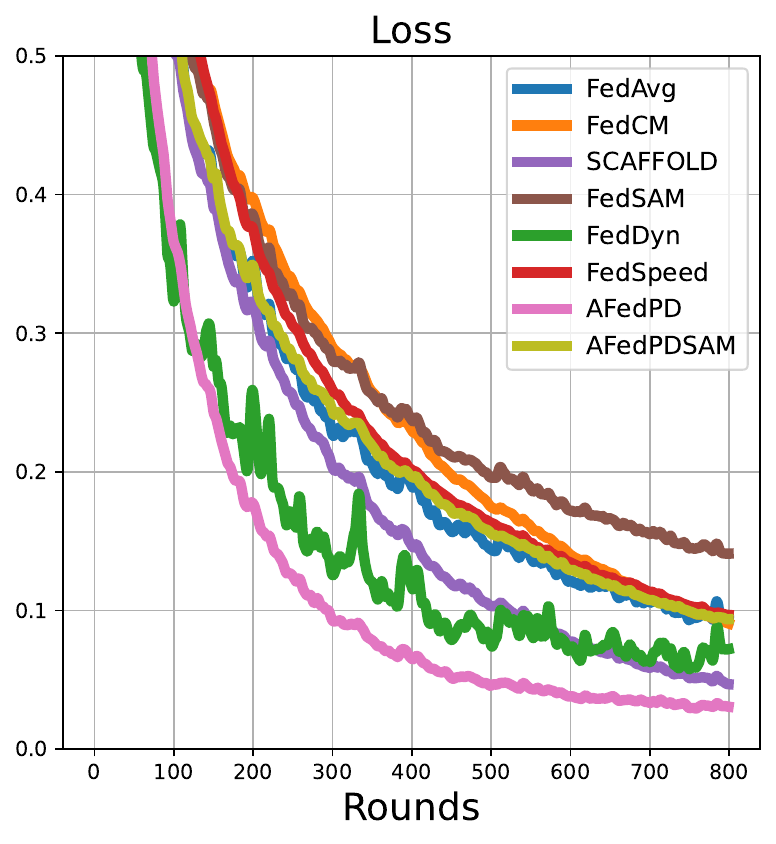}}
    \subfigure[Loss Dir-1.0.]{
	\includegraphics[width=0.23\textwidth]{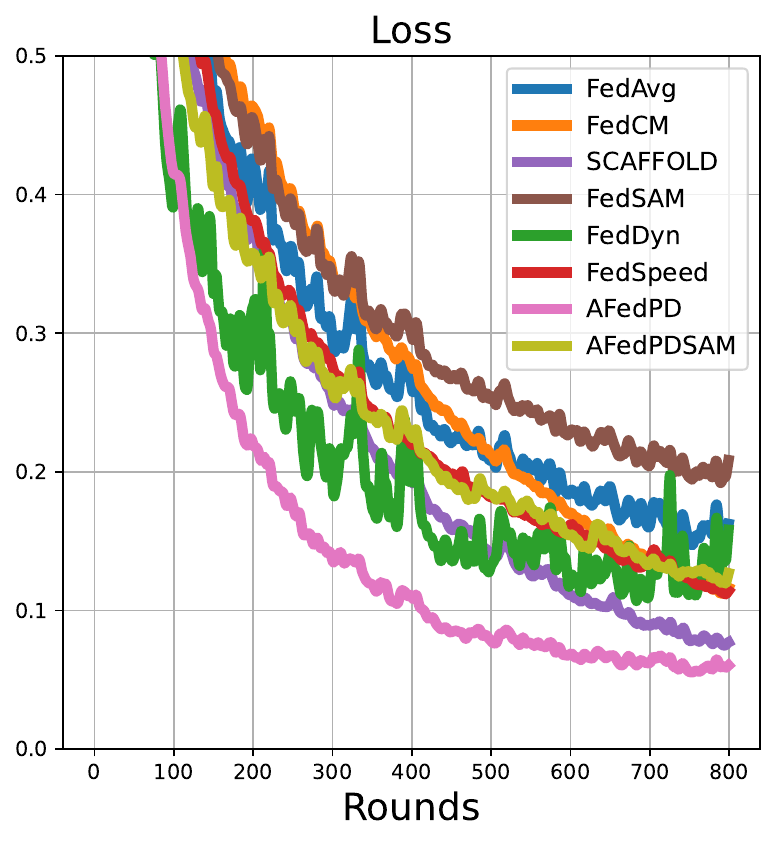}}
    \subfigure[Loss Dir-0.6.]{
        \includegraphics[width=0.23\textwidth]{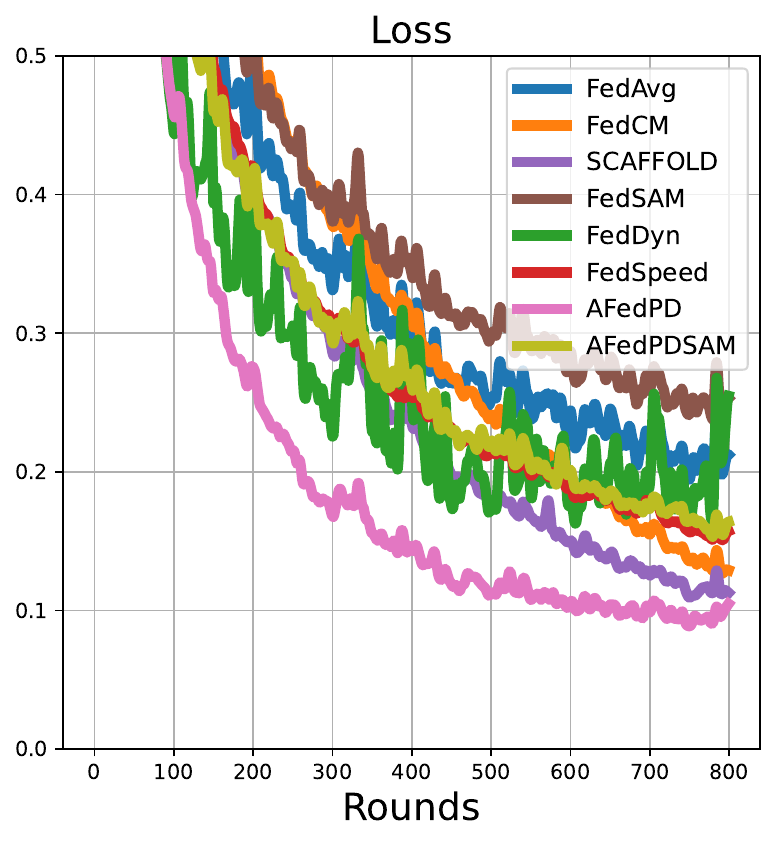}}
    \subfigure[Loss Dir-0.1.]{
	\includegraphics[width=0.23\textwidth]{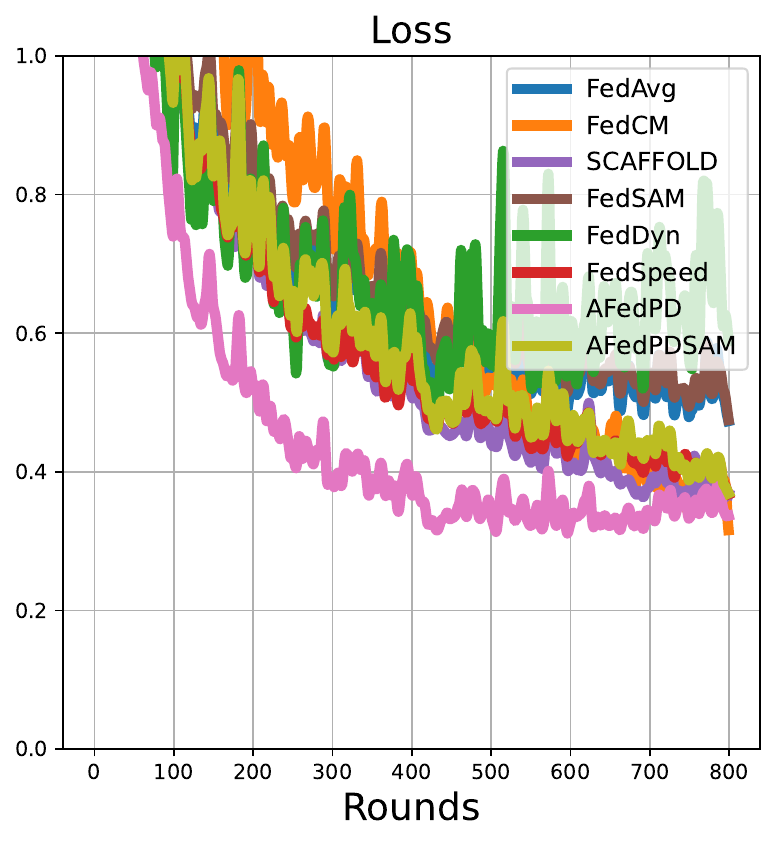}}
    \subfigure[Acc IID.]{
        \includegraphics[width=0.23\textwidth]{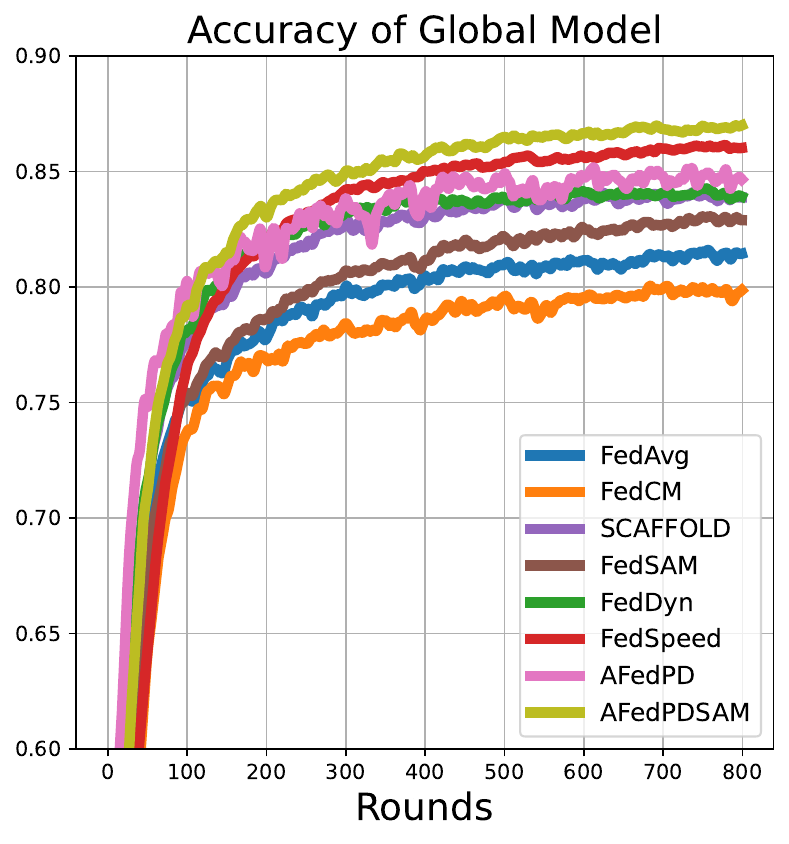}}
    \subfigure[Acc Dir-1.0.]{
	\includegraphics[width=0.23\textwidth]{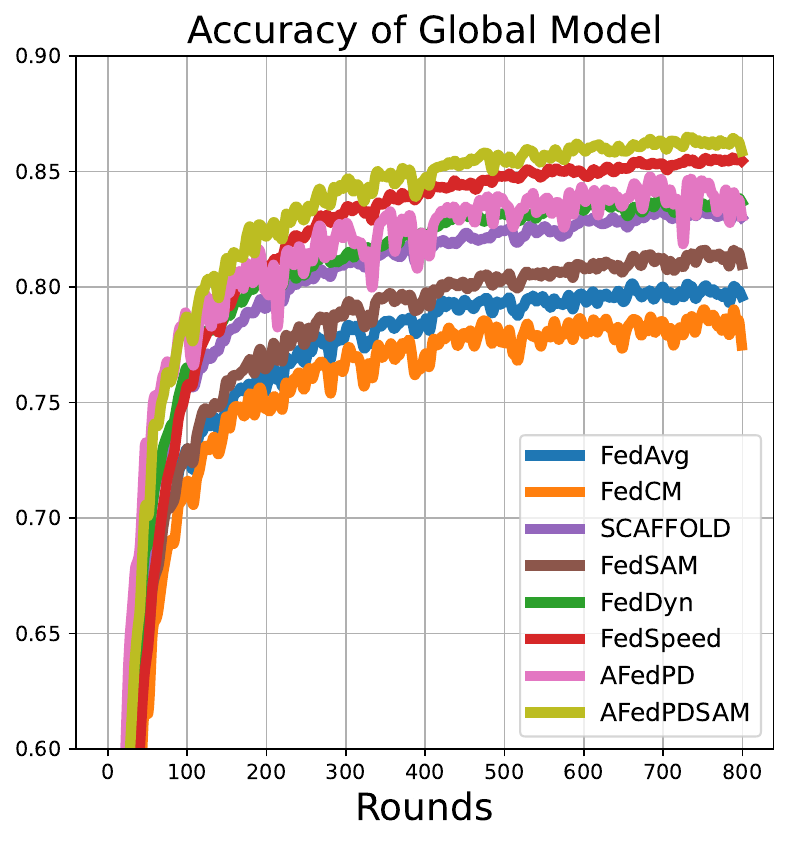}}
    \subfigure[Acc Dir-0.6.]{
        \includegraphics[width=0.23\textwidth]{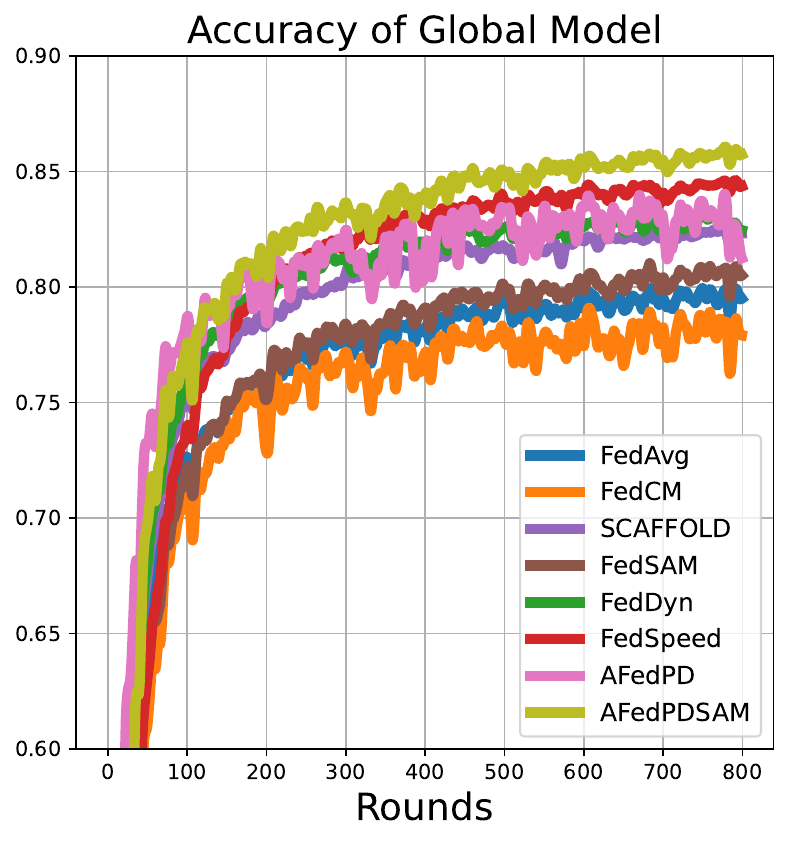}}
    \subfigure[Acc Dir-0.1.]{
	\includegraphics[width=0.23\textwidth]{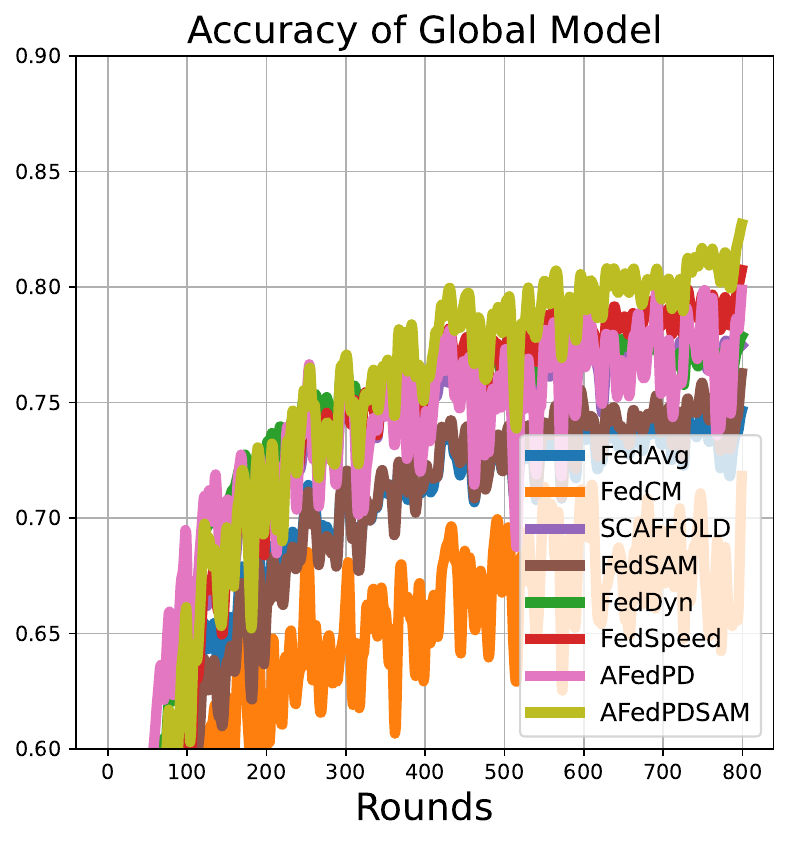}}
\vskip -0.1in
\caption{Loss and accuracy curves in the experiments.}
\label{curves}
\vskip -0.1in
\end{figure}
In Figure~\ref{curves} we can see some experiment curves. From the (a), (b), (c), and (d), we can clearly see that the \textit{A-FedPD} method achieves the fastest convergence rate on each setup. Due to the virtual update on the dual variables, we can treat those unparticipated clients as virtually trained ones. This empowers the \textit{A-FedPD} method with a great convergence speed. Especially on the IID dataset, due to the local datasets being similar~(drawn from a global distribution), the expectation of the updated averaged models is the same as that of the updated local model with lower variance. Then we could approximate the local dual update as the global one. This greatly speeds up the training time. We also can see the fast rate of the \textit{FedDyn} method. However, due to its lagging dual update, it will be slower than the \textit{A-FedPD} method. As for the \textit{SAM} variant, it introduces an additional perturbation step that could avoid overfitting. Therefore, its loss does not drop quickly because of the additional ascent step.

From the (e), (f), (g), and (h), we can clearly see the improvements of \textit{A-FedPD} and \textit{A-FedPDSAM} methods. From the basic version, \textit{A-FedPD} could achieve higher performance due to the virtual dual updates. After incorporating \textit{SAM}, local clients could efficiently alleviate overfitting. The global model becomes more stable and could achieve the SOTA results. We will learn the consistency performance in the next part.

\newpage
\subsubsection{Primal Residual and Dual Residual}
In the \textit{primal dual} methods, due to the joint solution on both the primal and dual problems, it leads to an issue that both residuals should maintain a proper ratio. Therefore, the quantity of global updates in the training could be considered as a residual for the dual feasibility condition. The same, the constraint itself could be considered as the primal residual. Then we consider Eq.(\ref{consensus_objective}) objectives. The constraint is the global consensus level $\theta - \theta_i$ and the global update is $\theta^{t+1} - \theta^t$. To generally express them, we define the primal residual $p_r^t = \frac{1}{C}\sum_{i}\Vert \theta^t - \theta_i^t\Vert$ and the dual residual $d_r^t = \rho\Vert \theta^{t} - \theta^{t-1}\Vert$. Actually, the primal residual could be considered as the consistency, and the dual residual could be considered as the update. In the training, if we focus more on the dual residual, it leads to a fast convergence on an extremely biased objective that is far away from the true optimal. If we focus more on the primal residual, the local training cannot perform normally for its strong regularizations. Therefore, we must maintain stable trends on both $p_r$ and $d_r$ to implement stable training. In this part, we study the relationships between primal and dual residuals.
\begin{figure}[h]
\vskip -0.05in
\centering
    \subfigure[Residuals on the CIFAR-10 test.]{
        \includegraphics[width=0.9\textwidth]{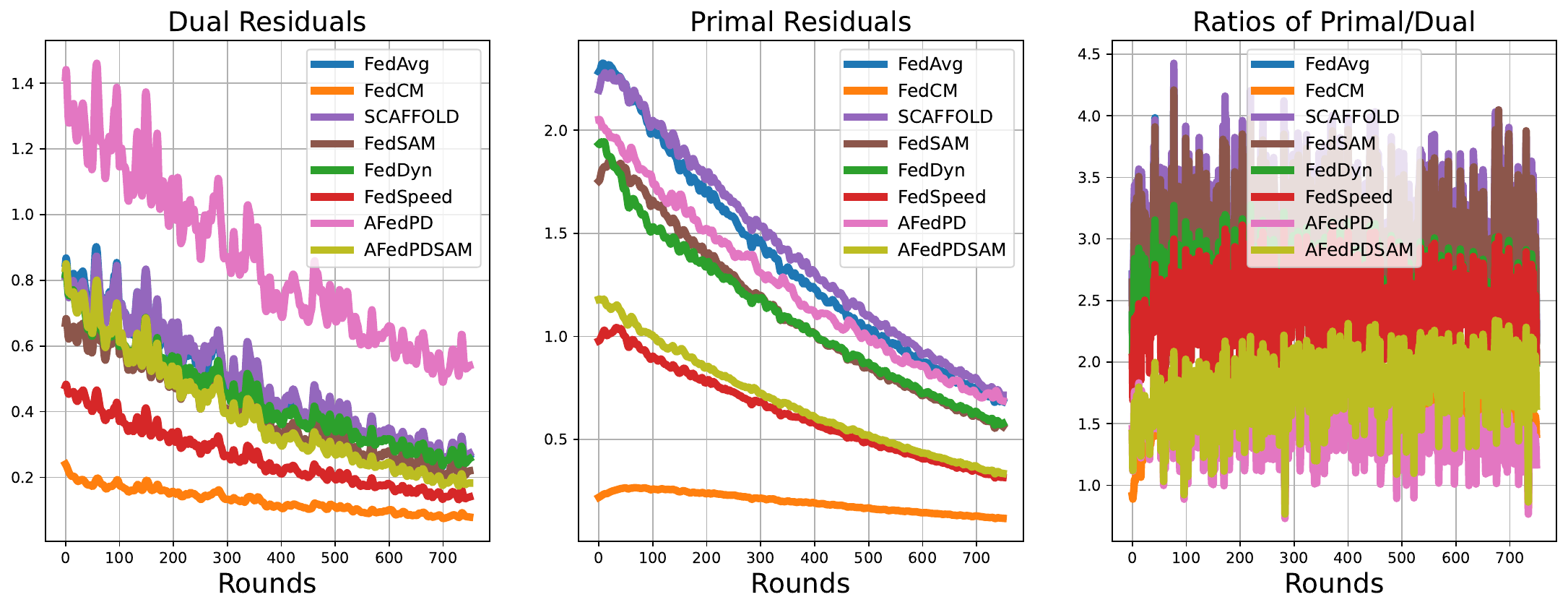}}
    \subfigure[Residuals on the CIFAR-100 test.]{
        \includegraphics[width=0.9\textwidth]{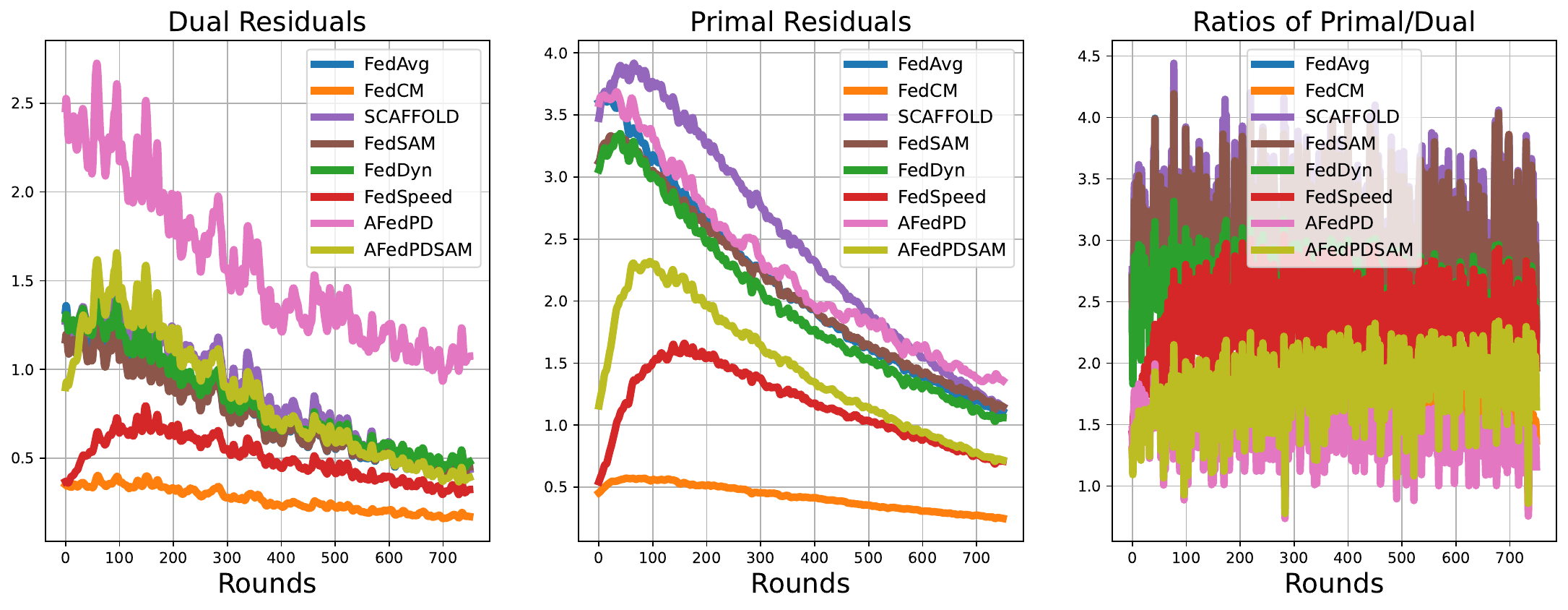}}
\vskip -0.1in
\caption{Loss and accuracy curves in the experiments.}
\label{residuals}
\vskip -0.1in
\end{figure}

As shown in Figure~\ref{residuals}, we can clearly see the lower stable ratio between the primal and dual residuals on the \textit{A-FedPD} and \textit{A-FedPDSAM} methods, which indicates that both the primal training and dual training are performed well simultaneously. However, the \textit{federated primal average}-based methods, i.e., \textit{FedAvg} and \textit{SCAFFOLD}, focus more on the primal training which leads to the dual residuals are too small~(dual residuals measure the global update; primal residuals measure the global consistency).

\newpage
\subsubsection{Communication Efficiency}
In this part, we learn the general communication efficiency. We fix all local intervals, clients, participation ratios, and hyperparameters for fairness. We select the different targets as the objective to calculate the communication efficiency.

\begin{table}[h]
\centering
\vspace{-0.2cm}
\caption{Communication rounds required to achieve the target accuracy.}
\vspace{0.05cm}
\small
\renewcommand\arraystretch{1.5}
\begin{tabular}{|c|ccccccc|}
\toprule
CIFAR-10 & FedAvg & SCAFFOLD & FedSAM & FedDyn & FedSpeed & A-FedPD & A-FedPDSAM \\
\midrule
81\% & 501 & 207 & 345 & 156 & 170 & 131 & 156 \\
\cmidrule(lr){2-8}
     & 1$\times$ & 2.42$\times$ & 1.45$\times$ & 3.21$\times$ & 2.94$\times$ & 3.82$\times$ & 3.21$\times$ \\
\midrule
83.5\% & - & 468 & - & 355 & 268 & 252 & 218 \\
\cmidrule(lr){2-8}
       & - & 1$\times$ & - & 1.31$\times$ & 1.74$\times$ & 1.85$\times$ & 2.14$\times$ \\
 \midrule
 CIFAR-100 & FedAvg & SCAFFOLD & FedSAM & FedDyn & FedSpeed & A-FedPD & A-FedPDSAM \\
\midrule
40\% & 772 & 162 & 572 & 173 & 222 & 123 &  126\\
\cmidrule(lr){2-8}
     & 1$\times$ & 4.76$\times$ & 1.34$\times$ & 4.46$\times$ & 3.47$\times$ & 6.27$\times$ & 6.12$\times$ \\
\midrule
49\% & - & 677 & - & 495 & 421 & 303 & 220 \\
\cmidrule(lr){2-8}
     & - & 1$\times$ & - & 1.36$\times$ & 1.61$\times$ & 2.23$\times$ & 3.07$\times$ \\
\bottomrule
\end{tabular}
\label{tb:communication efficiency}
\end{table}

Table~\ref{tb:communication efficiency} shows the communication efficiency among different methods on the CIFAR-10\ /\ 100 dataset trained with LeNet. We calculate the first communication round index of achieving the target accuracy for comparison. We can clearly see that the communication rounds required for training are saved a lot on the proposed \textit{A-FedPD} and \textit{A-FedPDSAM} methods. It generally accelerates the training process by at least 3$\times$ on CIFAR-10 and 6$\times$ on CIFAR-100 than the vanilla \textit{FedAvg} method. Compared with the other benchmarks, our proposed method performs stably and efficiently.

\subsubsection{Wall clock Time for Training Costs}
\label{wall=clock}
Then we further study the wall clock time required in the training. We provide the experimental setups as follows.

Platform: Pytorch 2.0.1 \quad \quad
Cuda: 11.7 \quad \quad
Hardware: NVIDIA GeForce RTX 2080 Ti
\begin{table}[h]
\centering
\vspace{-0.2cm}
\caption{Wall clock time required to train 1 round~(100 iterations) on \textbf{ResNet}.}
\vspace{0.05cm}
\small
\renewcommand\arraystretch{1.5}
\begin{tabular}{|c|ccccccc|}
\toprule
 & FedAvg & SCAFFOLD & FedSAM & FedDyn & FedSpeed & A-FedPD & A-FedPDSAM \\
\midrule
s\ /\ round & 22.06 & 36.01 & 39.21 & 34.03 & 56.10 & 37.61 & 60.63 \\
\cmidrule(lr){2-8}
            & 1$\times$ & 1.63$\times$ & 1.77$\times$ & 1.54$\times$ & 2.54$\times$ & 1.70$\times$ & 2.74$\times$ \\
\bottomrule
\end{tabular}
\label{tb:wall-clock time on resnet}
\end{table}

\begin{table}[h]
\centering
\vspace{-0.4cm}
\caption{Wall clock time required to train 1 round~(100 iterations) on \textbf{LeNet}.}
\vspace{0.05cm}
\small
\renewcommand\arraystretch{1.5}
\begin{tabular}{|c|ccccccc|}
\toprule
 & FedAvg & SCAFFOLD & FedSAM & FedDyn & FedSpeed & A-FedPD & A-FedPDSAM \\
\midrule
s\ /\ round & 9.82 & 11.42 & 12.53 & 11.76 & 13.61 & 11.71 & 13.90 \\
\cmidrule(lr){2-8}
            & 1$\times$ & 1.16$\times$ & 1.27$\times$ & 1.19$\times$ & 1.38$\times$ & 1.19$\times$ & 1.41$\times$ \\
\bottomrule
\end{tabular}
\label{tb:wall-clock time on lenet}
\end{table}

Actually, the LeNet is too small for the GPU and the training time does not achieve the capacity, which leads to the close time costs in Table~\ref{tb:wall-clock time on lenet}. We recommend referring to the cost ratio on the ResNet~(Table~\ref{tb:wall-clock time on resnet}), which is much closer to the real algorithmic efficiency.

\newpage
\section{Proofs.}
\label{ap:proof}

In this part, we mainly show the proof details of the main theorems in this paper. We will reiterate the background details in Sec.\ref{ap:preliminaries}. Then we introduce the important lemmas used in the proof in Sec.\ref{ap:important lemmas}, and show the proof details of the main theorems in  Sec.\ref{ap:proofs of theorem}.

\subsection{Preliminaries}
\label{ap:preliminaries}

Here we reiterate the background details in the proofs. To understand the stability efficiency, we follow \citet{hardt2016train,lei2020fine,zhou2021towards,sun2023understanding} to adopt the uniform stability analysis in our analysis. Through refining the local subproblem of the solution of the Augmented Lagrangian objective, we provide the error term of the primal and dual terms and their corresponding complexity bound in FL.

Before introducing the important lemmas, we re-summarize the pipelines in the analysis. According to the federated setups, we assume a global server coordinates a set of local clients $\mathcal{C}\triangleq\left\{i\right\}_{i=1}^C$ to train one model. Each client has a private dataset $\mathcal{S}_i=\left\{\zeta_{ij}\right\}_{j=1}^S$. We assume that the global joint dataset is the union of $\{\mathcal{S}_1\cup\mathcal{S}_2\cup\cdots\cup\mathcal{S}_C\}$. To study its stability, we assume there is another global joint dataset that contains at most one different data sample from $\mathcal{C}$. Let the index of the different pair be $(i^\star,j^\star)$. We train two global models $\theta^T$ and $\hat{\theta}^T$ on these two global joint datasets respectively and gauge their gaps during the training process.

Then we rethink the local solution of the local augmented Lagrangian function. According to the basic algorithm, we have:
\begin{equation}
    \mathcal{L}_i(\theta, \lambda_i^t, \theta^t) = f_i(\theta) + \langle\lambda_i^t, \theta - \theta^t\rangle + \frac{\rho}{2}\Vert \theta - \theta^t\Vert^2.
\end{equation}
To upper bound its stability without loss of generality, we consider adopting the general SGD optimizer to solve the sub-problem via total $K$ iterations with the local learning rate $\eta^t$:
\begin{equation}
\label{ap:eq:update}
    \theta_{i,k+1}^t = \theta_{i,k}^t - \eta^t\nabla \mathcal{L}_i(\theta_{i,k}^t, \lambda_i^t, \theta^t) = \theta_{i,k}^t - \eta^t\left[g_{i,k}^t + \lambda_i^t + \rho\left(\theta_{i,k}^t - \theta^t\right)\right],
\end{equation}
where $k$ is the index of local iterations~($0\leq k\leq K$).

\subsection{Optimization}
\subsubsection{Important Lemmas}
In this part, we mainly introduce some important lemmas adopted in the optimization proofs.

Motivated by \citet{acar2021federated}, we assume the local client solves the inexact solution of each local Lagrangian function, therefore we have $\nabla f_i(\theta_i^{t+1}) + \lambda_i^t + \rho(\theta_i^{t+1} - \theta^t) = e$, where $e$ can be considered as an error variable with $\Vert e \Vert^2 \leq \epsilon$. This can characterize the different solutions of the local sub-problems. We always expect the error to achieve zero. \citet{acar2021federated} only assume that the local solution is exact and this may be not possible in practice.

\begin{lemma}[\citep{acar2021federated}]
    The conditionally expected gaps between the current averaged local parameters and last averaged local parameters satisfy:
    \begin{align*}
        \mathbb{E}_t\Vert \overline{\theta}^{t+1} - \overline{\theta}^{t} \Vert^2 \leq \frac{1}{C}\sum_{i\in\mathcal{C}}\mathbb{E}_t\Vert \theta_i^{t+1} - \overline{\theta}^{t} \Vert^2.
    \end{align*}
\end{lemma}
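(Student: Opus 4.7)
The plan is to exploit the fact that $\overline{\theta}^{t+1}$ is the empirical mean over the randomly sampled active set $\mathcal{P}^t$, and then apply convexity of the squared norm before averaging out the randomness in the selection.

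First I would write the centered form
\begin{equation*}
\overline{\theta}^{t+1} - \overline{\theta}^{t} \;=\; \frac{1}{P}\sum_{i\in\mathcal{P}^t}\bigl(\theta_i^{t+1} - \overline{\theta}^{t}\bigr),
\end{equation*}
using only the algorithm's definition $\overline{\theta}^{t+1} = \tfrac{1}{P}\sum_{i\in\mathcal{P}^t}\theta_i^{t+1}$ and the fact that $\overline{\theta}^t$ does not depend on $i$. Next I would invoke Jensen's inequality (equivalently, convexity of $\Vert\cdot\Vert^2$) applied to the uniform measure on $\mathcal{P}^t$, which yields
\begin{equation*}
\bigl\Vert \overline{\theta}^{t+1} - \overline{\theta}^{t}\bigr\Vert^2 \;\leq\; \frac{1}{P}\sum_{i\in\mathcal{P}^t}\bigl\Vert \theta_i^{t+1} - \overline{\theta}^{t}\bigr\Vert^2.
\end{equation*}

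The final step is to take the conditional expectation $\mathbb{E}_t[\cdot]$ over the random selection of the active client set. Since $\mathcal{P}^t$ is drawn uniformly at random from $\mathcal{C}$ with cardinality $P$, each index $i\in\mathcal{C}$ satisfies $\Pr[i\in\mathcal{P}^t]=P/C$. Hence
\begin{equation*}
\mathbb{E}_t\!\left[\frac{1}{P}\sum_{i\in\mathcal{P}^t}\bigl\Vert \theta_i^{t+1} - \overline{\theta}^{t}\bigr\Vert^2\right] \;=\; \frac{1}{P}\sum_{i\in\mathcal{C}}\Pr[i\in\mathcal{P}^t]\,\mathbb{E}_t\bigl\Vert \theta_i^{t+1} - \overline{\theta}^{t}\bigr\Vert^2 \;=\; \frac{1}{C}\sum_{i\in\mathcal{C}}\mathbb{E}_t\bigl\Vert \theta_i^{t+1} - \overline{\theta}^{t}\bigr\Vert^2,
\end{equation*}
which is the desired inequality.

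The only subtlety I anticipate concerns notation: $\theta_i^{t+1}$ is produced by \textit{LocalTrain} only for $i\in\mathcal{P}^t$, so one must interpret the right-hand sum either (i) by adopting the virtual-update convention $\theta_i^{t+1}=\overline{\theta}^{t+1}$ for $i\notin\mathcal{P}^t$ (consistent with the D-Update step), in which case the expectation argument above is still valid since the bound only uses indices actually appearing in $\mathcal{P}^t$ before expectation; or (ii) by taking the expectation to play the role of selecting the contributing terms, exactly as the probability weighting $P/C$ does. Either reading yields the same conclusion, and no Assumption beyond the sampling mechanism is required. This is essentially a one-line Jensen plus symmetry argument, so I expect no real obstacle beyond stating the sampling convention carefully.
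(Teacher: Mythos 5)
Your proposal is correct and follows essentially the same route as the paper: the paper likewise centers the update as an average over $\mathcal{P}^t$, applies Jensen's inequality to the squared norm, and converts the sum over $\mathcal{P}^t$ into a sum over $\mathcal{C}$ via an indicator function with $\mathbb{E}_t[\mathbb{I}_i]=P/C$, which is exactly your probability-weighting step. The notational subtlety you flag about $\theta_i^{t+1}$ for $i\notin\mathcal{P}^t$ is handled in the paper by the same indicator device, so no substantive difference remains.
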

\begin{proof}
    According to the averaged randomly sampling, we have:
    \begin{align*}
        \mathbb{E}_t\Vert \overline{\theta}^{t+1} - \overline{\theta}^{t} \Vert^2 
        &= \mathbb{E}_t\Vert \frac{1}{P}\sum_{i\in\mathcal{P}^t}\theta_i^{t+1} - \overline{\theta}^{t} \Vert^2 \leq \frac{1}{P}\mathbb{E}_t\sum_{i\in\mathcal{P}^t}\Vert \theta_i^{t+1} - \overline{\theta}^{t} \Vert^2\\
        &= \frac{1}{P}\mathbb{E}_t\sum_{i\in\mathcal{C}}\Vert \theta_i^{t+1} - \overline{\theta}^{t} \Vert^2\cdot\mathbb{I}_i \leq \frac{1}{C}\sum_{i\in\mathcal{C}}\mathbb{E}_t\Vert \theta_i^{t+1} - \overline{\theta}^{t} \Vert^2.
    \end{align*}
    $\mathbb{I}_i$ is the indicator function as $\mathbb{I}_i = 1$ if $i\in\mathcal{P}^t$ else 0.
\end{proof}

\begin{lemma}
\label{R_t}
    Under Assumption~\ref{as:smoothness} and let the local solution be an $\epsilon$-inexact solution, the conditionally expected averaged local updates satisfy:
    \begin{equation}
        \left(1 - \frac{4L^2}{\rho^2}\right)\frac{1}{C}\sum_{i\in\mathcal{C}}\mathbb{E}_t\Vert\theta_i^{t+1} - \overline{\theta}^{t}\Vert^2 \leq \frac{8L^2}{\rho^2}\frac{1}{C}\sum_{i\in\mathcal{C}}\mathbb{E}_t\Vert \theta_i^{t} - \overline{\theta}^{t}\Vert^2 + \frac{4L^2}{\rho^2}\mathbb{E}_t\Vert\nabla f(\overline{\theta}^{t})\Vert^2 + \frac{4\epsilon}{\rho^2}.
    \end{equation}
\end{lemma}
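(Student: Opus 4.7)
The plan is to derive a single identity expressing $\rho(\theta_i^{t+1} - \overline{\theta}^{t})$ as a sum of three bounded pieces plus error, then square, apply smoothness, average over $i\in\mathcal{C}$, and absorb a self-referential drift term. The identity comes from combining the $\epsilon$-inexactness condition at two consecutive rounds with the global update rule of Algorithm~\ref{algorithm}.

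First, I would exploit the aggregation step $\theta^{t+1} = \overline{\theta}^{t+1} + \tfrac{1}{\rho}\overline{\lambda}^{t+1}$ to write $\theta^{t} - \overline{\theta}^{t} = \tfrac{1}{\rho}\overline{\lambda}^{t}$, so that
\begin{equation*}
\rho(\theta_i^{t+1} - \overline{\theta}^{t}) = \rho(\theta_i^{t+1} - \theta^{t}) + \overline{\lambda}^{t}.
\end{equation*}
The $\epsilon$-inexactness at round $t$ gives $\rho(\theta_i^{t+1} - \theta^{t}) = -\nabla f_i(\theta_i^{t+1}) - \lambda_i^{t} + e_i^{t}$ with $\Vert e_i^{t}\Vert^{2} \leq \epsilon$. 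Applying the same condition at round $t-1$ and chaining with the dual update $\lambda_i^{t} = \lambda_i^{t-1} + \rho(\theta_i^{t} - \theta^{t-1})$ yields the telescoped form $\lambda_i^{t} = -\nabla f_i(\theta_i^{t}) + e_i^{t-1}$, whose average is $\overline{\lambda}^{t} = -\tfrac{1}{C}\sum_{j\in\mathcal{C}}\nabla f_j(\theta_j^{t}) + \overline{e}^{t-1}$. Substituting and inserting $\nabla f(\overline{\theta}^{t}) = \tfrac{1}{C}\sum_j \nabla f_j(\overline{\theta}^{t})$ by adding and subtracting produces the key identity
\begin{equation*}
\rho(\theta_i^{t+1} - \overline{\theta}^{t}) = \bigl[\nabla f_i(\theta_i^{t}) - \nabla f_i(\theta_i^{t+1})\bigr] - \nabla f(\overline{\theta}^{t}) - \frac{1}{C}\sum_{j\in\mathcal{C}}\bigl[\nabla f_j(\theta_j^{t}) - \nabla f_j(\overline{\theta}^{t})\bigr] + E_i,
\end{equation*}
where $E_i = e_i^{t} - e_i^{t-1} + \overline{e}^{t-1}$ satisfies $\Vert E_i\Vert^{2} = \mathcal{O}(\epsilon)$.

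Next, I would square the identity via $\Vert\sum_{k=1}^{4}a_k\Vert^{2} \leq 4\sum_{k=1}^{4}\Vert a_k\Vert^{2}$, then invoke Assumption~\ref{as:smoothness} to bound $\Vert\nabla f_i(\theta_i^{t}) - \nabla f_i(\theta_i^{t+1})\Vert^{2} \leq L^{2}\Vert\theta_i^{t} - \theta_i^{t+1}\Vert^{2} \leq 2L^{2}\Vert\theta_i^{t} - \overline{\theta}^{t}\Vert^{2} + 2L^{2}\Vert\theta_i^{t+1} - \overline{\theta}^{t}\Vert^{2}$, and to bound the averaged heterogeneity term by $\tfrac{L^{2}}{C}\sum_j\Vert\theta_j^{t} - \overline{\theta}^{t}\Vert^{2}$ using Jensen. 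Averaging the resulting inequality over $i\in\mathcal{C}$ creates a self-referential term of the form $\tfrac{cL^{2}}{\rho^{2}}\cdot\tfrac{1}{C}\sum_i\Vert\theta_i^{t+1}-\overline{\theta}^{t}\Vert^{2}$ on the right-hand side, which I would move to the left to obtain the stated shape $(1 - cL^{2}/\rho^{2})D_{t+1} \leq c'L^{2}D_{t}/\rho^{2} + c''\Vert\nabla f(\overline{\theta}^{t})\Vert^{2}/\rho^{2} + \mathcal{O}(\epsilon/\rho^{2})$.

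The main obstacle is bookkeeping the precise constants $4$, $8$, and $4$: they depend on whether one performs a balanced four-way Cauchy--Schwarz split or the nested grouping $\Vert(a+c)+(b+d)\Vert^{2}\leq 2\Vert a+c\Vert^{2}+2\Vert b+d\Vert^{2}$ that pairs the two smoothness-controlled pieces together before pairing with $-\nabla f(\overline{\theta}^{t})$ and $E_i$. A secondary subtlety is that the derivation $\lambda_i^{t} = -\nabla f_i(\theta_i^{t}) + e_i^{t-1}$ uses the idealized view in which every client solves its subproblem: for a client inactive at round $t-1$, Algorithm~\ref{algorithm} instead applies the virtual update $\lambda_i^{t} = \lambda_i^{t-1} + \rho(\overline{\theta}^{t} - \theta^{t-1})$, and collapsing the two cases into a single uniform identity is exactly the role played by the conditional expectation $\mathbb{E}_t$ together with the unbiasedness of $\overline{\theta}^{t+1}$ noted directly after Algorithm~\ref{algorithm}.
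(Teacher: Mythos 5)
Your proposal follows essentially the same route as the paper: the paper likewise writes $\theta_i^{t+1}-\overline{\theta}^t = (\theta_i^{t+1}-\theta^t)+\frac{1}{\rho}\overline{\lambda}^t$, substitutes the first-order $\epsilon$-inexact condition $\nabla f_i(\theta_i^{t+1})+\lambda_i^t+\rho(\theta_i^{t+1}-\theta^t)=e$, and then invokes (by citing Lemma~11 of \citet{durmus2021federated}) exactly the smoothness-plus-Jensen expansion you carry out explicitly, before moving the self-referential term to the left-hand side. Your write-up is in fact more detailed than the paper's two-line argument, and the only discrepancy is the one you already flag: a balanced four-way Cauchy--Schwarz yields constants $(1-8L^2/\rho^2)$, $12L^2/\rho^2$, and $4/\rho^2$ on the gradient term rather than the stated $4L^2/\rho^2$, $8L^2/\rho^2$, $4L^2/\rho^2$, which affects nothing downstream since only the $\mathcal{O}(L^2/\rho^2)$ orders enter the subsequent choice of $\rho$.
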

\begin{proof}
    First, we reconstruct the update of the dual variable. From the updated rules, we have $\overline{\lambda_i}^{t+1} - \overline{\lambda_i}^{t} = \rho(\overline{\theta}^{t+1} - \theta^t)$. From the first order condition of $\nabla f_i(\theta_i^{t+1}) + \lambda_i^t + \rho(\theta_i^{t+1} - \theta^t) = e$. By expanding Lemma~11 in \citep{acar2021federated}, then we have:
    \begin{align*}
        &\quad \ \frac{1}{C}\sum_{i\in\mathcal{C}}\mathbb{E}_t\Vert \theta_i^{t+1} - \overline{\theta}^{t}\Vert^2\\
        &= \frac{1}{C}\sum_{i\in\mathcal{C}}\mathbb{E}_t\Vert \theta_i^{t+1} - \theta^t + \frac{1}{\rho}\overline{\lambda}^t\Vert^2 \\
        &\leq \frac{4L^2}{\rho^2}\frac{1}{C}\sum_{i\in\mathcal{C}}\left(\mathbb{E}_t\Vert\theta_i^{t+1} - \overline{\theta}^{t}\Vert^2 + 2\mathbb{E}_t\Vert \theta_i^{t} - \overline{\theta}^{t}\Vert^2 + \mathbb{E}_t\Vert\nabla f(\overline{\theta}^{t})\Vert^2\right) + \frac{4\epsilon}{\rho^2}.
    \end{align*}
    Therefore, we can reconstruct its relationship as:
    \begin{align*}
        \left(1 - \frac{4L^2}{\rho^2}\right)\frac{1}{C}\sum_{i\in\mathcal{C}}\mathbb{E}_t\Vert\theta_i^{t+1} - \overline{\theta}^{t}\Vert^2 \leq \frac{8L^2}{\rho^2}\frac{1}{C}\sum_{i\in\mathcal{C}}\mathbb{E}_t\Vert \theta_i^{t} - \overline{\theta}^{t}\Vert^2 + \frac{4L^2}{\rho^2}\mathbb{E}_t\Vert\nabla f(\overline{\theta}^{t})\Vert^2 + \frac{4\epsilon}{\rho^2}.
    \end{align*}
    This completes the proofs.
\end{proof}

\begin{lemma}
\label{J_t}
    Under Assumption~\ref{as:smoothness}, the conditionally expected averaged local consistency satisfies:
    \begin{equation}
        \frac{1}{C}\sum_{i\in\mathcal{C}}\mathbb{E}_t\Vert \theta_i^{t+1} - \overline{\theta}^{t+1}\Vert^2\leq \frac{4}{C}\sum_{i\in\mathcal{C}}\mathbb{E}_t\Vert \theta_i^{t+1} - \overline{\theta}^{t}\Vert^2.
    \end{equation}
\end{lemma}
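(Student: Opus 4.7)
The claim is a routine ``inflation by a constant'' passing from deviations around the previous consensus $\overline{\theta}^{t}$ to deviations around the new consensus $\overline{\theta}^{t+1}$, so the strategy is to split the offset between the two means and reabsorb one piece using the immediately preceding lemma.

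Concretely, I would start from the algebraic decomposition
\begin{equation*}
\theta_i^{t+1} - \overline{\theta}^{t+1} = \bigl(\theta_i^{t+1} - \overline{\theta}^{t}\bigr) - \bigl(\overline{\theta}^{t+1} - \overline{\theta}^{t}\bigr),
\end{equation*}
apply the elementary bound $\|a-b\|^2 \leq 2\|a\|^2 + 2\|b\|^2$, and then average over $i \in \mathcal{C}$ and take the conditional expectation $\mathbb{E}_t[\cdot]$. This yields
\begin{equation*}
\frac{1}{C}\sum_{i\in\mathcal{C}}\mathbb{E}_t\|\theta_i^{t+1} - \overline{\theta}^{t+1}\|^2 \leq \frac{2}{C}\sum_{i\in\mathcal{C}}\mathbb{E}_t\|\theta_i^{t+1} - \overline{\theta}^{t}\|^2 + 2\,\mathbb{E}_t\|\overline{\theta}^{t+1} - \overline{\theta}^{t}\|^2,
\end{equation*}
which isolates the drift of the mean as the only non-trivial quantity to control.

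The second step is to invoke the preceding lemma, which already tells us that $\mathbb{E}_t\|\overline{\theta}^{t+1} - \overline{\theta}^{t}\|^2 \leq \frac{1}{C}\sum_{i\in\mathcal{C}}\mathbb{E}_t\|\theta_i^{t+1} - \overline{\theta}^{t}\|^2$ (this is where the randomness of the active set $\mathcal{P}^t$ and the convention that identifies $\theta_i^{t+1}$ with the appropriate virtual surrogate for $i\notin\mathcal{P}^t$ come in, but that work has been done for us). Substituting this into the bound above turns the coefficient $2+2=4$ into the desired right-hand side.

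\textbf{Anticipated difficulty.} There is essentially no obstacle in the algebra; the only point one must be careful about is making sure that the two occurrences of ``the $\theta_i^{t+1}$'s'' refer to the same sequence, so that the preceding lemma's right-hand side and this lemma's right-hand side match exactly and the constant $4$ is clean (rather than something like $2 + 2\cdot(\text{sampling factor})$). Once that bookkeeping is consistent, the lemma is a two-line consequence of the triangle inequality plus Lemma above.
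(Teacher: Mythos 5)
Your proposal is correct and matches the paper's own proof essentially line for line: the same decomposition $\theta_i^{t+1}-\overline{\theta}^{t+1} = (\theta_i^{t+1}-\overline{\theta}^{t}) + (\overline{\theta}^{t}-\overline{\theta}^{t+1})$, the same Young-type inequality giving the factor $2+2$, and the same invocation of the preceding lemma to absorb $\mathbb{E}_t\Vert\overline{\theta}^{t+1}-\overline{\theta}^{t}\Vert^2$ into the averaged sum. Nothing further is needed.
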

\begin{proof}
    According to the update rules, we have:
    \begin{align*}
        \frac{1}{C}\sum_{i\in\mathcal{C}}\mathbb{E}_t\Vert \theta_i^{t+1} - \overline{\theta}^{t+1}\Vert^2
        &= \frac{1}{C}\sum_{i\in\mathcal{C}}\mathbb{E}_t\Vert \theta_i^{t+1} - \overline{\theta}^{t} + \overline{\theta}^{t} - \overline{\theta}^{t+1}\Vert^2\\
        &\leq \frac{2}{C}\sum_{i\in\mathcal{C}}\mathbb{E}_t\Vert \theta_i^{t+1} - \overline{\theta}^{t}\Vert^2 + 2\mathbb{E}_t\Vert\overline{\theta}^{t} - \overline{\theta}^{t+1}\Vert^2\\
        &\leq \frac{2}{C}\sum_{i\in\mathcal{C}}\mathbb{E}_t\Vert \theta_i^{t+1} - \overline{\theta}^{t}\Vert^2 + \frac{2}{C}\sum_{i\in\mathcal{C}}\mathbb{E}_t\Vert \theta_i^{t+1} - \overline{\theta}^{t} \Vert^2\\
        &\leq \frac{4}{C}\sum_{i\in\mathcal{C}}\mathbb{E}_t\Vert \theta_i^{t+1} - \overline{\theta}^{t}\Vert^2.
    \end{align*}
    This completes the proofs.
\end{proof}

\subsubsection{Proofs}
According to the smoothness, we take the conditional expectation on round $t$ and expand the global function as:
\begin{align*}
    &\quad \ \mathbb{E}_t\left[f(\overline{\theta}^{t+1})\right] - f(\overline{\theta}^{t})\\
    &\leq \frac{L}{2}\mathbb{E}_t\Vert\overline{\theta}^{t+1} - \overline{\theta}^{t} \Vert^2 + \mathbb{E}_t\langle\nabla f(\overline{\theta}^{t}), \overline{\theta}^{t+1} - \overline{\theta}^{t}\rangle\\
    &= \frac{L}{2}\mathbb{E}_t\Vert\overline{\theta}^{t+1} - \overline{\theta}^{t} \Vert^2 + \mathbb{E}_t\langle\nabla f(\overline{\theta}^{t}), \frac{1}{C}\sum_{i\in\mathcal{C}}\theta_i^{t+1} - \overline{\theta}^{t}\rangle\\
    &= \frac{L}{2}\mathbb{E}_t\Vert\overline{\theta}^{t+1} - \overline{\theta}^{t} \Vert^2 + \mathbb{E}_t\langle\nabla f(\overline{\theta}^{t}), \frac{1}{C}\sum_{i\in\mathcal{C}}\left(\theta_i^{t+1} - \theta^t\right) + \theta^t - \overline{\theta}^{t}\rangle\\
    &= \frac{L}{2}\mathbb{E}_t\Vert\overline{\theta}^{t+1} - \overline{\theta}^{t} \Vert^2 - \mathbb{E}_t\langle\nabla f(\overline{\theta}^{t}), \frac{1}{C}\sum_{i\in\mathcal{C}}\frac{1}{\rho}\left(\nabla f_i(\theta_i^{t+1}) + \lambda_i^t - e\right) - \frac{1}{\rho}\overline{\lambda}^t\rangle\\
    &= \frac{L}{2}\mathbb{E}_t\Vert\overline{\theta}^{t+1} - \overline{\theta}^{t} \Vert^2 - \mathbb{E}_t\langle\nabla f(\overline{\theta}^{t}), \frac{1}{C}\sum_{i\in\mathcal{C}}\frac{1}{\rho}\nabla f_i(\theta_i^{t+1})\rangle\\  
    &\leq \frac{L}{2}\mathbb{E}_t\Vert\overline{\theta}^{t+1} - \overline{\theta}^{t} \Vert^2 + \frac{1}{2\rho}\mathbb{E}_t\Vert \nabla f(\overline{\theta}^{t}) - \frac{1}{C}\sum_{i\in\mathcal{C}}\nabla f_i(\theta_i^{t+1})\Vert^2 - \frac{1}{2\rho}\mathbb{E}_t\Vert \nabla f(\overline{\theta}^{t})\Vert^2\\
    &\leq \frac{L}{2}\mathbb{E}_t\Vert\overline{\theta}^{t+1} - \overline{\theta}^{t} \Vert^2 + \frac{1}{2\rho}\frac{1}{C}\sum_{i\in\mathcal{C}}\mathbb{E}_t\Vert \nabla f_i(\overline{\theta}^{t}) - \nabla f_i(\theta_i^{t+1})\Vert^2 - \frac{1}{2\rho}\mathbb{E}_t\Vert \nabla f(\overline{\theta}^{t})\Vert^2\\
    &\leq \frac{L}{2}\mathbb{E}_t\Vert\overline{\theta}^{t+1} - \overline{\theta}^{t} \Vert^2 + \frac{L^2}{2\rho}\frac{1}{C}\sum_{i\in\mathcal{C}}\mathbb{E}_t\Vert \overline{\theta}^{t} - \theta_i^{t+1}\Vert^2 - \frac{1}{2\rho}\mathbb{E}_t\Vert \nabla f(\overline{\theta}^{t})\Vert^2\\
    &\leq \frac{L}{2}\left(1+\frac{L}{\rho}\right)\frac{1}{C}\sum_{i\in\mathcal{C}}\mathbb{E}_t\Vert \overline{\theta}^{t} - \theta_i^{t+1}\Vert^2 - \frac{1}{2\rho}\mathbb{E}_t\Vert \nabla f(\overline{\theta}^{t})\Vert^2.
\end{align*}
To simplify the expression, we define $R_t = \frac{1}{C}\sum_{i\in\mathcal{C}}\mathbb{E}_t\Vert\theta_i^{t+1} - \overline{\theta}^{t}\Vert^2$, $J_t = \frac{1}{C}\sum_{i\in\mathcal{C}}\mathbb{E}_t\Vert \theta_i^{t} - \overline{\theta}^{t}\Vert^2$. Actually from Lemma~\ref{R_t} and \ref{J_t}, we can reconstruct the relationship as:
\begin{align*}
\begin{cases}
    \mathbb{E}_t\left[f(\overline{\theta}^{t+1})\right] &\leq f(\overline{\theta}^{t}) + \frac{L}{2}\left(1+\frac{L}{\rho}\right)R_t - \frac{1}{2\rho}\mathbb{E}_t\Vert \nabla f(\overline{\theta}^{t})\Vert^2,\\
    \left(1 - \frac{4L^2}{\rho^2}\right)R_t &\leq \frac{32L^2}{\rho^2}R_{t-1} + \frac{4L^2}{\rho^2}\mathbb{E}_t\Vert\nabla f(\overline{\theta}^{t})\Vert^2 + \frac{4\epsilon}{\rho^2},
\end{cases}
\end{align*}
Let the second inequality be multiplied by $q$ and add it to the first, we have:
\begin{align*}
    &\quad \ \mathbb{E}_t\left[f(\overline{\theta}^{t+1})\right] + \left[q\left(1 - \frac{4L^2}{\rho^2}\right) - \frac{L}{2}\left(1+\frac{L}{\rho}\right)\right]R_t \\
    &\leq f(\overline{\theta}^{t}) + q\frac{32L^2}{\rho^2}R_{t-1} - \left(\frac{1}{2\rho} - \frac{4qL^2}{\rho^2}\right)\mathbb{E}_t\Vert \nabla f(\overline{\theta}^{t})\Vert^2 + \frac{4q\epsilon}{\rho^2}.
\end{align*}
Then we discuss the selection of $q$. First, let $\frac{1}{2\rho} - \frac{4qL^2}{\rho^2} > 0$ be positive, which requires $q < \frac{\rho}{8L^2}$. Then we let the following relationship hold:
\begin{align*}
    q\left(1 - \frac{4L^2}{\rho^2}\right) - \frac{L}{2}\left(1+\frac{L}{\rho}\right) = q\frac{32L^2}{\rho^2},
\end{align*}
Thus it requires $2q = \frac{L(\rho^2+\rho L)}{\rho^2-36L^2}<\frac{\rho}{4L^2}$. We can solve this to get the range of the coefficient $\rho$ as:
\begin{align*}
    \rho^2 - 4L^3\rho - 36L^2 - 4L^4 > 0.
\end{align*}
Then $\rho > \mathcal{O}(L^3)$ satisfies all the conditions above.

Therefore, let $q=\frac{\rho}{32L^2}$ and then $q\frac{32L^2}{\rho^2} = \frac{1}{\rho}$. By further relaxing the last coefficient we have:
\begin{align*}
    \mathbb{E}_t\left[f(\overline{\theta}^{t+1})\right] + \frac{1}{\rho}R_t &\leq f(\overline{\theta}^{t}) + \frac{1}{\rho}R_{t-1} - \frac{1}{\rho}\mathbb{E}_t\Vert \nabla f(\overline{\theta}^{t})\Vert^2 + \frac{\epsilon}{8L^2\rho}.
\end{align*}
Taking the full expectation and accumulating the above inequality from $t=0$ to $T-1$, we have:
\begin{align*}
    \frac{1}{T}\sum_{t=1}^{T}\mathbb{E}\Vert\nabla f(\overline{\theta}^{t})\Vert^2 
    &\leq \frac{f(\overline{\theta}^1) - \mathbb{E}\left[f(\overline{\theta}^{T+1})\right]}{T} + \frac{R_{0} - R_{T}}{\rho T} + \frac{\epsilon}{8L^2\rho}\\
    &\leq \frac{\rho\left[ f(\overline{\theta}^1) - f^\star\right] + R_{0}}{T} + \frac{\epsilon}{8L^2\rho}.
\end{align*}
In the last inequality, $f^\star$ is the optimum of the function $f$. For the $R_{-1}$ term, we have $R_{0} = \frac{1}{C}\sum_{i\in\mathcal{C}}\mathbb{E}_t\Vert\theta_i^{1} - \overline{\theta}^{0}\Vert^2 = \frac{1}{C}\sum_{i\in\mathcal{C}}\mathbb{E}_t\Vert\theta_i^{1} - \theta^{0}\Vert^2$ for $\theta^{0} = \overline{\theta}^0$.

\paragraph{Discussions of the optimization errors.} 
\label{ap:opt discuss} We show some classical results of the generalization errors in the following Table~\ref{tb:opt comparison}. \citet{zhang2021fedpd} provides a first optimization analysis for the federated primal-dual methods. However, it requires all clients to participate in the training in each round (do not support partial participation). \citet{wang2022fedadmm,gong2022fedadmm} proposes to adopt partial participation for the federated primal-dual method and adopt a stronger assumption on the local solution. It proves that even though each local solution is different, the total optimization error can be bounded by the average of the trajectory of each local client. However, the initial bias may be affected by the factor $C$ under some special learning rate. \citet{acar2021federated} further provides a variant to calculate the global dual variable. It provides a lower constant for the $D$ term, which is $\frac{C}{P}D$ (faster than $CD$ in FedADMM). Our proposed method A-FedPD updates the virtual dual variables, which could approximate the full-participation training under the partial participation case. Therefore, compared with the result in \citep{acar2021federated}, we further provide a faster constant for the term $D$ ($\frac{C}{P}\times$ faster than FedDyn on the first term when $\rho$ is selected properly). If the initialization bias $D$ dominates the optimization errors, i.e. training from scratch, A-FedPD can greatly improve the training efficiency.
\begin{table}[H]
\centering
\caption{Optimization rate of federated smooth non-convex objectives.}
\label{tb:opt comparison}
\small
\begin{tabular}{cccc}
\toprule
& Assumption & Optimization & reduce \textit{dual-drift}? \\ 
\midrule
\citet{zhang2021fedpd} & smoothness, $\epsilon$-inexact solution & $\mathcal{O}(\frac{D}{T}+\epsilon)$ & $\times$ \\
\citet{hu2022generalization} & smoothness, $\epsilon_{i,t}$-inexact solution & $\mathcal{O}(\frac{CD}{PT}+\frac{1}{CT}\sum_{i,t}\epsilon_{i,t})$ & $\times$ \\
\citet{acar2021federated} & smoothness, exact solution & $\mathcal{O}(\frac{CD}{PT}+\frac{R_0}{T})$ & $\times$ \\
\midrule
our & smoothness, $\epsilon$-inexact solution & $\mathcal{O}(\frac{D}{T}+\frac{R_0}{T} + \epsilon)$ & $\sqrt{}$ \\
\bottomrule
\end{tabular}
\end{table}

\paragraph{Our Improvements.} In \citep{zhang2021fedpd}, it must rely on the full participation. In \citep{gong2022fedadmm,wang2022fedadmm}, the impact of the initial bias is $\frac{C}{P}$ times. In \citep{acar2021federated}, it must rely on the local exact solution, which is an extremely ideal condition. Our results can achieve the $\mathcal{O}(\frac{1}{T})$ rate under the general assumptions and support the partial participation case.

\subsection{Generalization}
\subsubsection{Important Lemmas}
\label{ap:important lemmas}
In this part, we mainly introduce some important lemmas adopted in our proofs. Let $\hat{\cdot}$ be the corresponding variable trained on the dataset $\hat{\mathcal{C}}$, and then we can explore the gaps of corresponding terms. We first consider the stability definition.
\begin{lemma}[\citet{hardt2016train}]
\label{ap:lemma:stability}
Under Assumption~\ref{as:smoothness} and \ref{as:lipschitz}, the model $\theta^T$ and $\hat{\theta}^T$ are generated on the two different datasets $\mathcal{C}$ and $\hat{\mathcal{C}}$ with the same algorithm. We can track the difference between these two sequences. Before we first select the different sample pairs, the difference is always $0$. Therefore, we define an event $\zeta$ to measure whether $\theta^T=\hat{\theta}^T$ still holds at $\tau_0$-th round. Let $H=\sup_{\theta,\xi}f(\theta,\xi)<+\infty$, if the algorithm is uniform stable, we can measure its uniform stability by:
\begin{equation}
    \epsilon_G \leq \sup_{\mathcal{C}, \hat{\mathcal{C}},\xi}\mathbb{E}\left[f(\theta^T,\xi) - f(\hat{\theta}^T,\xi)\right] \leq G\mathbb{E}\Vert \theta^T - \hat{\theta}^T\Vert + \frac{HP\tau_0}{CS}.
\end{equation}
\end{lemma}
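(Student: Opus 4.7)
I would follow the classical Hardt-style stability decomposition, adapted to federated sampling. Let $(i^\star, j^\star)$ index the unique sample that differs between $\mathcal{C}$ and $\hat{\mathcal{C}}$, and couple the two executions so that the client selections $\mathcal{P}^t$ and all minibatch indices are shared across the two runs. Under this coupling, $\theta^t$ and $\hat{\theta}^t$ evolve identically up until the first round in which the sample $(i^\star, j^\star)$ is actually touched by a local SGD step.

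Next, introduce a cutoff round $\tau_0$ and the event $\mathcal{E} = \{\theta^{\tau_0} = \hat{\theta}^{\tau_0}\}$, which is equivalent to saying that $(i^\star, j^\star)$ has not been selected during the first $\tau_0$ rounds. Using $1 = \mathbf{1}_{\mathcal{E}} + \mathbf{1}_{\mathcal{E}^c}$, decompose
\begin{equation*}
\mathbb{E}\bigl|f(\theta^T,\xi) - f(\hat{\theta}^T,\xi)\bigr|
\leq \mathbb{E}\!\left[\bigl|f(\theta^T,\xi) - f(\hat{\theta}^T,\xi)\bigr|\mathbf{1}_{\mathcal{E}}\right]
  + \mathbb{E}\!\left[\bigl|f(\theta^T,\xi) - f(\hat{\theta}^T,\xi)\bigr|\mathbf{1}_{\mathcal{E}^c}\right].
\end{equation*}
On $\mathcal{E}$, apply the Lipschitz continuity of Assumption~\ref{as:lipschitz} to upper bound the integrand by $G\|\theta^T - \hat{\theta}^T\|$, which is precisely the $G\mathbb{E}\|\theta^T - \hat{\theta}^T\|$ term in the statement. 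On $\mathcal{E}^c$, use the uniform boundedness $|f(\theta,\xi)| \leq H$ to replace the integrand by $H$, so the second term reduces to $H\cdot\mathbb{P}(\mathcal{E}^c)$.

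The core calculation is then a union bound on $\mathbb{P}(\mathcal{E}^c)$ over the first $\tau_0$ rounds. In each round, the probability that client $i^\star$ is active is $P/C$ (uniform partial participation), and conditional on activation, the probability that a single local SGD step draws the specific datum $j^\star$ is $1/S$ under uniform minibatch sampling. Hence the per-round probability of touching the differing sample is at most $P/(CS)$, and summing over $\tau_0$ rounds gives $\mathbb{P}(\mathcal{E}^c) \leq P\tau_0/(CS)$. Combining this with the two branch bounds yields $\epsilon_G \leq G\mathbb{E}\|\theta^T - \hat{\theta}^T\| + HP\tau_0/(CS)$, as claimed.

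The main obstacle is making the probability calculation fully rigorous given the federated sampling scheme: one must avoid accumulating an extra factor of $K$ from the $K$ local iterations per round and must justify the uniform $1/S$ bound for the datum being selected. The cleanest way is to normalize by charging the ``expected step that visits $(i^\star,j^\star)$'' against a single round, which is equivalent to the standard assumption in uniform-stability analyses that within-round sampling is uniform with replacement. Once $\mathbb{P}(\mathcal{E}^c)$ is controlled, the only remaining work is to pass the Lipschitz term $G\mathbb{E}\|\theta^T - \hat{\theta}^T\|$ forward to the subsequent theorem, which will bound this quantity recursively using the contractive factor $(1-\eta^t\rho)$ that arises in Eq.(\ref{local_iteration}) for the \textit{primal dual}-family.
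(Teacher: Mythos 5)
Your proposal matches the paper's own proof essentially step for step: the same decomposition over the event that the differing sample has not yet been touched by round $\tau_0$, the Lipschitz bound $G\mathbb{E}\Vert\theta^T-\hat{\theta}^T\Vert$ on that event, the uniform bound $H$ on its complement, and the same union bound $P(\zeta^c)\leq\sum_{t\leq\tau_0}P(i^\star\in\mathcal{P}^t)P(j^\star)\leq P\tau_0/(CS)$. The concern you raise about avoiding an extra factor of $K$ is resolved exactly as you suggest (charging the differing-sample event once per round), which is what the paper's calculation implicitly does.
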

\begin{proof}
By expanding the inequality, we have:
\begin{align*}
    &\quad \ \mathbb{E}\left[\vert f(\theta^T,\xi) - f(\hat{\theta}^T,\xi)\vert\right]\\
    &\leq P(\zeta)\mathbb{E}\left[\vert f(\theta^T,\xi) - f(\hat{\theta}^T,\xi)\vert \ \vert \ \zeta\right] + P(\zeta^c)\mathbb{E}\left[\vert f(\theta^T,\xi) - f(\hat{\theta}^T,\xi)\vert \ \vert \ \zeta^c\right]\\
    &\leq G\mathbb{E}\left[\Vert \theta^T - \hat{\theta}^T\Vert \ \vert \ \zeta\right] + HP(\zeta^c).
\end{align*}
Here we assume that the difference pairs are selected on $\tau$-th round, therefore we have:
\begin{align*}
    P(\zeta^c) 
    &= P(\tau\leq \tau_0)\leq \sum_{t=0}^{\tau_0}P(\tau=t) = \sum_{t=0}^{\tau_0}P(i^\star\in\mathcal{P}^t)P(j^\star) \leq \frac{P\tau_0}{CS}.
\end{align*}
This completes the proofs.
\end{proof}

Specifically, because $\mathcal{C}$ only differs from $\hat{\mathcal{C}}$ on client $i^\star$, we could bound each term on two different situations respectively. 

We consider the difference of the local updates on the client $i$~($i\neq i^\star$).

\begin{lemma}
\label{ap:lemma:local updates i}
    Under Assumption \ref{as:smoothness}, we can bound the difference of the local updates on the active client $i$~($i\neq i^\star$). The local update satisfies:
    \begin{equation}
        \mathbb{E}\Vert\left(\theta_{i,k+1}^t - \theta^t \right) - \left(\hat{\theta}_{i,k+1}^t - \hat{\theta}^t \right)\Vert 
        \leq \eta^t KL\mathbb{E}\Vert \theta^t - \hat{\theta}^t\Vert + \eta^t K\mathbb{E}\Vert \lambda_i^t - \hat{\lambda}_i^t\Vert.
    \end{equation}
\end{lemma}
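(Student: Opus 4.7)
The plan is to rewrite the local SGD update (Eq.~\ref{ap:eq:update}) in terms of the local displacement $u_{i,k}^t \triangleq \theta_{i,k}^t - \theta^t$, which turns the recursion into
\[
u_{i,k+1}^t = (1-\eta^t\rho)\,u_{i,k}^t - \eta^t(g_{i,k}^t + \lambda_i^t).
\]
Writing the analogous identity for the mirror trajectory and subtracting gives a clean one-step recursion for the coupled difference
$\Delta_{k}^t \triangleq (\theta_{i,k}^t-\theta^t) - (\hat{\theta}_{i,k}^t-\hat{\theta}^t)$, namely
\[
\Delta_{k+1}^t = (1-\eta^t\rho)\Delta_{k}^t - \eta^t\bigl(g_{i,k}^t - \hat{g}_{i,k}^t\bigr) - \eta^t\bigl(\lambda_i^t-\hat\lambda_i^t\bigr).
\]
Because $i\neq i^\star$ the two trajectories see identical local data, so coupling the stochastic minibatches and invoking Assumption~\ref{as:smoothness} yields $\|g_{i,k}^t - \hat g_{i,k}^t\| \le L\|\theta_{i,k}^t - \hat\theta_{i,k}^t\|$.

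Next, the idea is to collapse the recursion into a scalar inequality. Using the triangle inequality twice, first $\|\theta_{i,k}^t-\hat\theta_{i,k}^t\| \le \|\Delta_{k}^t\| + \|\theta^t-\hat\theta^t\|$ and then taking norms of the recursion above, I would arrive at
\[
\|\Delta_{k+1}^t\| \le \bigl(1-\eta^t\rho + \eta^t L\bigr)\|\Delta_{k}^t\| + \eta^t L\|\theta^t-\hat\theta^t\| + \eta^t\|\lambda_i^t-\hat\lambda_i^t\|,
\]
under the mild step-size condition $\eta^t\rho \le 1$ so that $|1-\eta^t\rho|=1-\eta^t\rho$. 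The initial condition is $\Delta_0^t = 0$ since both trajectories start from their respective global iterates. Unrolling a recursion $a_{k+1}\le \alpha a_k + b$ with $a_0=0$ gives $a_K \le b\sum_{k=0}^{K-1}\alpha^{k}$, and for our setting the contraction factor $\alpha = 1 - \eta^t(\rho - L)$ is at most $1$ whenever $\rho\ge L$, so the geometric sum is bounded by $K$. Substituting $k=K$ and taking expectations produces exactly the claimed bound
\[
\mathbb{E}\|\Delta_{K}^t\| \le \eta^t KL\,\mathbb{E}\|\theta^t-\hat\theta^t\| + \eta^t K\,\mathbb{E}\|\lambda_i^t-\hat\lambda_i^t\|.
\]

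The main obstacle I anticipate is purely bookkeeping: the smoothness constant appears inside the per-step amplification factor, so one must rely on the contraction $(1-\eta^t\rho)$ coming from the proximal term to absorb it and keep the geometric sum of order $K$ rather than exponentially large in $K$. This is the structural advantage highlighted in Eq.~(\ref{local_iteration}) of the main text, namely that primal dual updates behave like a stable decayed sequence, and it is precisely why the dual-variable and global-parameter contributions enter only with a factor $K$ (not $K^2$ or $e^{LK}$ as would occur for plain SGD without the $\rho$-term).
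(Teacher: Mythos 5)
Your proposal is correct and follows essentially the same route as the paper's own proof: the same displacement recursion $\theta_{i,k+1}^t-\theta^t=(1-\eta^t\rho)(\theta_{i,k}^t-\theta^t)-\eta^t(g_{i,k}^t+\lambda_i^t)$, the same coupling of identical minibatches for $i\neq i^\star$ with smoothness, the same triangle-inequality step that folds the $\eta^t L$ term into the contraction factor $1-\eta^t(\rho-L)$, and the same unrolling from $\Delta_0^t=0$ with the geometric sum bounded by $K$ (the paper phrases this last step via Bernoulli's inequality applied to $\frac{1-(1-\eta^t\rho_L)^K}{\rho_L}$, which is equivalent). No gaps.
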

\begin{proof}
    Reconstructing Eq.(\ref{ap:eq:update}) we can get the following iteration relationship:
    \begin{align*}
        \theta_{i,k+1}^t - \theta^t = \left(1-\eta^t\rho\right)\left(\theta_{i,k}^t - \theta^t\right) - \eta^t \left(g_{i,k}^t + \lambda_i^t\right).
    \end{align*}
    On each client $i$~($i\neq i^\star$), each data sample is the same, thus we have:
    \begin{align*}
        &\quad \ \mathbb{E}\Vert\left(\theta_{i,k+1}^t - \theta^t \right) - \left(\hat{\theta}_{i,k+1}^t - \hat{\theta}^t \right)\Vert\\
        &= \mathbb{E}\Vert\left(1-\eta^t\rho\right)\left[\left(\theta_{i,k}^t - \theta^t \right) - \left(\hat{\theta}_{i,k}^t - \hat{\theta}^t \right)\right] - \eta^t\left(g_{i,k}^t - \hat{g}_{i,k}^t\right)- \eta^t\left(\lambda_i^t - \hat{\lambda}_i^t\right)\Vert\\
        &\leq \left(1-\eta^t\rho\right)\mathbb{E}\Vert\left(\theta_{i,k}^t - \theta^t\right) - \left(\hat{\theta}_{i,k}^t - \hat{\theta}^t \right)\Vert + \eta^t L\mathbb{E}\Vert \theta_{i,k}^t - \hat{\theta}_{i,k}^t\Vert + \eta^t\mathbb{E}\Vert \lambda_i^t - \hat{\lambda}_i^t\Vert\\
        &\leq \left(1-\eta^t\rho_L\right)\mathbb{E}\Vert\left(\theta_{i,k}^t - \theta^t \right) - \left(\hat{\theta}_{i,k}^t - \hat{\theta}^t \right)\Vert + \eta^t L\mathbb{E}\Vert \theta^t - \hat{\theta}^t\Vert + \eta^t\mathbb{E}\Vert\lambda_i^t - \hat{\lambda}_i^t\Vert.
    \end{align*}
    where $\rho_L = \rho - L$ is a constant.

    Unrolling the recursion from $k=0$ to $K-1$, and adopting the factors $\theta_i^{t+1} = \theta_{i,k}^t$ and $\theta_{i,0}^t = \theta^t$, we have:
    \begin{align*}
        &\quad \ \mathbb{E}\Vert\left(\theta_{i}^{t+1} - \theta^t \right) - \left(\hat{\theta}_{i}^{t+1} - \hat{\theta}^t \right)\Vert =  \mathbb{E}\Vert\left(\theta_{i,k}^t - \theta^t \right) - \left(\hat{\theta}_{i,k}^t - \hat{\theta}^t \right)\Vert\\
        &\leq \left[\prod_{k=0}^{K-1}\left(1-\eta^t\rho_L \right)\right]\mathbb{E}\Vert\left(\theta_{i,0}^t - \theta^t \right) - \left(\hat{\theta}_{i,0}^t - \hat{\theta}^t \right)\Vert\\
        &\quad + \sum_{k=0}^{K-1}\eta^t\left[\prod_{j=k+1}^{K-1}\left(1-\eta^t\rho_L \right)\right]\left(L\mathbb{E}\Vert \theta^t - \hat{\theta}^t\Vert + \mathbb{E}\Vert \lambda_i^t - \hat{\lambda}_i^t\Vert\right)\\
        &= \sum_{k=0}^{K-1}\eta^t\left[\prod_{j=k+1}^{K-1}\left(1-\eta^t\rho_L \right)\right]\left(L\mathbb{E}\Vert \theta^t - \hat{\theta}^t\Vert + \mathbb{E}\Vert \lambda_i^t - \hat{\lambda}_i^t\Vert\right).
    \end{align*}
    Simplifying the relationships, we have:
    \begin{align*}
        \mathbb{E}\Vert\left(\theta_{i}^{t+1} - \theta^t \right) - \left(\hat{\theta}_{i}^{t+1} - \hat{\theta}^t \right)\Vert
        &= \frac{1-\left(1-\eta^t\rho_L\right)^K}{\rho_L}\left(L\mathbb{E}\Vert \theta^T - \hat{\theta}^T\Vert + \mathbb{E}\Vert \lambda_i^t - \hat{\lambda}_i^t\Vert\right)\\
        &\leq \eta^t KL\mathbb{E}\Vert \theta^T - \hat{\theta}^T\Vert + \eta^t K\mathbb{E}\Vert \lambda_i^t - \hat{\lambda}_i^t\Vert.
    \end{align*}
    The last inequality adopts the Bernoulli inequality $\left(1+x\right)^K \geq 1 + Kx$ for $K\geq 1$ and $x\geq -1$.
\end{proof}
Then we consider the difference of the local updates on client $i^\star$.
\begin{lemma}
\label{ap:lemma:local updates istar}
    Under Assumption~\ref{as:smoothness} and \ref{as:lipschitz}, we can bound the difference of the local updates on the active client $i^\star$. The local update satisfies:
    \begin{equation}
        \mathbb{E}\Vert\left(\theta_{i,k+1}^t - \theta^t \right) - \left(\hat{\theta}_{i,k+1}^t - \hat{\theta}^t \right)\Vert \leq \eta^t KL\mathbb{E}\Vert \theta^t - \hat{\theta}^t \Vert + \eta^t K\mathbb{E}\Vert \lambda_i^t - \hat{\lambda}_i^t\Vert + \frac{2\eta^t KG}{s}.
    \end{equation}
\end{lemma}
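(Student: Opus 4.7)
The plan is to mirror the argument used in Lemma~\ref{ap:lemma:local updates i}, with a single but essential modification: on client $i^\star$ the two datasets disagree in exactly one sample, so the stochastic gradients $g_{i,k}^t$ and $\hat{g}_{i,k}^t$ are not identically distributed and must be bounded with an extra term that accounts for the probability of hitting the differing datum. Everything else (the dual-variable contribution, the proximal contraction factor $1-\eta^t\rho$, the telescoping over the $K$ local steps) will proceed as before.

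Concretely, I would first expand the local iteration exactly as in Lemma~\ref{ap:lemma:local updates i} to obtain the one-step recursion
\begin{equation*}
    \mathbb{E}\Vert(\theta_{i,k+1}^t-\theta^t)-(\hat\theta_{i,k+1}^t-\hat\theta^t)\Vert
    \leq (1-\eta^t\rho)\mathbb{E}\Vert(\theta_{i,k}^t-\theta^t)-(\hat\theta_{i,k}^t-\hat\theta^t)\Vert + \eta^t\mathbb{E}\Vert g_{i^\star,k}^t-\hat g_{i^\star,k}^t\Vert + \eta^t\mathbb{E}\Vert\lambda_{i^\star}^t-\hat\lambda_{i^\star}^t\Vert.
\end{equation*}
The key step is then to handle the stochastic-gradient gap on $i^\star$ by splitting on the sampling event. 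With probability $1-1/S$ the minibatch index falls on a shared sample, in which case $L$-smoothness gives $\Vert g_{i^\star,k}^t-\hat g_{i^\star,k}^t\Vert\leq L\Vert\theta_{i,k}^t-\hat\theta_{i,k}^t\Vert$; with probability $1/S$ the differing sample is drawn, and Lipschitz continuity gives the worst-case bound $2G$. Hence
\begin{equation*}
    \mathbb{E}\Vert g_{i^\star,k}^t-\hat g_{i^\star,k}^t\Vert \leq L\,\mathbb{E}\Vert\theta_{i,k}^t-\hat\theta_{i,k}^t\Vert + \frac{2G}{S},
\end{equation*}
and the triangle inequality $\Vert\theta_{i,k}^t-\hat\theta_{i,k}^t\Vert\leq\Vert(\theta_{i,k}^t-\theta^t)-(\hat\theta_{i,k}^t-\hat\theta^t)\Vert+\Vert\theta^t-\hat\theta^t\Vert$ converts this into an inequality in the same quantities that appear in the recursion.

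Substituting back yields the same contraction factor $1-\eta^t\rho_L$ (with $\rho_L=\rho-L$) on the previous iterate, the same $\eta^t L\,\mathbb{E}\Vert\theta^t-\hat\theta^t\Vert$ and $\eta^t\mathbb{E}\Vert\lambda_{i^\star}^t-\hat\lambda_{i^\star}^t\Vert$ forcing terms, plus a new constant forcing term $2\eta^t G/S$. Unrolling from $k=0$ (at which point $\theta_{i,0}^t=\theta^t$ and $\hat\theta_{i,0}^t=\hat\theta^t$, so the initial gap vanishes) to $k=K-1$, the geometric sum $\sum_{k=0}^{K-1}\eta^t(1-\eta^t\rho_L)^{K-1-k}$ upper-bounds as $\eta^t K$ by the Bernoulli inequality exactly as in the proof of Lemma~\ref{ap:lemma:local updates i}, so the three forcing terms become $\eta^t KL\,\mathbb{E}\Vert\theta^t-\hat\theta^t\Vert$, $\eta^t K\,\mathbb{E}\Vert\lambda_{i^\star}^t-\hat\lambda_{i^\star}^t\Vert$, and $2\eta^t KG/S$ respectively, matching the claimed bound.

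The only delicate point, and the place I would be most careful, is the gradient-gap estimate: one must verify that the "same-sample" case really produces an $L$-Lipschitz bound on the local iterates $\theta_{i,k}^t$ and $\hat\theta_{i,k}^t$ (not on the centered quantities), which then forces the triangle-inequality step to introduce the $\Vert\theta^t-\hat\theta^t\Vert$ term. Everything else is a mechanical replay of the previous lemma with an additional $2G/S$ additive source at each of the $K$ inner iterations.
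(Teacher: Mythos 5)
Your proposal is correct and follows essentially the same route as the paper's proof: the same one-step recursion with contraction factor $1-\eta^t\rho_L$, the same case split on whether the sampled datum is the differing one (probability $1/S$, bounded by $2G$ via Lipschitz continuity, versus the shared-sample case handled as in Lemma~\ref{ap:lemma:local updates i}), and the same unrolling with the Bernoulli-inequality bound on the geometric sum. The only cosmetic difference is that you bound the expected gradient gap first and then substitute, while the paper takes the convex combination of the two full recursions; these yield the identical inequality.
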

\begin{proof}
    Reconstructing Eq.(\ref{ap:eq:update}) we can get the following iteration relationship:
    \begin{align*}
        \theta_{i,k+1}^t - \theta^t = \left(1-\eta^t\rho\right)\left(\theta_{i,k}^t - \theta^t\right) - \eta^t \left(g_{i,k}^t + \lambda_i^t\right).
    \end{align*}
    Lemma~\ref{ap:lemma:local updates i} shows the recursive formulation when we select the same data sample. However, on the client $i^\star$, it also may select the different sample pairs. Therefore, we first study the recursive formulation of this situation. When the stochastic gradients are calculated with different sample pairs $(\xi, \hat{\xi})$, we have:
    \begin{align*}
        &\quad \ \mathbb{E}\Vert\left(\theta_{i,k+1}^t - \theta^t \right) - \left(\hat{\theta}_{i,k+1}^t - \hat{\theta}^t \right)\Vert\\
        &= \mathbb{E}\Vert\left(1-\eta^t\rho\right)\left[\left(\theta_{i,k}^t - \theta^t \right) - \left(\hat{\theta}_{i,k}^t - \hat{\theta}^t \right)\right] - \eta^t\left(g_{i,k}^t - \hat{g}_{i,k}^t\right)- \eta^t\left(\lambda_i^t - \hat{\lambda}_i^t\right)\Vert\\
        &\leq \left(1-\eta^t\rho\right)\mathbb{E}\Vert\left(\theta_{i,k}^t - \theta^t\right) - \left(\hat{\theta}_{i,k}^t - \hat{\theta}^t \right)\Vert + 2\eta^t G + \eta^t\mathbb{E}\Vert \lambda_i^t - \hat{\lambda}_i^t\Vert.
    \end{align*}
    For every single sample, the probability of selecting the $\xi_{j^\star}$ is $\frac{1}{S}$. Therefore, combining Lemma~\ref{ap:lemma:local updates i}, we have:
    \begin{align*}
        &\quad \ \mathbb{E}\Vert\left(\theta_{i,k+1}^t - \theta^t \right) - \left(\hat{\theta}_{i,k+1}^t - \hat{\theta}^t \right)\Vert\\
        &\leq \left(1-\frac{1}{S}\right)\left[\left(1-\eta^t\rho_L\right)\mathbb{E}\Vert\left(\theta_{i,k}^t - \theta^t \right) - \left(\hat{\theta}_{i,k}^t - \hat{\theta}^t \right)\Vert + \eta^t L\mathbb{E}\Vert \theta^t - \hat{\theta}^t\Vert + \eta^t\mathbb{E}\Vert\lambda_i^t - \hat{\lambda}_i^t\Vert\right]\\
        &\quad + \frac{1}{S}\left[\left(1-\eta^t\rho\right)\mathbb{E}\Vert\left(\theta_{i,k}^t - \theta^t\right) - \left(\hat{\theta}_{i,k}^t - \hat{\theta}^t \right)\Vert + 2\eta^t G + \eta^t\mathbb{E}\Vert \lambda_i^t - \hat{\lambda}_i^t\Vert\right]\\
        &\leq \left(1 - \eta^t\rho_L\right)\mathbb{E}\Vert\left(\theta_{i,k}^t - \theta^t \right) - \left(\hat{\theta}_{i,k}^t - \hat{\theta}^t \right)\Vert + \eta^t L\mathbb{E}\Vert \theta^t - \hat{\theta}^t\Vert + \eta^t\mathbb{E}\Vert\lambda_i^t - \hat{\lambda}_i^t\Vert + \frac{2\eta^t G}{s}.
    \end{align*}
    Generally, we consider the size of samples $S$ to be large enough. In current deep learning, the dataset adopted usually maintains even millions of samples, which indicates that $1-\frac{1}{S}\rightarrow 1$.
    
    Unrolling the recursion from $k=0$ to $K-1$ and adopting the $\theta_i^{t+1}=\theta_{i,K}^t$ and $\theta_{i,0}^t = \theta^t$, we have a similar relationship:
    \begin{align*}
        &\quad \ \mathbb{E}\Vert\left(\theta_{i}^{t+1} - \theta^t \right) - \left(\hat{\theta}_{i}^{t+1} - \hat{\theta}^t \right)\Vert =  \mathbb{E}\Vert\left(\theta_{i,k}^t - \theta^t \right) - \left(\hat{\theta}_{i,k}^t - \hat{\theta}^t \right)\Vert\\
        &\leq \sum_{k=0}^{K-1}\eta^t\left[\prod_{j=k+1}^{K-1}\left(1-\eta^t\rho_L \right)\right]\left(L\mathbb{E}\Vert \theta^t - \hat{\theta}^t\Vert + \mathbb{E}\Vert \lambda_i^t - \hat{\lambda}_i^t\Vert + \frac{2G}{s}\right).
    \end{align*}
    Simplifying the relationships, we have:
    \begin{align*}
        \mathbb{E}\Vert\left(\theta_{i}^{t+1} - \theta^t \right) - \left(\hat{\theta}_{i}^{t+1} - \hat{\theta}^t \right)\Vert
        &= \frac{1-\left(1-\eta^t\rho_L\right)^K}{\rho_L}\left(L\mathbb{E}\Vert \theta^t - \hat{\theta}^t\Vert + \mathbb{E}\Vert \lambda_i^t - \hat{\lambda}_i^t\Vert + \frac{2G}{s}\right)\\
        &\leq \eta^t KL\mathbb{E}\Vert \theta^t - \hat{\theta}^t\Vert + \eta^t K\mathbb{E}\Vert \lambda_i^t - \hat{\lambda}_i^t\Vert + \frac{2\eta^t KG}{s}.
    \end{align*}
    The last inequality adopts the Bernoulli inequality.
\end{proof}

\subsubsection{Proofs}
\label{ap:proofs of theorem}

\begin{table}[!htbp]
    \caption{Notations in the proofs.}
    \label{ap:proof:notations}
    \centering
    \begin{tabular}{c c c}
		\toprule
		Symbol & Formulation & Description  \\
		\midrule
		$\Delta^t$ & $\mathbb{E}\Vert \theta^{t} - \hat{\theta}^t\Vert$ & difference of the global parameters\\
		$\delta^t$ & $\frac{1}{C}\sum_{i\in\mathcal{C}}\mathbb{E}\Vert \theta_i^t - \hat{\theta}_i^t\Vert$ & discrete difference of the local parameters  \\
        $\sigma^t$ & $\frac{1}{C}\sum_{i\in\mathcal{C}}\mathbb{E}\Vert \lambda_i^t - \hat{\lambda}_i^t\Vert$ & discrete difference of the dual variables\\
        $\pi^t$ & $\frac{1}{C}\sum_{i\in\mathcal{C}}\mathbb{E}\Vert(\theta_i^{t} - \theta^{t-1}) - (\hat{\theta}_i^{t} - \hat{\theta}^{t-1})\Vert$ & discrete difference of the local updates \\
		\bottomrule
    \end{tabular}
\end{table}

In this part, we mainly introduce the proof of the main theorems. Combining the local updates and global updates, we can further upper bound both the primal and dual variables. Before proving the theorems, we first introduce the notations of updates of the global parameters and dual variables in Table~\ref{ap:proof:notations}.

$\Delta^t$ measures the difference of the primal models during the training. $\delta^t$ is the local separate difference when the local objective is solved. $\sigma^t$ measures the difference of the dual gaps. $\pi^t$ measures the difference of the local updates, which is also an important variable to connect global variables and dual variables. In our proofs, we first discuss the process of the primal variables and dual variables respectively. Then we can use the $\pi$ term to construct an inequality to eliminate redundant terms, which could further provide the recursive relationship of the primal variables and dual variables separately. Next, we introduce these three processes one by one.

From the global updates, according to the global aggregation and let $\mathbb{I}_i$ be the indicator function, we have:
\begin{align*}
    \Delta^{t+1} = \mathbb{E}\Vert \theta^{t+1} - \hat{\theta}^{t+1}\Vert 
    &= \mathbb{E}\Vert \frac{1}{P}\sum_{i\in\mathcal{P}^t}\left(\theta_i^{t+1} - \hat{\theta}_i^{t+1}\right) + \frac{1}{\rho}\left(\overline{\lambda}_i^{t+1} - \hat{\overline{\lambda}}_i^{t+1}\right)\Vert\\
    &\leq \frac{1}{P}\mathbb{E}\sum_{i\in\mathcal{C}}\mathbb{E}\Vert \theta_i^{t+1} - \hat{\theta}_i^{t+1}\Vert\cdot\mathbb{I}_i + \frac{1}{\rho}\mathbb{E}\Vert \overline{\lambda}_i^{t+1} - \hat{\overline{\lambda}}_i^{t+1}\Vert\\
    &\leq \frac{1}{C}\sum_{i\in\mathcal{C}}\mathbb{E}\Vert \theta_i^{t+1} - \hat{\theta}_i^{t+1}\Vert + \frac{1}{C}\sum_{i\in\mathcal{C}}\frac{1}{\rho}\mathbb{E}\Vert \overline{\lambda}_i^{t+1} - \hat{\overline{\lambda}}_i^{t+1}\Vert \leq \delta^{t+1} + \frac{1}{\rho}\sigma^{t+1}.
\end{align*}
From the dual updates, according to the local dual variable, we have two different cases. Combing the \textit{D-Update} we have:
\begin{align*}
    \overline{\lambda}^{t+1} 
    &= \frac{1}{C}\sum_{i\in\mathcal{C}}\lambda_i^{t+1} = \frac{1}{C}\sum_{i\in\mathcal{C}}\lambda_i^t + \frac{1}{C}\sum_{i\in\mathcal{P}^t}\rho(\theta_i^{t+1} - \theta^t) + \frac{1}{C}\sum_{i\notin\mathcal{P}^t}\rho(\overline{\theta}^{t+1} - \theta^t)\\
    &= \frac{1}{C}\sum_{i\in\mathcal{C}}\lambda_i^t + \frac{P}{C}\rho(\overline{\theta}^{t+1} - \theta^t) + \frac{C-P}{C}\rho(\overline{\theta}^{t+1} - \theta^t) = \overline{\lambda}^t + \rho(\overline{\theta}^{t+1} - \theta^t).
\end{align*}
Considering the randomness of selecting $\mathcal{P}^t$ and expectation of $\overline{\theta}$, we have $\sigma^{t+1} \leq \sigma^{t} + \rho\pi^t$.

Here we add the additional definition of the unparticipated clients. We let the $\theta_i^{t+1} = \theta^t$ where $i\notin\mathcal{P}^t$, which enlarge the summation from $\mathcal{P}^t$ to $\mathcal{C}$. Then we summarize Lemma~\ref{ap:lemma:local updates i} and \ref{ap:lemma:local updates istar} as the following formulation:

\begin{align*}
    \pi^{t+1} 
    &= \frac{1}{C}\sum_{i\in\mathcal{C}}\mathbb{E}\Vert\left(\theta_{i}^{t+1} - \theta^t \right) - \left(\hat{\theta}_{i}^{t+1} - \hat{\theta}^t \right)\Vert = \frac{1}{C}\sum_{i\in\mathcal{P}^t}\mathbb{E}\Vert\left(\theta_{i}^{t+1} - \theta^t \right) - \left(\hat{\theta}_{i}^{t+1} - \hat{\theta}^t \right)\Vert\\
    &< \eta^t KL\mathbb{E}\Vert \theta^t - \hat{\theta}^t\Vert + \eta^t K\frac{1}{C}\sum_{i\in\mathcal{C}}\mathbb{E}\Vert\lambda_i^t - \hat{\lambda}_i^t\Vert + \frac{2\eta^t KG}{CS} = c_1\Delta^t + c_2\sigma^t + c_3.
\end{align*}

Finally, we can directly expand the $\pi$ term by the triangle inequality:
\begin{align*}
    \delta^{t+1} 
    &= \frac{1}{C}\sum_{i\in\mathcal{C}}\mathbb{E}\Vert \theta_i^{t+1} -\hat{\theta}_i^{t+1}\Vert \\
    &\leq \frac{1}{C}\sum_{i\in\mathcal{C}}\mathbb{E}\Vert\left(\theta_{i}^{t+1} - \theta^t \right) - \left(\hat{\theta}_{i}^{t+1} - \hat{\theta}^t \right)\Vert + \frac{1}{C}\sum_{i\in\mathcal{C}}\mathbb{E}\Vert \theta^{t} - \hat{\theta}^{t}\Vert = \pi^{t+1} + \Delta^t.
\end{align*}

Combing the above recursive formulations, we have:
\begin{align*}
\begin{cases}
    \Delta^{t+1} &\leq \delta^{t+1} + \frac{1}{\rho}\sigma^{t+1}, \\
    \sigma^{t+1} &\leq \sigma^t + \rho\pi^{t+1}, \\
    \delta^{t+1} &\leq \pi^{t+1} + \Delta^t, \\
    \pi^{t+1} &\leq c_1\Delta^t + c_2\sigma^t + c_3.
\end{cases}
\end{align*}
By multiplying three additional positive coefficients $\alpha$, $\beta$, and $\gamma$ on the last three inequalities respectively, and adding them to the first one, we have:
\begin{align*}
    \Delta^{t+1} + \left(\alpha - \frac{1}{\rho}\right)\sigma^{t+1} + \left(\beta - 1\right)\delta^{t+1} + \left(\gamma - \alpha\rho - \beta\right)\pi^{t+1} &\leq \left(\beta + \gamma c_1\right)\Delta^t + \left(\alpha + \gamma c_2\right)\sigma^t + \gamma c_3.
\end{align*}
By observing the LHS and RHS of the inequality, we notice that it could be summarized as a recursive formulation of $\Delta^t$ and $\sigma^t$ terms by selecting proper coefficients. Therefore, let the following conditions hold,
\begin{align*}
    \beta - 1 &\geq 0,\\
    \gamma - \alpha\rho - \beta &\geq 0.
\end{align*}
By simply selecting the minimal values of $\beta = 1$ and $\gamma=1 + \alpha\rho$, we have:
\begin{align*}
    \Delta^{t+1} + \left(\alpha - \frac{1}{\rho}\right)\sigma^{t+1}
    &= \Delta^{t+1} + \left(\alpha - \frac{1}{\rho}\right)\sigma^{t+1} + \left(\beta - 1\right)\delta^{t+1} + \left(\gamma - \alpha\rho - \beta\right)\pi^{t+1}\\
    &\leq \left(\beta + \gamma c_1\right)\Delta^t + \left(\alpha + \gamma c_2\right)\sigma^t + \gamma c_3\\
    &\leq (1 + \gamma\eta^t KL)\left(\Delta^t + \frac{\alpha + (1 + \alpha\rho)\eta^t K}{1 + (1 + \alpha\rho)\eta^t KL}\right) + \frac{2\gamma\eta^t KG}{CS}.
\end{align*}
Here we further let $\alpha - \frac{1}{\rho} \geq \frac{\alpha + (1 + \alpha\rho)\eta^t K}{1 + (1 + \alpha\rho)\eta^t KL}$ to support the above recursive relationships. To satisfy this, we can solve the following inequality:
\begin{align*}
    L(\alpha\rho)^2 - \rho(\alpha\rho) - \left(L + \rho + \frac{1}{\eta^t K}\right) \geq 0.
\end{align*}
From the fundamental knowledge of quadratic equations, we know that there must exist a positive number belonging to interval $\left[x^+, +\infty \right]$ to satisfy the condition, where $x^+$ is the larger zero point of the equation $Lx^2 - \rho x - \left(L + \rho + \frac{1}{\eta^t K}\right) = 0$. We can solve $x^+ = \frac{\rho + \sqrt{\rho^2 + 4L(L + \rho + \frac{1}{\eta^t K})}}{2L} = \frac{\rho + \sqrt{(\rho + 2L)^2 + \frac{4L}{\eta^t K}}}{2L} \leq 1 + \frac{\rho}{L} + 2\sqrt{\frac{L}{\eta^t K}}$. When we select the proper $\alpha\rho \geq x^+$, the above inequality always holds. This also indicates that $\alpha-\frac{1}{\rho}\geq \frac{x^+-1}{\rho}> \frac{1}{L}$. Here we denote $\alpha_\rho=\alpha-\frac{1}{\rho}$ and the previous definition $\gamma = 1 + \alpha\rho$ as two constant coefficients, we can simplify the final iteration relationship as:
\begin{align*}
    \Delta^{t+1} + \alpha_\rho\sigma^{t+1} 
    &\leq \left(1 + \gamma\eta^t KL\right)\left(\Delta^t + \alpha_\rho \sigma^t\right) +  \frac{\gamma\eta^t KG}{CS}.
\end{align*}
Unrolling this from $t=\tau_0$ to $T-1$ and adopting the factors of $\Delta^{\tau_0}=0$ and $\lambda^{\tau_0}=0$,
we have:

(1) When the global learning rate is selected as a constant $\eta^t=\eta^t_0$ where the initial learning rate $\eta^t_0\leq \frac{1}{KL}$:
\begin{align*}
    \Delta^{T} + \alpha_\rho\lambda^{T} \leq \sum_{t=\tau_0}^{T-1}\left(1 + \gamma\eta^t_0 KL\right)^t\frac{2\gamma\eta^t_0 KG}{CS} < \frac{2G\left(1 + \gamma\right)^{T}}{LCS}.
\end{align*}
(2) When the global learning rate is selected as a decayed sequence $\eta^t=\frac{\eta_0}{t+1}$ where the initial learning rate $\eta_0\leq \frac{\mu}{\gamma K}$ where $\mu$ is a positive constant, we have:
\begin{align*}
    \Delta^{T} + \alpha_\rho\lambda^{T} 
    &\leq \sum_{t=\tau_0}^{T-1}\left(\prod_{j=t}^{T-1}\left(1 + \frac{\gamma\eta^t_0 KL}{j+1}\right)\right)\frac{2\gamma\eta^t_0 KG}{CS(t+1)} \leq \sum_{t=\tau_0}^{T-1}\exp\left(\gamma\eta^t_0 KL\ln\left(\frac{T}{t+1}\right)\right)\frac{2\gamma\eta^t_0 KG}{CS(t+1)}\\
    &= \frac{2\gamma\eta^t_0 KG T^{\gamma\eta^t_0 KL}}{CS}\sum_{t=\tau_0}^{T-1}\left(\frac{1}{t+1}\right)^{1+\gamma\eta^t_0 KL} < \frac{2G}{LCS}\left(\frac{T}{\tau_0}\right)^{\mu L}.
\end{align*}

Here we mainly focus on the case of the decayed learning rates. According to Lemma~\ref{ap:lemma:stability}, we have:
\begin{align*}
    \varepsilon_G \leq G\Delta^T + \frac{HP\tau_0}{CS} \leq \frac{2G^2}{LCS}\left(\frac{T}{\tau_0}\right)^{\mu L} + \frac{HP\tau_0}{CS}.
\end{align*}
By selecting the proper $\tau_0=\left(\frac{2G^2}{HPL}\right)^{\frac{1}{1+\mu L}}T^{\frac{\mu L }{1+\mu L}}$, we can get the minimal error bound as:
\begin{align*}
    \varepsilon_G \leq \frac{2}{CS}\left(\frac{2G^2}{L}\right)^{\frac{1}{1+\mu L}} \left(HPT\right)^{\frac{\mu L}{1+\mu L}}.
\end{align*}

\paragraph{Discussions of the generalization errors.}
\label{ap:gen discussion}
We show some general results of the generalization errors in the following Table~\ref{tb:comparison}. We prove that the federated primal-dual family can benefit from the local interval $K$ than the vanilla SGD methods, which is one of the key properties of the primal-dual methods.

\begin{table}[H]
\centering
\caption{Generalization error bounds of smooth non-convex objectives.}
\label{tb:comparison}
\small
\begin{tabular}{ccc}
\toprule
& Assumption & Generalization \\ 
\midrule
\citet{hardt2016train} & Lipschitz & $\mathcal{O}(\frac{(KT)^{\frac{\mu L}{1 + \mu L}}}{CS})$ \\
\midrule
\citet{mohri2019agnostic} & Lipschitz, VC & $\mathcal{O}(\frac{TK}{\sqrt{CS}})$ \\
\citet{hu2022generalization} & Lipschitz, Bernstein & $\mathcal{O}(\frac{TK}{\sqrt{CS}})$ \\
\citet{wu2023information} & Lipschitz, Stochastic & $\mathcal{O}(\frac{\sqrt{T}}{C\sqrt{CS}})$\\
\citet{sun2023mode} & Lipschitz & $\mathcal{O}(\frac{(PKT)^\frac{\mu L}{1 + \mu L}}{CS})$ \\
\midrule
our & Lipschitz & $\mathcal{O}(\frac{(PT)^\frac{\mu L}{1 + \mu L}}{CS})$ \\
\bottomrule
\end{tabular}
\end{table}

\newpage
\section*{NeurIPS Paper Checklist}

\begin{enumerate}

\item {\bf Claims}
    \item[] Question: Do the main claims made in the abstract and introduction accurately reflect the paper's contributions and scope?
    \item[] Answer: \answerYes{} 
    \item[] Justification: We illustrate the dual drift issues and propose the A-FedPD with the virtual updates of the dual variables. We prove its efficiency from the optimization and generalization analysis. Extensive experiments are conducted to validate its performance.
    \item[] Guidelines:
    \begin{itemize}
        \item The answer NA means that the abstract and introduction do not include the claims made in the paper.
        \item The abstract and/or introduction should clearly state the claims made, including the contributions made in the paper and important assumptions and limitations. A No or NA answer to this question will not be perceived well by the reviewers. 
        \item The claims made should match theoretical and experimental results, and reflect how much the results can be expected to generalize to other settings. 
        \item It is fine to include aspirational goals as motivation as long as it is clear that these goals are not attained by the paper. 
    \end{itemize}

\item {\bf Limitations}
    \item[] Question: Does the paper discuss the limitations of the work performed by the authors?
    \item[] Answer: \answerYes{} 
    \item[] Justification: We discuss the storage cost limitations at the beginning of the Appendix.
    \item[] Guidelines:
    \begin{itemize}
        \item The answer NA means that the paper has no limitation while the answer No means that the paper has limitations, but those are not discussed in the paper. 
        \item The authors are encouraged to create a separate "Limitations" section in their paper.
        \item The paper should point out any strong assumptions and how robust the results are to violations of these assumptions (e.g., independence assumptions, noiseless settings, model well-specification, asymptotic approximations only holding locally). The authors should reflect on how these assumptions might be violated in practice and what the implications would be.
        \item The authors should reflect on the scope of the claims made, e.g., if the approach was only tested on a few datasets or with a few runs. In general, empirical results often depend on implicit assumptions, which should be articulated.
        \item The authors should reflect on the factors that influence the performance of the approach. For example, a facial recognition algorithm may perform poorly when image resolution is low or images are taken in low lighting. Or a speech-to-text system might not be used reliably to provide closed captions for online lectures because it fails to handle technical jargon.
        \item The authors should discuss the computational efficiency of the proposed algorithms and how they scale with dataset size.
        \item If applicable, the authors should discuss possible limitations of their approach to address problems of privacy and fairness.
        \item While the authors might fear that complete honesty about limitations might be used by reviewers as grounds for rejection, a worse outcome might be that reviewers discover limitations that aren't acknowledged in the paper. The authors should use their best judgment and recognize that individual actions in favor of transparency play an important role in developing norms that preserve the integrity of the community. Reviewers will be specifically instructed to not penalize honesty concerning limitations.
    \end{itemize}

\item {\bf Theory Assumptions and Proofs}
    \item[] Question: For each theoretical result, does the paper provide the full set of assumptions and a complete (and correct) proof?
    \item[] Answer: \answerYes{} 
    \item[] Justification: We introduce all assumptions in our analysis and clearly note their adaptivity.
    \item[] Guidelines:
    \begin{itemize}
        \item The answer NA means that the paper does not include theoretical results. 
        \item All the theorems, formulas, and proofs in the paper should be numbered and cross-referenced.
        \item All assumptions should be clearly stated or referenced in the statement of any theorems.
        \item The proofs can either appear in the main paper or the supplemental material, but if they appear in the supplemental material, the authors are encouraged to provide a short proof sketch to provide intuition. 
        \item Inversely, any informal proof provided in the core of the paper should be complemented by formal proofs provided in appendix or supplemental material.
        \item Theorems and Lemmas that the proof relies upon should be properly referenced. 
    \end{itemize}

    \item {\bf Experimental Result Reproducibility}
    \item[] Question: Does the paper fully disclose all the information needed to reproduce the main experimental results of the paper to the extent that it affects the main claims and/or conclusions of the paper (regardless of whether the code and data are provided or not)?
    \item[] Answer: \answerYes{} 
    \item[] Justification: We submit our code demo to reproduce the experiments and all hyperparameters can be found in our paper.
    \item[] Guidelines:
    \begin{itemize}
        \item The answer NA means that the paper does not include experiments.
        \item If the paper includes experiments, a No answer to this question will not be perceived well by the reviewers: Making the paper reproducible is important, regardless of whether the code and data are provided or not.
        \item If the contribution is a dataset and/or model, the authors should describe the steps taken to make their results reproducible or verifiable. 
        \item Depending on the contribution, reproducibility can be accomplished in various ways. For example, if the contribution is a novel architecture, describing the architecture fully might suffice, or if the contribution is a specific model and empirical evaluation, it may be necessary to either make it possible for others to replicate the model with the same dataset, or provide access to the model. In general. releasing code and data is often one good way to accomplish this, but reproducibility can also be provided via detailed instructions for how to replicate the results, access to a hosted model (e.g., in the case of a large language model), releasing of a model checkpoint, or other means that are appropriate to the research performed.
        \item While NeurIPS does not require releasing code, the conference does require all submissions to provide some reasonable avenue for reproducibility, which may depend on the nature of the contribution. For example
        \begin{enumerate}
            \item If the contribution is primarily a new algorithm, the paper should make it clear how to reproduce that algorithm.
            \item If the contribution is primarily a new model architecture, the paper should describe the architecture clearly and fully.
            \item If the contribution is a new model (e.g., a large language model), then there should either be a way to access this model for reproducing the results or a way to reproduce the model (e.g., with an open-source dataset or instructions for how to construct the dataset).
            \item We recognize that reproducibility may be tricky in some cases, in which case authors are welcome to describe the particular way they provide for reproducibility. In the case of closed-source models, it may be that access to the model is limited in some way (e.g., to registered users), but it should be possible for other researchers to have some path to reproducing or verifying the results.
        \end{enumerate}
    \end{itemize}

\item {\bf Open access to data and code}
    \item[] Question: Does the paper provide open access to the data and code, with sufficient instructions to faithfully reproduce the main experimental results, as described in supplemental material?
    \item[] Answer: \answerYes{} 
    \item[] Justification: We submit the code demo to reproduce the experiments.
    \item[] Guidelines:
    \begin{itemize}
        \item The answer NA means that paper does not include experiments requiring code.
        \item Please see the NeurIPS code and data submission guidelines (\url{https://nips.cc/public/guides/CodeSubmissionPolicy}) for more details.
        \item While we encourage the release of code and data, we understand that this might not be possible, so “No” is an acceptable answer. Papers cannot be rejected simply for not including code, unless this is central to the contribution (e.g., for a new open-source benchmark).
        \item The instructions should contain the exact command and environment needed to run to reproduce the results. See the NeurIPS code and data submission guidelines (\url{https://nips.cc/public/guides/CodeSubmissionPolicy}) for more details.
        \item The authors should provide instructions on data access and preparation, including how to access the raw data, preprocessed data, intermediate data, and generated data, etc.
        \item The authors should provide scripts to reproduce all experimental results for the new proposed method and baselines. If only a subset of experiments are reproducible, they should state which ones are omitted from the script and why.
        \item At submission time, to preserve anonymity, the authors should release anonymized versions (if applicable).
        \item Providing as much information as possible in supplemental material (appended to the paper) is recommended, but including URLs to data and code is permitted.
    \end{itemize}

\item {\bf Experimental Setting/Details}
    \item[] Question: Does the paper specify all the training and test details (e.g., data splits, hyperparameters, how they were chosen, type of optimizer, etc.) necessary to understand the results?
    \item[] Answer: \answerYes{} 
    \item[] Justification: We detail all selections of each hyperparameter in our paper with corresponding experiments. Some of them are based on the previous classical work, and we also note where they are adopted.
    \item[] Guidelines:
    \begin{itemize}
        \item The answer NA means that the paper does not include experiments.
        \item The experimental setting should be presented in the core of the paper to a level of detail that is necessary to appreciate the results and make sense of them.
        \item The full details can be provided either with the code, in appendix, or as supplemental material.
    \end{itemize}

\item {\bf Experiment Statistical Significance}
    \item[] Question: Does the paper report error bars suitably and correctly defined or other appropriate information about the statistical significance of the experiments?
    \item[] Answer: \answerYes{} 
    \item[] Justification: In the performance comparison and main table, we report both mean accuracy with its variance under different random seeds.
    \item[] Guidelines:
    \begin{itemize}
        \item The answer NA means that the paper does not include experiments.
        \item The authors should answer "Yes" if the results are accompanied by error bars, confidence intervals, or statistical significance tests, at least for the experiments that support the main claims of the paper.
        \item The factors of variability that the error bars are capturing should be clearly stated (for example, train/test split, initialization, random drawing of some parameter, or overall run with given experimental conditions).
        \item The method for calculating the error bars should be explained (closed form formula, call to a library function, bootstrap, etc.)
        \item The assumptions made should be given (e.g., Normally distributed errors).
        \item It should be clear whether the error bar is the standard deviation or the standard error of the mean.
        \item It is OK to report 1-sigma error bars, but one should state it. The authors should preferably report a 2-sigma error bar than state that they have a 96\% CI, if the hypothesis of Normality of errors is not verified.
        \item For asymmetric distributions, the authors should be careful not to show in tables or figures symmetric error bars that would yield results that are out of range (e.g. negative error rates).
        \item If error bars are reported in tables or plots, The authors should explain in the text how they were calculated and reference the corresponding figures or tables in the text.
    \end{itemize}

\item {\bf Experiments Compute Resources}
    \item[] Question: For each experiment, does the paper provide sufficient information on the computer resources (type of compute workers, memory, time of execution) needed to reproduce the experiments?
    \item[] Answer: \answerYes{} 
    \item[] Justification: We submitted the code demo and it has the instructions for reproducing the experiments. We report the hardware sources and time cost details in Appendix A.4.4.
    \item[] Guidelines:
    \begin{itemize}
        \item The answer NA means that the paper does not include experiments.
        \item The paper should indicate the type of compute workers CPU or GPU, internal cluster, or cloud provider, including relevant memory and storage.
        \item The paper should provide the amount of compute required for each of the individual experimental runs as well as estimate the total compute. 
        \item The paper should disclose whether the full research project required more compute than the experiments reported in the paper (e.g., preliminary or failed experiments that didn't make it into the paper). 
    \end{itemize}
    
\item {\bf Code Of Ethics}
    \item[] Question: Does the research conducted in the paper conform, in every respect, with the NeurIPS Code of Ethics \url{https://neurips.cc/public/EthicsGuidelines}?
    \item[] Answer: \answerYes{} 
    \item[] Justification: This research conducted in the paper conforms with the NeurIPS Code of Ethics.
    \item[] Guidelines:
    \begin{itemize}
        \item The answer NA means that the authors have not reviewed the NeurIPS Code of Ethics.
        \item If the authors answer No, they should explain the special circumstances that require a deviation from the Code of Ethics.
        \item The authors should make sure to preserve anonymity (e.g., if there is a special consideration due to laws or regulations in their jurisdiction).
    \end{itemize}

\item {\bf Broader Impacts}
    \item[] Question: Does the paper discuss both potential positive societal impacts and negative societal impacts of the work performed?
    \item[] Answer: \answerNA{} 
    \item[] Justification: There is no obvious societal impact of the work performed since our research focuses more on general studies on the FL frameworks.
    \item[] Guidelines:
    \begin{itemize}
        \item The answer NA means that there is no societal impact of the work performed.
        \item If the authors answer NA or No, they should explain why their work has no societal impact or why the paper does not address societal impact.
        \item Examples of negative societal impacts include potential malicious or unintended uses (e.g., disinformation, generating fake profiles, surveillance), fairness considerations (e.g., deployment of technologies that could make decisions that unfairly impact specific groups), privacy considerations, and security considerations.
        \item The conference expects that many papers will be foundational research and not tied to particular applications, let alone deployments. However, if there is a direct path to any negative applications, the authors should point it out. For example, it is legitimate to point out that an improvement in the quality of generative models could be used to generate deepfakes for disinformation. On the other hand, it is not needed to point out that a generic algorithm for optimizing neural networks could enable people to train models that generate Deepfakes faster.
        \item The authors should consider possible harms that could arise when the technology is being used as intended and functioning correctly, harms that could arise when the technology is being used as intended but gives incorrect results, and harms following from (intentional or unintentional) misuse of the technology.
        \item If there are negative societal impacts, the authors could also discuss possible mitigation strategies (e.g., gated release of models, providing defenses in addition to attacks, mechanisms for monitoring misuse, mechanisms to monitor how a system learns from feedback over time, improving the efficiency and accessibility of ML).
    \end{itemize}
    
\item {\bf Safeguards}
    \item[] Question: Does the paper describe safeguards that have been put in place for responsible release of data or models that have a high risk for misuse (e.g., pretrained language models, image generators, or scraped datasets)?
    \item[] Answer: \answerNA{} 
    \item[] Justification: The paper poses no such risks.
    \item[] Guidelines:
    \begin{itemize}
        \item The answer NA means that the paper poses no such risks.
        \item Released models that have a high risk for misuse or dual-use should be released with necessary safeguards to allow for controlled use of the model, for example by requiring that users adhere to usage guidelines or restrictions to access the model or implementing safety filters. 
        \item Datasets that have been scraped from the Internet could pose safety risks. The authors should describe how they avoided releasing unsafe images.
        \item We recognize that providing effective safeguards is challenging, and many papers do not require this, but we encourage authors to take this into account and make a best faith effort.
    \end{itemize}

\item {\bf Licenses for existing assets}
    \item[] Question: Are the creators or original owners of assets (e.g., code, data, models), used in the paper, properly credited and are the license and terms of use explicitly mentioned and properly respected?
    \item[] Answer: \answerYes{} 
    \item[] Justification: We cite all papers on the models and datasets used in our paper.
    \item[] Guidelines:
    \begin{itemize}
        \item The answer NA means that the paper does not use existing assets.
        \item The authors should cite the original paper that produced the code package or dataset.
        \item The authors should state which version of the asset is used and, if possible, include a URL.
        \item The name of the license (e.g., CC-BY 4.0) should be included for each asset.
        \item For scraped data from a particular source (e.g., website), the copyright and terms of service of that source should be provided.
        \item If assets are released, the license, copyright information, and terms of use in the package should be provided. For popular datasets, \url{paperswithcode.com/datasets} has curated licenses for some datasets. Their licensing guide can help determine the license of a dataset.
        \item For existing datasets that are re-packaged, both the original license and the license of the derived asset (if it has changed) should be provided.
        \item If this information is not available online, the authors are encouraged to reach out to the asset's creators.
    \end{itemize}

\item {\bf New Assets}
    \item[] Question: Are new assets introduced in the paper well documented and is the documentation provided alongside the assets?
    \item[] Answer: \answerNA{} 
    \item[] Justification: The paper does not release new assets.
    \item[] Guidelines:
    \begin{itemize}
        \item The answer NA means that the paper does not release new assets.
        \item Researchers should communicate the details of the dataset/code/model as part of their submissions via structured templates. This includes details about training, license, limitations, etc. 
        \item The paper should discuss whether and how consent was obtained from people whose asset is used.
        \item At submission time, remember to anonymize your assets (if applicable). You can either create an anonymized URL or include an anonymized zip file.
    \end{itemize}

\item {\bf Crowdsourcing and Research with Human Subjects}
    \item[] Question: For crowdsourcing experiments and research with human subjects, does the paper include the full text of instructions given to participants and screenshots, if applicable, as well as details about compensation (if any)? 
    \item[] Answer: \answerNA{} 
    \item[] Justification: This paper does not involve crowdsourcing nor research with human subjects.
    \item[] Guidelines:
    \begin{itemize}
        \item The answer NA means that the paper does not involve crowdsourcing nor research with human subjects.
        \item Including this information in the supplemental material is fine, but if the main contribution of the paper involves human subjects, then as much detail as possible should be included in the main paper. 
        \item According to the NeurIPS Code of Ethics, workers involved in data collection, curation, or other labor should be paid at least the minimum wage in the country of the data collector. 
    \end{itemize}

\item {\bf Institutional Review Board (IRB) Approvals or Equivalent for Research with Human Subjects}
    \item[] Question: Does the paper describe potential risks incurred by study participants, whether such risks were disclosed to the subjects, and whether Institutional Review Board (IRB) approvals (or an equivalent approval/review based on the requirements of your country or institution) were obtained?
    \item[] Answer: \answerNA{} 
    \item[] Justification: This paper does not involve crowdsourcing nor research with human subjects.
    \item[] Guidelines:
    \begin{itemize}
        \item The answer NA means that the paper does not involve crowdsourcing nor research with human subjects.
        \item Depending on the country in which research is conducted, IRB approval (or equivalent) may be required for any human subjects research. If you obtained IRB approval, you should clearly state this in the paper. 
        \item We recognize that the procedures for this may vary significantly between institutions and locations, and we expect authors to adhere to the NeurIPS Code of Ethics and the guidelines for their institution. 
        \item For initial submissions, do not include any information that would break anonymity (if applicable), such as the institution conducting the review.
    \end{itemize}

\end{enumerate}

\end{document}